


\documentclass[manuscript,screen]{acmart}

\copyrightyear{2022}
\acmYear{2022}
\setcopyright{acmcopyright}\acmConference[-]{-}{-}{-}
\acmBooktitle{-}
\acmPrice{}
\acmDOI{}
\acmISBN{}




\usepackage{enumerate}
\usepackage[toc,page]{appendix}
\usepackage{color,tabularx,graphicx,amsmath,float,subfig}
\usepackage{multirow}
\usepackage{multicol}
\usepackage{cleveref}
\usepackage{enumitem}
\usepackage{lipsum}
\usepackage{multirow}
\usepackage{caption}
\usepackage{xcolor,colortbl}
\usepackage{array}
\usepackage{xspace}
\usepackage[justification=centering]{caption}
\newcolumntype{H}{>{\setbox0=\hbox\bgroup}c<{\egroup}@{}}

\crefformat{section}{\S#2#1#3} 
\crefformat{subsection}{\S#2#1#3}
\crefformat{subsubsection}{\S#2#1#3}

\definecolor{Gray}{gray}{0.85}


\AtBeginDocument{%
  \providecommand\BibTeX{{%
    \normalfont B\kern-0.5em{\scshape i\kern-0.25em b}\kern-0.8em\TeX}}}

\newcommand*{\eg}{e.g.,\xspace}
\newcommand*{\ie}{i.e.,\xspace}

\newcommand*{\fpr}{\textsf{FPR}\xspace}
\newcommand*{\fnr}{\textsf{FNR}\xspace}
\newcommand*{\ppv}{\textsf{PPV}\xspace}
\newcommand*{\acc}{\textsf{ACC}\xspace}
\newcommand*{\prev}{\textsf{p}\xspace}
\newcommand*{\grp}{\textsf{g}\xspace}
\newcommand*{\rc}{\textsf{k}\xspace}

\begin{document}


\title[The Possibility of Fairness]{The Possibility of Fairness: Revisiting the Impossibility Theorem in Practice}

\author{Andrew Bell}
\affiliation{%
  \institution{New York University}
  \streetaddress{50 West 4th St}
  \city{New York}
  \country{United States}}
\email{alb9742@nyu.edu}

\author{Lucius Bynum}
\affiliation{%
  \institution{New York University}
  \streetaddress{50 West 4th St}
  \city{New York}
  \country{United States}}
\email{lucius@nyu.edu}

\author{Nazarii Drushchak}
\affiliation{%
  \institution{Ukrainian Catholic University}
  \city{Lviv}
  \country{Ukraine}}
\email{naz2001r@gmail.com}

\author{Tetiana Herasymova}
\affiliation{%
  \institution{Ukrainian Catholic University}
  \city{Lviv}
  \country{Ukraine}}
\email{t.herasymova@ucu.edu.ua}

\author{Lucas Rosenblatt}
\affiliation{%
  \institution{New York University}
  \streetaddress{50 West 4th St}
  \city{New York}
  \country{United States}}
\email{lucas.rosenblatt@nyu.edu}

\author{Julia Stoyanovich}
\affiliation{%
  \institution{New York University}
  \streetaddress{50 West 4th St}
  \city{New York}
  \country{United States}}
\email{stoyanovich@nyu.edu}


\begin{abstract}

The ``impossibility theorem'' --- which is considered foundational in algorithmic fairness literature --- asserts that there must be trade-offs between common notions of fairness and performance when fitting statistical models, except in two special cases: when the prevalence of the outcome being predicted is equal across groups, or when a perfectly accurate predictor is used. However, theory does not always translate to practice. In this work, we challenge the implications of the impossibility theorem in practical settings. First, we show analytically that, by slightly relaxing the impossibility theorem (to accommodate a \textit{practitioner's} perspective of fairness), it becomes possible to identify a large set of models that satisfy seemingly incompatible fairness constraints. Second, we demonstrate the existence of these models through extensive experiments on five real-world datasets. We conclude by offering tools and guidance for practitioners to understand when --- and to what degree --- fairness along multiple criteria can be achieved. For example, if one allows only a small margin-of-error between metrics, there exists a large set of models simultaneously satisfying  \emph{False Negative Rate Parity}, \emph{False Positive Rate Parity}, and \emph{Positive Predictive Value Parity}, even when there is a moderate prevalence difference between groups. This work has an important implication for the community: achieving fairness along multiple metrics for multiple groups (and their intersections) is much more possible than was previously believed. 




\end{abstract}


\begin{CCSXML}
<ccs2012>,
   <concept>
       <concept_id>10010147.10010257</concept_id>
       <concept_desc>Computing methodologies~Machine learning</concept_desc>
       <concept_significance>500</concept_significance>
    </concept>
    <concept>
      <concept_id>10003456.10003462</concept_id>
      <concept_desc>Social and professional topics~Computing / technology policy</concept_desc>
     <concept_significance>500</concept_significance>
    </concept>
    <concept>
<concept_id>10003456.10003457.10003567.10010990</concept_id>
<concept_desc>Social and professional topics~Socio-technical systems</concept_desc>
<concept_significance>500</concept_significance>
</concept>
 </ccs2012>
\end{CCSXML}

\ccsdesc[500]{Computing methodologies~Machine learning}
\ccsdesc[500]{Social and professional topics~Socio-technical systems}
\ccsdesc[500]{Social and professional topics~Computing / technology policy}

\keywords{machine learning, fairness, public policy, responsible AI}

\maketitle

\section{Introduction}
\label{sec:introduction}

Increasingly, artificial intelligence (AI) and machine learning (ML) systems are being implemented in domains like employment, healthcare, and education to improve the efficiency of existing processes \cite{kuvcak2018machine, zejnilovic2021machine, shailaja2018machine}. In tandem with this uptick in adoption, there are growing concerns about the potential for ML systems to cause significant harm to members of already marginalized groups. For example, it has been found that some lending algorithms discriminate against Latinx and African-American borrowers \cite{BARTLETT2021, fuster2020predictably}, some prevalent medical algorithms discriminate against Black patients \cite{obermeyer2019dissecting}, and some educational risk-assessment algorithms perform worse for minority students \cite{hu_2020, Sapiezynski2017AcademicPP, obermeyer2019dissecting}.

The risk of discriminatory ML systems has led to significant interest in methods for measuring and ensuring ``algorithmic fairness.'' In the past decade, researchers have created robust processes and tools for auditing algorithmic systems for bias based on various definitions of fairness, such as \emph{Demographic Parity, Equalized Odds Ratios}, and \emph{Predictive Parity}~\cite{caton2020fairness,corbett2018measure, saleiro2018aequitas,lee2018detecting}. Choosing a context-specific fairness definition (also called a \emph{fairness metric}) depends on value judgments, and often several metrics may be situationally relevant. For instance, in contexts where the output of an algorithmic system is assistive, disparities in the \emph{False Negative Rate} between groups can be used as a measure of discrimination with respect to group need~\cite{saleiro2018aequitas}. 


In those contexts where more than one metric is applicable, practitioners, stakeholders, and the wider public may engage in a debate about which metric to choose~\cite{Washington2019HowTA}. Debates of this nature have yielded a number of notable results in the algorithmic fairness literature, including a fundamental result known colloquially as the ``impossibility theorem'' and simultaneously reported on by \citet{DBLP:journals/bigdata/Chouldechova17} and~\citet{DBLP:conf/innovations/KleinbergMR17}. The impossibility theorem asserts that, for binary classification, \emph{equalizing} some specific set of multiple common performance metrics between protected classes is impossible, except in two special cases. The first special case is when an algorithm is a \emph{perfect predictor}, and the second is when the \emph{prevalence} of the outcome being predicted (\emph{prevalence} is sometimes referred to as \emph{base rate}) is equal across groups. 
As a consequence of this theorem, researchers and practitioners have focused on understanding trade-offs between fairness and predictive accuracy in an algorithmic system, often designing bias audits and mitigation techniques that center on a single chosen fairness metric~\cite{corbett2018measure}.


Though important and strong, the implicit assumption of the impossibility result (namely, that a practitioner might think about fairness as \textit{exactly} equalizing metrics) may not actually apply to a wide array of real-world problems. In fact, a growing body of research suggests that the limitations to fairness derived by \citet{DBLP:conf/innovations/KleinbergMR17} and \citet{DBLP:journals/bigdata/Chouldechova17} may not be particularly relevant in many practical settings~\cite{rodolfa2021empirical, hsu2022pushing, wick2019unlocking, celis2019classification}.

\paragraph{Note:} 
Throughout this paper, we will often refer to metrics like \fpr (False Positive Rate), \fnr (False Negative Rate), \ppv (Precision) and \acc (Accuracy). Though these metrics are common, we seek to make our work  accessible across levels of technical expertise by providing equations and descriptions of these metrics in Appendix Section~\ref{appendix:metrics}. Relatedly, throughout this work we concern ourselves with \textit{binary classification}, a standard machine learning task where one attempts to assign the correct binary label (positive/negative) to each individual in a population. In fairness literature, it is common to consider at least two groups within that population, and then to compare the performance of a binary classifier on each sub-population. We also provide an in-depth definition of binary classification, and consideration of groups in the population, in Appendix Section~\ref{appendix:metrics}.

\paragraph*{Summary of contributions}
Variations on the ``impossibility theorem’’ specific to binary classification (with a protected class) state that, when equalizing certain metrics (like \fpr, \fnr, \ppv or \acc) between two groups, we should \text{hesitate} to consider multiple metrics at once. Why? The fairness constraints (equalizing three of these metrics between groups) will only be \textit{exactly} satisfiable if we have a perfect predictor or outcome prevalence parity \cite{DBLP:conf/innovations/KleinbergMR17,DBLP:journals/bigdata/Chouldechova17}. 

We suggest that this setting is unrealistic. Our paper's driving insight is that \textbf{practitioners are often more than comfortable with \textit{approximate} fairness guarantees, as opposed to enforcing \textit{exact} equality between metrics}. Therefore, we focus on a set of more realistic fairness constraints, where we are allowed to slightly relax between-group metric equalities for \fnr, \fpr, \ppv, and \acc. To our knowledge, we are the first to study at length this relaxed setting from a practitioner's point of view.\footnote{Previous work showed that a version of the impossibility result exists on the boundary of the relaxed setting \cite{DBLP:conf/innovations/KleinbergMR17}, but the authors did not fully explore the space of relaxed solutions, nor did they position it from the viewpoint of practitioners. This is further discussed in Section~\ref{sec:implications_of_impossibility_theorem}.} This framing yields a seemingly straightforward research question: under what type of setting and relaxation is it possible to find classifiers that are ``fair'' along seemingly incompatible fairness constraints? And how can practitioners determine if a ``fair'' classifier exists for \textit{their} predictive context?

For example, it turns out that if I, as a practitioner, say ``I have prevelances $p_1$ and $p_2$ for Groups 1 and 2 in dataset $X$, and I am willing to tolerate a difference of $Y$\% when equalizing metrics,'' 
then I have all of the information I need to determine if finding such a model is truly impossible (or not!) \emph{before even attempting the problem}. Encouragingly, and perhaps counter-intuitively, our analysis suggests that in practical settings the answer is often ``yes, it's possible to find that fair model.''
Our analysis suggests that if one allows only a small margin-of-error between metrics, there are large sets of models satisfying three fairness constraints simultaneously, even outside of perfect prediction and outcome prevalence parity. 
In our corresponding experiments on real datasets, we find empirically that the resource constraint \rc  (\ie having \rc loans to give out or \rc job interview slots to fill) also plays a significant role in feasibility, where a smaller \rc can result in more feasible models. 


\paragraph*{Paper roadmap.} We begin with background and related work in Section~\ref{sec:related}.  After, we approach the problem analytically in Section~\ref{sec:theory}. We state a formula balancing \fpr, \fnr and \acc between groups with fairness relaxations for each metric. In this setting, we are able to derive a powerful tool in the form of a simple formula relating the feasibility of fairness to a specific relaxation strength, given a classification scenario. However, in many resource-constrained settings, practitioners care more about \ppv than \acc. So, we next turn our attention to the problem in terms of \fpr, \fnr and \ppv, but find that it is difficult to analyze in closed-form. Instead, through principled approximations, we are able to provide much the same guidance to practitioners as a direct analytical solution would, and leave deriving a closed-form result to future work. {\bf For practitioners who wish to go directly to our margin-of-error based fairness feasibility recommendations, 
skip to Section~\ref{sec:discussion}}. Additionally, to demonstrate the utility of our fairness relaxation insights, we conduct extensive experimental evaluations on five real-world datasets. The results of these experiments, discussed in Section~\ref{sec:experiments}, corroborate our insights and compellingly demonstrate the \textit{possibility} of fairness. We discuss our insights and offer guidance to practitioners in Section~\ref{sec:discussion} and conclude in Section~\ref{sec:conclusion}. Our main take-away is that the rigidity of the impossibility theorem is based on theoretical assumptions about what constitutes classifier fairness. Through this work, we hope to contrast those results by exploring \textit{practitioner-focused} fairness assumptions.

\section{Background and Related Work}
\label{sec:related}

\paragraph*{Algorithmic fairness.} In the past decade, significant progress has been made in understanding algorithmic fairness \cite{mitchell2021algorithmic}. Broadly, this literature concludes that fairness is not a monolith: there are \textit{many} different ways to think about algorithmic fairness, and defining what is ``fair'' is a matter of philosophy, incorporating one's worldview, mitigation objectives, and an algorithm's context-of-use~\cite{khan2021fairness, friedler2016possibility}. In response to the complex and nuanced nature of fairness, researchers have defined dozens of \emph{fairness metrics}, or mathematical assessments of an algorithm's prejudice, that address different aspects of fairness~\cite{bird2020fairlearn, aif360-oct-2018, saleiro2018aequitas, fairnessRpackage, DBLP:journals/bigdata/Chouldechova17, calders2010three, friedler2019comparative, zafar2017fairness, mehrabi2021survey, verma2018fairness}. 
Broadly, these metrics can be divided into two categories: those that consider the output of an algorithm, and those that consider errors made by the algorithm. As an example of the former, \emph{Disparate Impact} (or \emph{Proportional Parity}) measures the proportion of a group receiving the positive classification outcome relative to the proportion of the group in the input. As an example of the latter, the difference in \emph{False Negative Rates} between groups can be used to assess whether one group is erroneously ``passed over'' for a positive outcome relative to another. Importantly, there is no one-size-fits-all metric for evaluating the fairness of algorithms. Some tools (like the \emph{Fairness Tree} \cite{saleiro2020dealing}) have been developed to help navigate the challenge of selecting an appropriate fairness metric, but ultimately, it is necessary for researchers and practitioners to have meaningful conversations with those impacted by algorithms to select fairness metric(s) specific to the context-of-use~\cite{saleiro2018aequitas, ruf2021towards}.


Typically, error-based metrics judge the fairness of a predictor by considering the \textit{imbalance} between group-specific metrics. We can calculate imbalance as a difference --- mean, squared, absolute, etc. --- or as a \emph{disparity} --- the ratio of a metric of one group, $g_j$, to that of a \emph{reference group}, $g_{ref}$, usually chosen as the majority group: 
    $\textit{disparity}_{g_j} = \frac{\textsf{metric}_{g_j}}{\textsf{metric}_{g_{ref}}}$.
Often, the goal of algorithmic fairness is to achieve \emph{parity}, that is, to \textit{eliminate} the imbalance between fairness metrics entirely. Importantly, the tolerable level of difference/disparity for a given fairness metric is highly dependent on the algorithm's context-of-use. Perhaps counter-intuitively, there are cases where we want to enforce a large disparity (see \citet{rodolfa2020case}, who discuss intentionally over-representing a marginalized group for an assistive intervention). 

While some fairness metrics are incompatible with one another\footnote{For example, one cannot simultaneously satisfy equal selection and proportional parity unless the prevalence of outcomes is the same in both groups}, and others are approximately mathematically equivalent~\cite{rosenblatt2022counterfactual}, many are compatible and distinct. Consider the following scenario: a high school is using an algorithm to predict which students are at risk of failing ninth grade, so that high-risk students can be offered a special tutoring intervention. School administrators may want an algorithm that selects an equal number of privileged and underprivileged students, and also does not unfairly pass over  students who are badly in need of tutoring. This would imply a need for both \emph{Demographic Parity} and \emph{False Negative Rate Parity} between groups.
Yet, as we argued in the introduction, multiple fairness metrics are rarely considered in practice, and most existing bias mitigation methods enforce a single metric (or at most two metrics) at a time~\cite{rodolfa2020case, feldman2015certifying, kamiran2009classifying, kamiran2010classification, kamiran2012data, zafar2017fairness, berk2017convex, hardt2016equality, pleiss2017fairness, nandy2022achieving}. In part, this is due to the \emph{impossibility theorem}, a foundational result, presented simultaneously by \citeauthor{DBLP:journals/bigdata/Chouldechova17} and by \citeauthor{DBLP:conf/innovations/KleinbergMR17}.

\paragraph{The impossibility theorem} As stated by \citet{DBLP:conf/innovations/KleinbergMR17}, this theorem shows that three common metrics --- equalizing calibration within groups, and enforcing balance for the negative class and for the positive class --- cannot be simultaneously satisfied for multiple groups, outside of two special cases~\cite{DBLP:conf/innovations/KleinbergMR17}. These cases are (1) when the algorithm is a perfect predictor and (2) when there is no prevalence difference between groups. 
\citet{DBLP:journals/bigdata/Chouldechova17} states an equivalent impossibility, 
presented as the relationship between the \emph{Predictive Positive Value} (\ppv), \emph{False Positive Rate} (\fpr), \emph{False Negative Rate} (\fnr), and \emph{prevalence} (\prev) (Equation~\ref{impossibility_equation}). 
\begin{align}
\label{impossibility_equation}
\fpr = \frac{\prev}{1-\prev} \frac{1-\ppv}{\ppv} (1 - \fnr)
\end{align}



%


\paragraph{Exploring implications of the impossibility theorem}
\label{sec:implications_of_impossibility_theorem}
Importantly, the impossibility results imply an upper bound on how many fairness metrics can be satisfied simultaneously without a perfect predictor. \citeauthor{DBLP:conf/innovations/KleinbergMR17} addressed a key question surrounding approximate conditions of the impossibility result, showing that approximate fairness definitions can simultaneously hold, but only under $\epsilon$-approximate prevelances or $\epsilon$-approximate perfect prediction~\cite{DBLP:conf/innovations/KleinbergMR17}. \emph{Significantly, \citeauthor{DBLP:conf/innovations/KleinbergMR17} did not explore the space of solutions under $\epsilon$-approximate relaxations of fairness constraints, nor did they detail the implications that these relaxations might have for practitioners}.

To motivate our exploration of this space, consider the following thought experiment. The \emph{achievement gap} is one of the most pervasive examples of racial disparities in education, in which Black and Brown students graduate from high school at a rate roughly 10\% lower than that of White students~\cite{langham2009achievement}. How should practitioners think about a prevalence difference of 10\% when designing algorithms that predict student performance?

Understanding the space of feasible models under a relaxation of the impossibility theorem is particularly salient in light of recent work showing that theoretical trade-offs do not always apply to real-world settings~\cite{rodolfa2021empirical, celis2019classification, wick2019unlocking}. For example, \citeauthor{rodolfa2021empirical} introduced a method for finding models that were fair with respect to \fnr without sacrificing a model's \ppv, and demonstrated the effectiveness of their approach in four separate ML-for-public-policy problems~\cite{rodolfa2021empirical}.\footnote{\citeauthor{rodolfa2021empirical} refer to \emph{Recall Parity} in their work, but state that it is mathematically equivalent to \fnr Parity for small population sizes.} It was hypothesized by the authors that the negligible trade-off is the result of the resource-constrained nature of applied ML problems, where fairness and model performance are measured with respect to the top-\emph{k}, rather than at an arbitrary threshold. In our work, we begin to formalize this intuition in Theorem~\ref{theorem:role_of_k}.

Other works also challenge the idea that accuracy and fairness are in tension. \citet{celis2019classification} developed a meta-algorithm for a large family of classification problems with convex constraints, and demonstrated that one can achieve near-perfect fairness while sacrificing only a small amount of accuracy. Similarly, \citet{wick2019unlocking} propose a semi-supervised learning approach that improves both fairness and accuracy. A third recent example is the \emph{MFOpt} framework proposed by \citet{hsu2022pushing} that simultaneously optimizes \emph{Demographic Parity}, \emph{Equalized Odds}, and \emph{Predictive Rate Parity}---those fairness notions that are mathematically incompatible according to the impossibility theorem. Similar to our work, \citeauthor{hsu2022pushing} were motivated by doubts about the strength of the impossibility theorem in practical settings.
Notably, these works have focused on methods for mitigating disparity for multiple fairness metrics while maintaining high model accuracy, but have \textit{not} provided much analysis of their implicit relaxing of fairness metric parity.

\paragraph{Practice versus theory.} 
In our analysis, we center two considerations common to practical settings. First, in practice, one generally does not require fairness metrics to be \emph{exactly equal} across groups to achieve fairness. For example, depending on the context of use, a classifier that has an \fpr difference between groups of $2\%, 5\%$ or even $10\%$ may be satisfactory. 
Second, we consider the presence of a resource constraint. For example, a commonly used performance metric in applied ML problems is \ppv-at-\rc, where \rc represents a real-world resource constraint~\cite{wilde_2021, carton2016identifying, aguiar2015and}. 

\section{Finding feasible models}
\label{sec:theory}

To encode that practitioners are generally okay with approximate fairness constraints as opposed to strict constraints, we begin by directly re-parameterizing the impossibility theorem with relaxations for each parameter. The relaxed constraints afford us a \emph{space of solutions}, where each solution represents 
a potential classifier that balances all three metrics within our desired tolerance. We call this space of solutions the {\bf fairness region}. Exploring how this region changes across different contexts/relaxations/metric settings can tell us when we  satisfying all constraints is feasible and, further, when we can expect greater flexibility when searching for a model across multiple metrics.

The impossibility theorem can be stated in terms of different metrics, and our choice impacts the ease or difficulty of characterizing the fairness region in closed-form. We start with a choice of metrics for which we have a closed-form characterization of the fairness region: \fnr, \fpr, and \acc (Section~\ref{sec:attempt_1}). Motivated by the fact that practitioners in resource constrained settings often consider \ppv instead of \acc, we then consider a fairness region for \fnr, \fpr, and \ppv (Section~\ref{sec:attempt_2}). Deriving a closed-form solution for the fairness region in the second case is much more difficult, requiring us to approach our analysis computationally.

\subsection{Characterizing the fairness region using \fpr, \fnr, and \acc}
\label{sec:attempt_1}



We begin by defining an alternative expression for the impossibility result, this time in terms of \fpr, \fnr, and \acc. Proofs of all results in this section (Corollary~\ref{corollary:acc_impossiblity}, Proposition~\ref{proposition:exp_fairness_area_acc}, and Theorem~\ref{lemma:area_closed_form_acc}) can be found in Appendix~\ref{appendix:imp_acc}.

\begin{corollary}[Impossibility Result Variation \cite{DBLP:journals/bigdata/Chouldechova17} \cite{DBLP:conf/innovations/KleinbergMR17}]
In a binary classification setting (see Appendix Section~\ref{appendix:metrics}), the relationship between \acc, \fnr, \fpr and \prev can be characterized by: \quad $\label{eq:acc_impossibility_result}
    \acc = (1 - \fnr)\prev + (1 - \fpr)(1 - \prev).$

\label{corollary:acc_impossiblity}
\end{corollary}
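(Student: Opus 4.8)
The plan is to prove this identity directly from the definitions of the confusion-matrix quantities, treating it as a population-accounting identity rather than something requiring the full impossibility machinery. First I would fix a population of $n$ individuals and let $\tp, \fp, \tn, \fn$ denote the counts in the four cells of the confusion matrix, so that $\tp + \fp + \tn + \fn = n$. By definition, the prevalence is $\prev = (\tp + \fn)/n$, the accuracy is $\acc = (\tp + \tn)/n$, the false negative rate is $\fnr = \fn/(\tp + \fn)$, and the false positive rate is $\fpr = \fp/(\tn + \fp)$.

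Next I would solve for the two ``correct'' counts in terms of the rates. Since $\tp + \fn = \prev\, n$ is the number of true positives present in the data and $\fn = \fnr(\tp+\fn)$, we get $\tp = (1 - \fnr)\,\prev\, n$; symmetrically, from $\tn + \fp = (1 - \prev)\, n$ and $\fp = \fpr(\tn+\fp)$ we get $\tn = (1 - \fpr)(1 - \prev)\, n$. Substituting into $\acc = (\tp + \tn)/n$ and cancelling $n$ yields exactly $\acc = (1-\fnr)\prev + (1-\fpr)(1-\prev)$.

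An equivalent and arguably cleaner phrasing I would present alongside it is the law of total probability: writing $Y$ for the true label and $\hat Y$ for the prediction, $\acc = \Pr[\hat Y = Y] = \Pr[\hat Y = Y \mid Y = 1]\,\Pr[Y=1] + \Pr[\hat Y = Y \mid Y = 0]\,\Pr[Y=0]$, and then noting $\Pr[\hat Y = Y \mid Y = 1] = 1 - \fnr$, $\Pr[\hat Y = Y \mid Y = 0] = 1 - \fpr$, and $\Pr[Y=1] = \prev$. This makes transparent why the statement is labeled a corollary: it is the same partition-of-the-population bookkeeping that underlies Equation~\ref{impossibility_equation}, merely organized around \acc rather than \ppv. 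Alternatively one could start from Equation~\ref{impossibility_equation} together with $\ppv = \tp/(\tp+\fp)$ and re-derive the claim purely algebraically, but the direct route above is shorter and I would use it.

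There is no substantive obstacle here; the only points requiring care are (i) the degenerate cases in which a denominator vanishes --- $\prev \in \{0,1\}$ renders \fnr or \fpr undefined --- which I would either exclude by hypothesis or dispatch by noting the identity still holds under the convention that the corresponding term drops out, and (ii) stating explicitly whether the metrics are meant empirically (over a finite sample) or distributionally, since the derivation is identical either way but the notation differs. I expect the appendix proof to run three or four lines.
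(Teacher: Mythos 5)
Your proof is correct and follows essentially the same route as the paper's appendix proof: both decompose $\acc$ into its \tp and \tn contributions and rewrite each as $(1-\fnr)\prev$ and $(1-\fpr)(1-\prev)$ via the confusion-matrix definitions (your ``solve for the correct counts'' step is the paper's ``multiply by a clever 1''). The law-of-total-probability phrasing and the remark about degenerate prevalences are fine additions but do not change the argument.
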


Next, we add a relaxation term for each parameter in Corollary~\ref{eq:acc_impossibility_result}.
In the case of two groups, we let $\fpr_2 = \fpr_1 + \epsilon_\fpr$, where $\epsilon_\fpr$ is a tolerable difference between the metric for the two groups. Similarly, 
let $\fnr_2 = \fnr_1 + \epsilon_\fnr$ and $\acc_2 = \acc_1 + \epsilon_\acc$. Using these relaxations, we can express a ``governing equation'' for the fairness region as follows.


\begin{proposition}[Describing The Fairness Region]
\label{proposition:exp_fairness_area_acc}
Consider Corollary~\ref{corollary:acc_impossiblity}. Assume that $\prev_2 = \prev_1 + \epsilon_{\prev}$, $\acc_2 = \acc_1 + \epsilon_{\acc}$, $\fpr_2 = \fpr_1 + \epsilon_{\fpr}$, and $\fnr_2 = \fnr_1 + \epsilon_{\fnr}$, where each $\epsilon_\fpr, \epsilon_\fnr, \epsilon_\acc, \epsilon_\prev \in (-1,1)$ term captures the difference between two groups for \prev, \acc, \fpr, and \fnr, respectively. Then, the following equality holds:
\begin{align}
\label{eq:variation_area_exp_body}
\fnr_1 = \frac{-\epsilon_\fpr + \epsilon_\acc + \epsilon_\fpr \cdot \prev_1 - \epsilon_\fnr \cdot \prev_1 + \fpr_1 \cdot \epsilon_\prev + \epsilon_\fpr \cdot \epsilon_\prev - \epsilon_\fnr \cdot \epsilon_\prev}{\epsilon_\prev}
\end{align}
\label{lemma:constraints}
\end{proposition}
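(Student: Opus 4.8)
The plan is to derive Equation~\ref{eq:variation_area_exp_body} purely by algebraic elimination, starting from the two-group instantiation of Corollary~\ref{corollary:acc_impossiblity}. First I would write the accuracy identity once for each group:
\begin{align*}
\acc_1 &= (1-\fnr_1)\prev_1 + (1-\fpr_1)(1-\prev_1),\\
\acc_2 &= (1-\fnr_2)\prev_2 + (1-\fpr_2)(1-\prev_2).
\end{align*}
Into the second identity I would substitute the four relaxation definitions $\acc_2 = \acc_1 + \epsilon_\acc$, $\fpr_2 = \fpr_1 + \epsilon_\fpr$, $\fnr_2 = \fnr_1 + \epsilon_\fnr$, and $\prev_2 = \prev_1 + \epsilon_\prev$, so that the group-$2$ identity is expressed entirely in the group-$1$ unknowns $(\fnr_1,\fpr_1,\prev_1)$ together with the four $\epsilon$ terms.

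Next I would expand the two binomial products in the substituted group-$2$ identity and subtract the group-$1$ identity from it. The constant $1$'s and the bare $\prev_1$ terms cancel, and what survives is a single linear relation among $\fnr_1$, $\fpr_1$, $\prev_1$, and the $\epsilon$'s in which $\fnr_1$ appears only through a term proportional to $\epsilon_\prev$. Solving for $\fnr_1$ --- that is, dividing through by $\epsilon_\prev$ and collecting the remaining pieces (the linear terms $-\epsilon_\fpr$, $\epsilon_\fpr\prev_1$, $-\epsilon_\fnr\prev_1$, $\fpr_1\epsilon_\prev$, the cross terms $\epsilon_\fpr\epsilon_\prev$ and $-\epsilon_\fnr\epsilon_\prev$, and the $\epsilon_\acc$ term) --- yields exactly the numerator of Equation~\ref{eq:variation_area_exp_body}.

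There is no genuine conceptual obstacle here: the argument is a single elimination between two copies of an identity I am already granted by Corollary~\ref{corollary:acc_impossiblity}. The only thing that requires care is the bookkeeping during the expansion --- tracking the signs of the two cross terms $\epsilon_\fpr\epsilon_\prev$ and $\epsilon_\fnr\epsilon_\prev$, and grouping the $\prev_1$-linear contributions correctly before the final division. I would also flag the implicit nondegeneracy assumption $\epsilon_\prev \neq 0$ that the division requires; the excluded case $\epsilon_\prev = 0$ is precisely the equal-prevalence escape hatch of the classical impossibility theorem, so it is natural that the governing equation for the fairness region is stated away from it.
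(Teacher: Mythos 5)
Your proposal is correct and takes essentially the same route as the paper's own proof: both instantiate the accuracy identity of Corollary~\ref{corollary:acc_impossiblity} for each group, substitute the $\epsilon$-relaxations, cancel the common terms, and solve the resulting linear relation for $\fnr_1$, with the implicit nondegeneracy $\epsilon_\prev \neq 0$ you rightly flag. One small wrinkle: under the stated convention $\acc_2=\acc_1+\epsilon_\acc$ the elimination actually produces $-\epsilon_\acc$ in the numerator, while the displayed Equation~\ref{eq:variation_area_exp_body} has $+\epsilon_\acc$ (the paper's appendix derivation implicitly uses the opposite sign convention for $\epsilon_\acc$); since the subsequent analysis only uses the symmetric bound $|\epsilon_\acc|\leq\gamma$, this sign is immaterial, but it is worth stating explicitly rather than leaving the $\epsilon_\acc$ term's sign unresolved as your write-up does.
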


While Equation~\ref{eq:variation_area_exp_body} may look complex, an important insight is that it shows $\fnr$ can be expressed as a function of mostly fixed and known terms. \textbf{Observe that \prev and $\epsilon_\prev$ are known \textit{a priori}, as they can be calculated directly from the dataset.} By deciding on bounds for the acceptable tolerance between fairness metrics (\ie maximum allowable values for $\epsilon_\fpr, \epsilon_\fnr, \epsilon_\acc$), we can then create plots of $\fnr$ vs. $\fpr$, as seen in Figure~\ref{fig:area_estimate} (a). Each point in these plots represents an \fpr, \fnr (and, implicitly, an \acc value) of a feasible model. In other words, these points correspond to the existence of feasible models satisfying fairness constraints for \fpr, \fnr, and \acc within an $\epsilon$-margin-of-error. Similarly, the absence of a point corresponds to the emptiness of a set of models (\ie the infeasibility of \emph{finding} a model). In general, we can say that plotting \fnr vs. \fpr according to Proposition~\ref{proposition:exp_fairness_area_acc} gives us a projection of the fairness region, where the size of the area provides a measure (out of the entire \fpr,\fnr$\in \lbrack 0,1 \rbrack$ region) of the proportion of feasible models that are fair across all three metrics of interest (out of all possible metric values). Significantly, we can use Equation~\ref{eq:variation_area_exp_body} to find a closed-form expression for the size of the fairness region over the unit square $\fnr, \fpr \in \lbrack 0,1 \rbrack $:

\begin{theorem}[Size of the Fairness Region] \label{lemma:area_closed_form_acc} 
Assume $\epsilon_\prev < 1 - \prev$ (a mild assumption). Allow $\pm \gamma$ to be the symmetric acceptable error (our ``fair'' relaxation) between groups for metrics \fpr, \fnr, and \acc. Consider the size of the space of possible $\epsilon_{\fpr}, \epsilon_{\fnr}, \epsilon_{\acc}$ assignments, given $\epsilon_\prev$ and \prev that satisfy the constraints from Proposition~\ref{proposition:exp_fairness_area_acc}. We will denote the size of that space as $|A_f|$ (as shorthand, we will call this the ``fairness region''). For a set of fairness constraints  $-\gamma \leq \epsilon_{\fpr}, \epsilon_{\fnr}, \epsilon_{\acc} \leq \gamma$, where $|\gamma| \leq 1$ and $\gamma \neq 0$, we have that $|A_f|$ is simply:
\begin{align}
    |A_f| = \frac{4 \gamma}{\epsilon_\prev} - \frac{4 \gamma^2}{{\epsilon_\prev}^2}
\end{align} 


\end{theorem}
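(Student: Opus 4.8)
The plan is to rearrange the governing equation of Proposition~\ref{proposition:exp_fairness_area_acc} so that the fairness region collapses to a single one-dimensional constraint on the quantity $\fnr_1 - \fpr_1$, and then read off $|A_f|$ as the area of a diagonal band in the unit square. First I would move the $\fpr_1$ term in Equation~\ref{eq:variation_area_exp_body} to the left: $\fpr_1$ enters the right-hand side only through the summand $\fpr_1 \cdot \epsilon_\prev$, which becomes $\fpr_1$ after dividing by $\epsilon_\prev$. Collecting the remaining $\epsilon$-terms and substituting $\prev_2 = \prev_1 + \epsilon_\prev$ gives the clean identity
\begin{align}
\fnr_1 - \fpr_1 \;=\; \frac{N}{\epsilon_\prev}, \qquad \text{where} \quad N := \epsilon_\acc - (1 - \prev_2)\,\epsilon_\fpr - \prev_2\,\epsilon_\fnr .
\end{align}
Thus $A_f$ --- the set of $(\fpr_1,\fnr_1)\in[0,1]^2$ for which there exist $\epsilon_\fpr, \epsilon_\fnr, \epsilon_\acc \in [-\gamma,\gamma]$ satisfying Equation~\ref{eq:variation_area_exp_body} --- is exactly the set of $(\fpr_1,\fnr_1)$ in the unit square whose difference $\fnr_1 - \fpr_1$ is an attainable value of $N/\epsilon_\prev$.

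Next I would pin down that attainable set. Since $N$ is a linear function of $(\epsilon_\fpr,\epsilon_\fnr,\epsilon_\acc)$ over the cube $[-\gamma,\gamma]^3$, it attains its extrema at vertices; its coefficients are $1$, $-(1-\prev_2)$, and $-\prev_2$, and the mild assumption $\epsilon_\prev < 1 - \prev$ together with $\prev_1 \ge 0$ forces $\prev_2 \in [0,1]$, so $1 - \prev_2$ and $\prev_2$ are nonnegative and sum to $1$. Hence the maximum of $N$ is $\gamma\bigl(1 + (1-\prev_2) + \prev_2\bigr) = 2\gamma$ and, by symmetry, the minimum is $-2\gamma$; by continuity on the connected cube $N$ sweeps all of $[-2\gamma, 2\gamma]$. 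Dividing by $\epsilon_\prev$, which after relabelling the two groups we may take to be positive, shows $\fnr_1 - \fpr_1$ ranges over $[-c, c]$ with $c := 2\gamma/\epsilon_\prev$, so
\begin{align}
A_f \;=\; \bigl\{ (\fpr_1, \fnr_1) \in [0,1]^2 \;:\; |\fnr_1 - \fpr_1| \le c \bigr\} .
\end{align}

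Finally I would compute the area of this diagonal band. In the regime $0 < c \le 1$ (the relaxation $\gamma$ is at most half the prevalence gap; for $c > 1$ the band fills the square and the stated formula saturates at its value $1$), the complement of $A_f$ inside the unit square is two congruent right isosceles triangles --- one above the line $\fnr_1 = \fpr_1 + c$ and one below $\fnr_1 = \fpr_1 - c$ --- each with legs of length $1-c$ and area $(1-c)^2/2$. Therefore $|A_f| = 1 - (1-c)^2 = 2c - c^2$, and substituting $c = 2\gamma/\epsilon_\prev$ yields the claimed $|A_f| = \frac{4\gamma}{\epsilon_\prev} - \frac{4\gamma^2}{\epsilon_\prev^2}$. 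I expect the only real friction to be bookkeeping rather than insight: verifying that $N$ genuinely sweeps the full interval $[-2\gamma, 2\gamma]$ (this is precisely where the assumption $\epsilon_\prev < 1 - \prev$, via $\prev_2 \in [0,1]$, earns its keep), and tracking the sign of $\epsilon_\prev$ and the restriction $c \le 1$ so that the two-triangle decomposition of the complement is legitimate.
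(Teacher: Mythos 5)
Your proposal is correct and follows essentially the same route as the paper: both rearrange the governing equation into a linear relation $\fnr_1 - \fpr_1 = c$, show the attainable $c$ sweeps $[-2\gamma/\epsilon_\prev,\, 2\gamma/\epsilon_\prev]$ (your vertex-extrema argument on the cube matching the paper's term-by-term bound, with the assumption $\epsilon_\prev < 1-\prev$ playing the same sign-fixing role), and then compute the area of the resulting diagonal band in the unit square as $2c - c^2$. The only cosmetic difference is that you obtain the band area by subtracting the two corner triangles while the paper evaluates an equivalent integral, and both treatments share the same implicit restriction $c \le 1$.
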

The practical implication of Theorem~\ref{lemma:area_closed_form_acc} is simple: for a practitioner with target group-wise \fpr, \fnr, and \acc, we can show them whether their fairness relaxation values will work or not given their context (\ie their \prev and $\epsilon_\prev$), and, furthermore, how relaxing (or tightening) their $\epsilon$-margin-of-error affects the overall fairness region.




\subsection{Characterizing the fairness region using \fpr, \fnr, \ppv}\label{sec:attempt_2}

Theorem~\ref{lemma:area_closed_form_acc} provides a clean and convenient result for \fpr, \fnr and \acc, but it does not allow us to meaningfully analyze resource-constrained settings.
Generally, practitioners face resource-constrained scenarios where a classifier's \acc has less meaning than its \ppv \cite{bell2019proactive, rodolfa2021empirical, aguiar2015and, carton2016identifying, wilde_2021}. To this end, we attempted to recreate the analysis in Section~\ref{sec:attempt_1} instead using \fpr, \fnr, and \ppv, which is found in Appendix~\ref{appendix:region_ppv}. This analysis with \ppv instead of \acc leads us to an analogous expression for \fnr as a function of the other parameters (see \ref{lemma:relaxed_impossibility_result}). However, the expression for the \ppv case is ripe with non-linearities and possible discontinuities, making it more difficult to find a closed-form expression for the size of the fairness region (in the same way we did for \acc in Theorem~\ref{lemma:area_closed_form_acc}). A corresponding plot of the fairness region projected onto two dimensions (\fnr and \ppv) is shown in Figure~\ref{fig:area_estimate}~(b).


With no closed-form expression, we take two computational approaches to understanding the size of this fairness region. The first approach is to directly estimate the fraction of the unit square ($\fnr, \ppv \in [0, 1]$) taken up by a discretized feasible region by using a \textit{dot planimeter}, which is a well-studied method for estimating complex two-dimensional areas \cite{frolov1969accuracy,bocarov1957matematiko}. Figure~\ref{fig:area_estimate}~(b) is a discretized set of solutions created by sweeping out a range of parameter values and plotting feasible lines following the equation for \fnr. Intuitively, dot planimetry estimates the fraction of the unit square taken up by the set of solutions by overlaying a regular grid of points. For each point (also known as a \textit{detector}), we check whether or not any feasible lines pass within a specific distance tolerance, which is a function of the the grid's granularity. An example of this procedure is shown in the corresponding Figure~\ref{fig:area_estimate}~(c). Unfortunately, the process of dot-planimeter-style estimation introduces additional approximation error on top of discretizing the fairness region. Our analysis of upper-bounding this error (under some assumptions) can be found in Section~\ref{appendix:dot} of the Appendix. 

To avoid this additional approximation error, for our second approach we re-frame our description of the fairness region using a Constraint Program (CP). A constraint program provides an alternative means of measuring how large the space of feasible solutions is for a given setting of tolerances. Rather than measuring the area taken up by a projection on two dimensions (\ppv and \fnr), we can describe the fairness region directly as the set of \emph{feasible solutions} to a constraint program. Using the CP-SAT solver in Google ORTools, we express our problem's governing equations as a set of integer variables and constraints. Our quantities of interest ($\fpr, \fnr, \ppv, \prev, \epsilon_\fpr, \epsilon_\fnr, \epsilon_\ppv, \epsilon_\prev$) are all real numbers rather than integers, with infinitely many possible values in their respective ranges. To characterize the size of the solution space for different tolerances, we discretize the interval $[0, 1]$ into $N + 1$ bins (and, correspondingly, the interval $[-1, 1]$ into $2N + 1$ bins). For example, an $\fpr=0.91$  corresponds to an integer value of 91 when $N=100$. With this discretization, we represent the fairness region using the following constraint program:

\begin{figure}
\centering
\caption{$p_1 = 0.3, p_2 = 0.5$; $\epsilon_\fpr, \epsilon_\fnr, \epsilon_\acc \in \lbrack -0.05,0.05 \rbrack $}
\vspace{-0.1cm}
\subfloat[Fairness region when relating \\ \fpr, \fnr, \acc]
{\includegraphics[width=.33\textwidth]{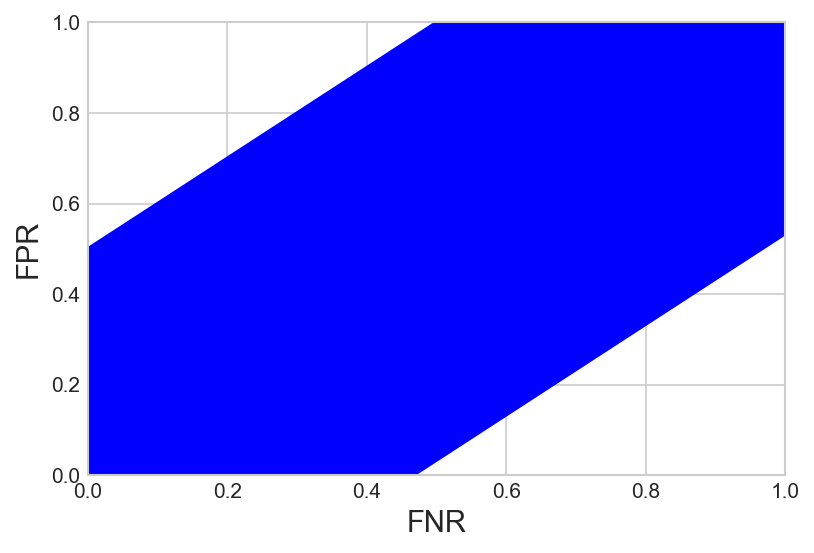}}
\subfloat[Fairness region when relating \\ \fpr, \fnr, \ppv]
{\includegraphics[width=.33\textwidth]{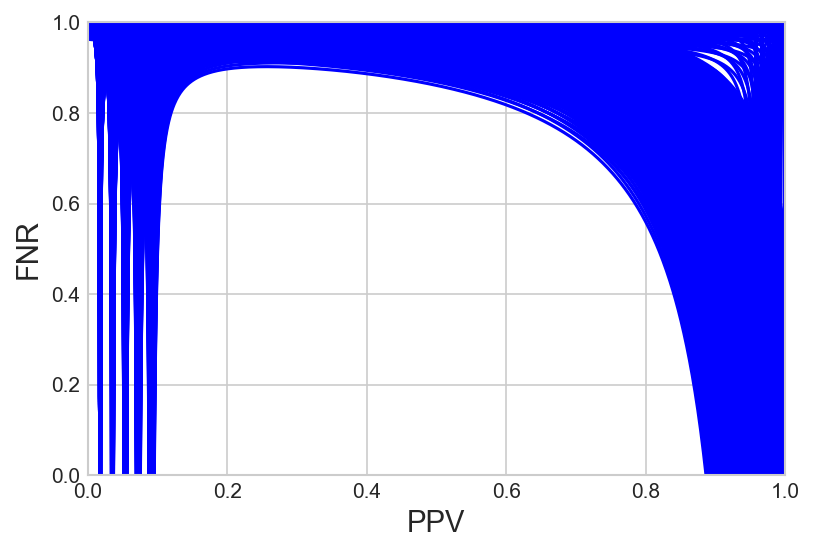}}
\subfloat[Estimating the size of (b) \\ with a dot planimeter]
{\includegraphics[width=.33\textwidth]{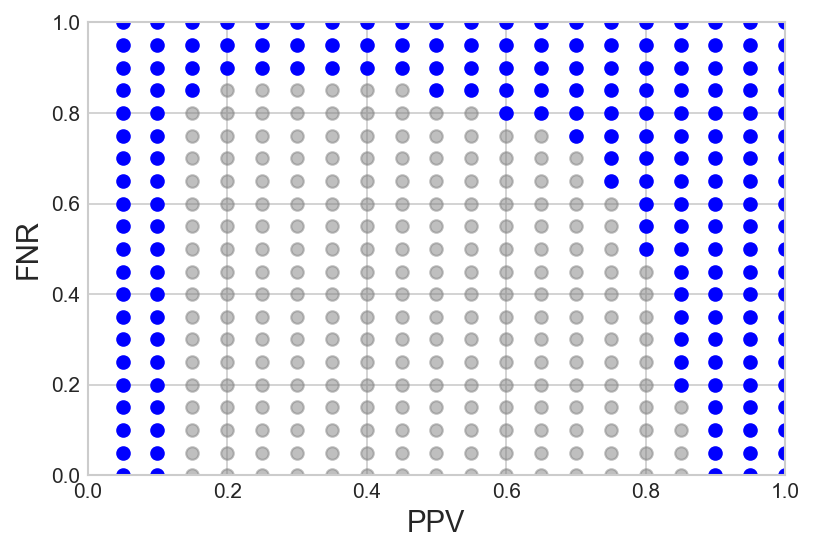}
}
\label{fig:area_estimate}
\end{figure}

\begin{equation*}
\begin{array}{rll}
    \alpha_i, \beta_i, p_i, v_i &\in [0, N] \ \text{integer} & \forall i \in \{1, 2\}\\
    \epsilon_j &\in [-N, N] \ \text{integer} & \forall j \in \{\alpha, \beta, p, v\}\\
    m_i, d_i &\in [0, N^2] \ \text{integer} & \forall i \in \{1, 2\}\\
    n_i &\in [0, N^3] \ \text{integer} & \forall i \in \{1, 2\}\\
    
    p_i &= b_i \cdot N & \forall i \in \{1, 2\}\\

    j_1 &= j_2 + \epsilon_{j} & \forall j \in \{\alpha, \beta, p, v\}\\
\end{array}
\hspace{1cm}\rule[-1.2cm]{0.02cm}{2.7cm}\hspace{1cm}
\begin{array}{rll}
    \epsilon_{j} &\geq -\epsilon_{\text{max}} \cdot N & \forall j \in \{\alpha, \beta, v\}\\
    \epsilon_{j} &\leq \epsilon_{\text{max}} \cdot N & \forall j \in \{\alpha, \beta, v\}\\

    m_i &= p_i \cdot (N - v_i) & \forall i \in \{1, 2\}\\
    n_i &= m_i \cdot (N - \beta_i) & \forall i \in \{1, 2\}\\
    d_i &= v_i \cdot (N - p_i) & \forall i \in \{1, 2\}\\
    n_i &= \alpha_i \cdot d_i & \forall i \in \{1, 2\}\\
\end{array}
\end{equation*}

Here, $N$ is the number of integers; $n, m, d$ are intermediate variables used to represent multiplicative constraints for the CP-SAT solver; $\epsilon_{\text{max}}$ represents the maximum allowable value of $|\epsilon_\alpha|, |\epsilon_\beta|, |\epsilon_v|$; $b_i$ represent the observed prevalences in the real-valued range $[0, 1]$; and , $\alpha, \beta, v$ represent \fpr, \fnr, \ppv, respectively. The CP-SAT solver allows us to enumerate all possible solutions to a constraint program. With $N^6$ possible values for the set $\{\fpr_1, \fpr_2, \fnr_1, \fnr_2, \ppv_1, \ppv_2\}$, for any fixed $N$, we can characterize the size of the discretized solution space as a function of changes to the other inputs simply as the number of feasible solutions. 

\subsection{Revisiting the impossibility theorem}

Recall that there are two known exceptions to the impossibility theorem: when the two groups' prevalence values are the same, and under perfect prediction~\cite{DBLP:conf/innovations/KleinbergMR17, DBLP:journals/bigdata/Chouldechova17}. However, given our formalization for the \textit{relaxed} case of fairness constraints, perhaps we should ask: to what \textit{degree} do the exceptions from the impossibility result apply? Specifically:



\begin{enumerate}
    \item How large can prevalence differences be (\eg $\epsilon_\prev \in \{ 1\%, 10\%, 50\% \}$) and still imply a large fairness region?
    \item How far can a model depart from perfect prediction (\eg $\ppv \in \{99\%, 75\%\}$) to have a large fairness region?
\end{enumerate}

\subsubsection{Varying prevalence difference}

First we explore the impact of varying the prevalence difference between two groups on the size of the fairness region, using the CP described in Section~\ref{sec:attempt_2}. The results of these experiments can be seen in Figure~\ref{fig:varying_e_p}, which are heatmaps plotting the number of feasible models for any pair of prevalence values $\prev_1, \prev_2$ over a range of values from 0.01 to 0.99. Figures~\ref{fig:varying_e_p} (a), (b), (c), and (d) correspond to settings where the allowable difference between metrics is $\epsilon \leq 0.0, 0.02, 0.05$, and $0.1$, respectively. Note that in each setting we fix performance such that $\fnr, \ppv \in \lbrack 0,0.99 \rbrack$ to avoid the pathological cases covered by Equation~\ref{impossibility_equation}.

Several important insights can be gleaned from Figure~\ref{fig:varying_e_p}. As expected, in the case where the $\epsilon$-margin-of-error is $0$, feasible models are only found on the diagonal, when prevalences are equal (implied by \cite{DBLP:conf/innovations/KleinbergMR17, DBLP:journals/bigdata/Chouldechova17}). Interestingly, we observe for all settings of $\epsilon$ that the number of feasible models is densest around $\prev_1 = \prev_2 = 0.5$. For example, the fairness region is larger when $\prev_1 = 0.4, \prev_2 = 0.5$ than when $\prev_1 = 0.1, \prev_2 = 0.2$, even though $\epsilon_\prev = 0.1$ in both cases.

As the $\epsilon$-margin-of-error increases from $0.0$ to $0.1$, the total number of feasible models increases dramatically from $3,640$ to $199,314$. While the specific values of these numbers are a function of our discretization and the value of $N$ used in the constraint program, they still enable us to make relative comparisons about the size of the fairness region. For example, Figure \ref{fig:varying_e_p} (c), where $\epsilon \leq 0.05$ (\ie the maximum allowable difference between group metrics is $5\%$), provides a valuable insight: if the prevalence difference between groups is less than $0.2$ (or $20\%$), the fairness region is quite dense, especially relative to plot \ref{fig:varying_e_p} (a), where $\epsilon = 0.0$. \textit{This is good news for practitioners} because: (1) prevalence differences between $10\%$ and $15\%$ are commonly observed, and (2) setting $\epsilon \leq 0.05$ is reasonable in many contexts.


\begin{figure}
\centering
\subfloat[]
{\includegraphics[width=.35\textwidth]{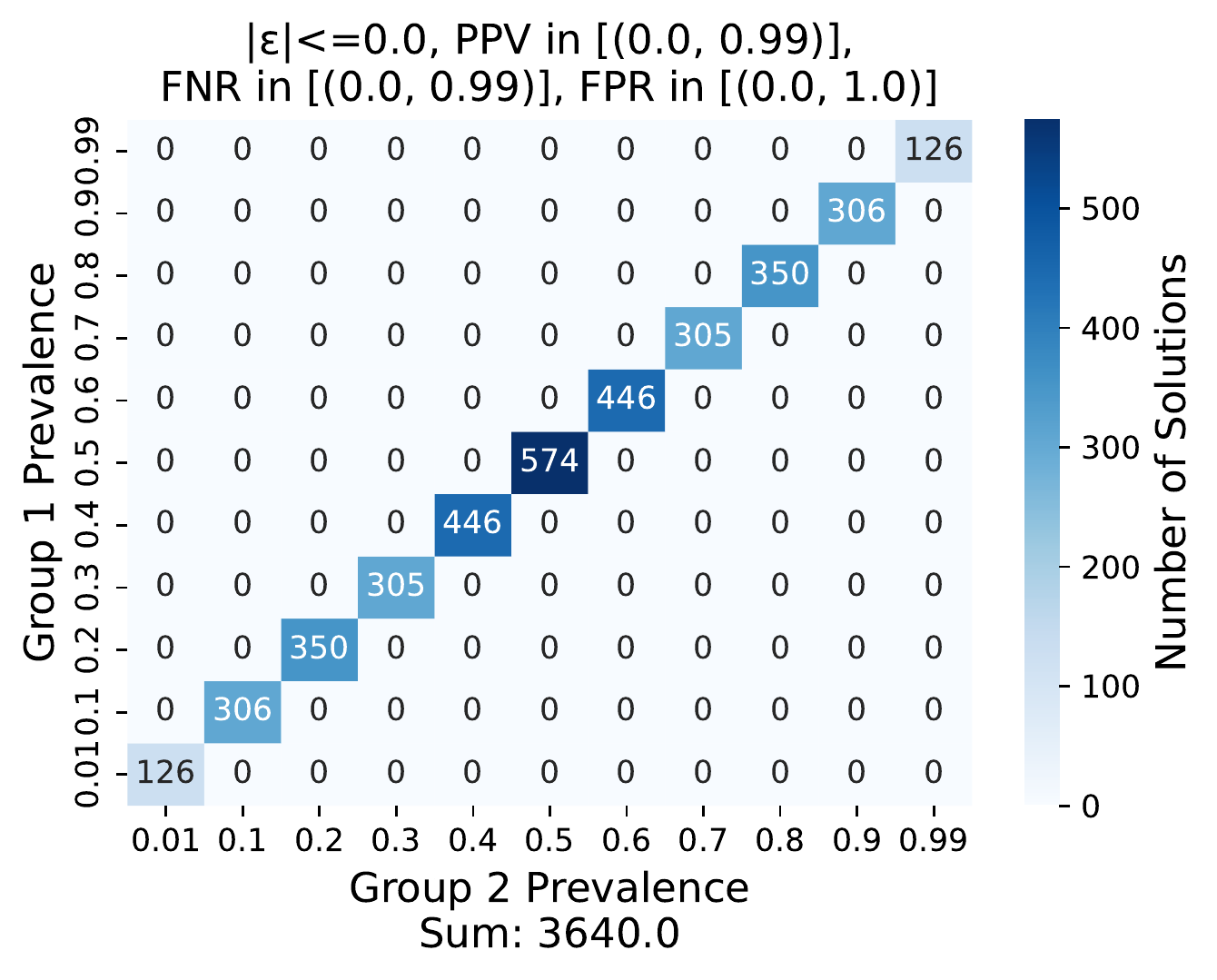}}
\subfloat[]
{\includegraphics[width=.35\textwidth]{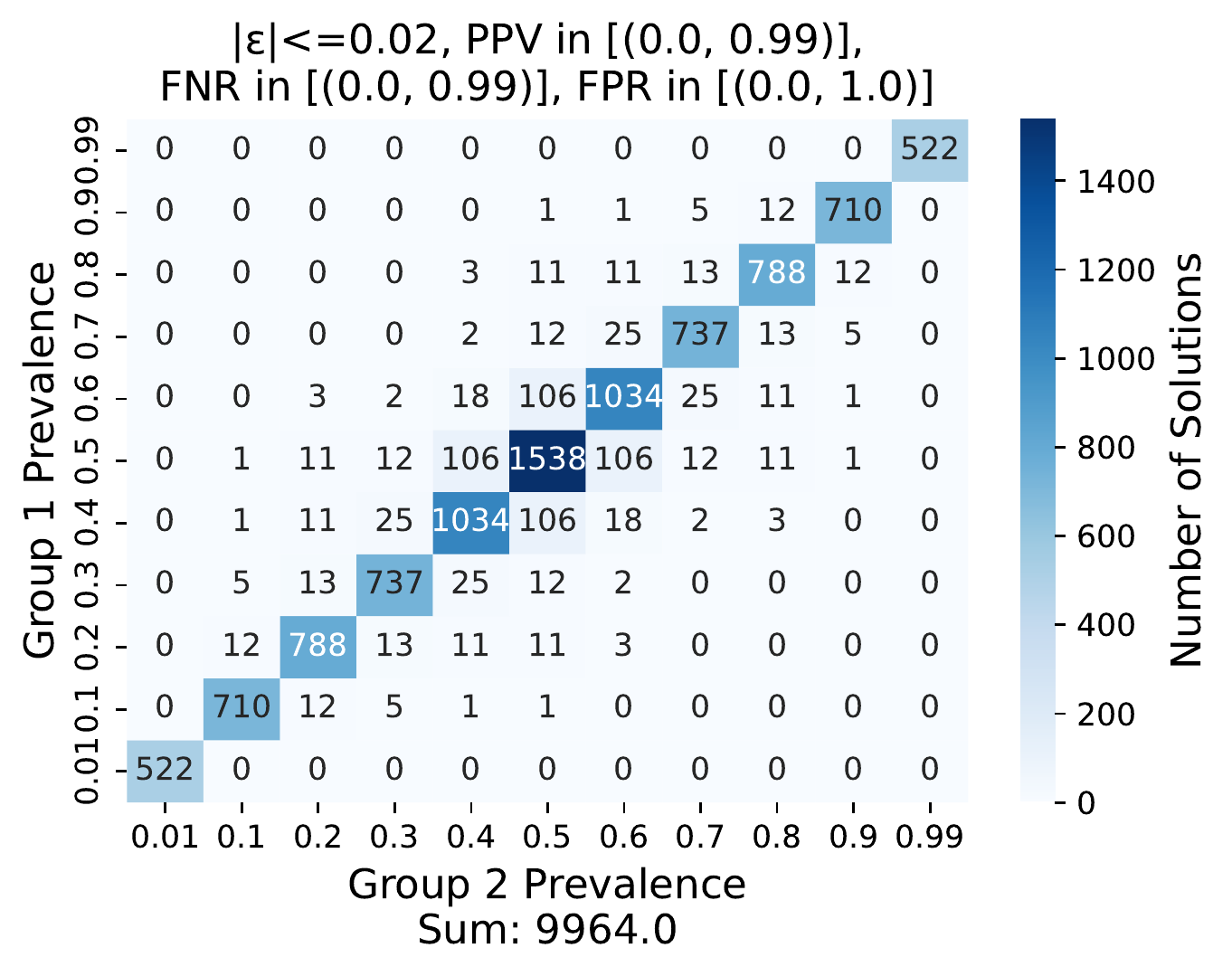}
}
\vspace*{0.01cm}
\subfloat[]
{\includegraphics[width=.35\textwidth]{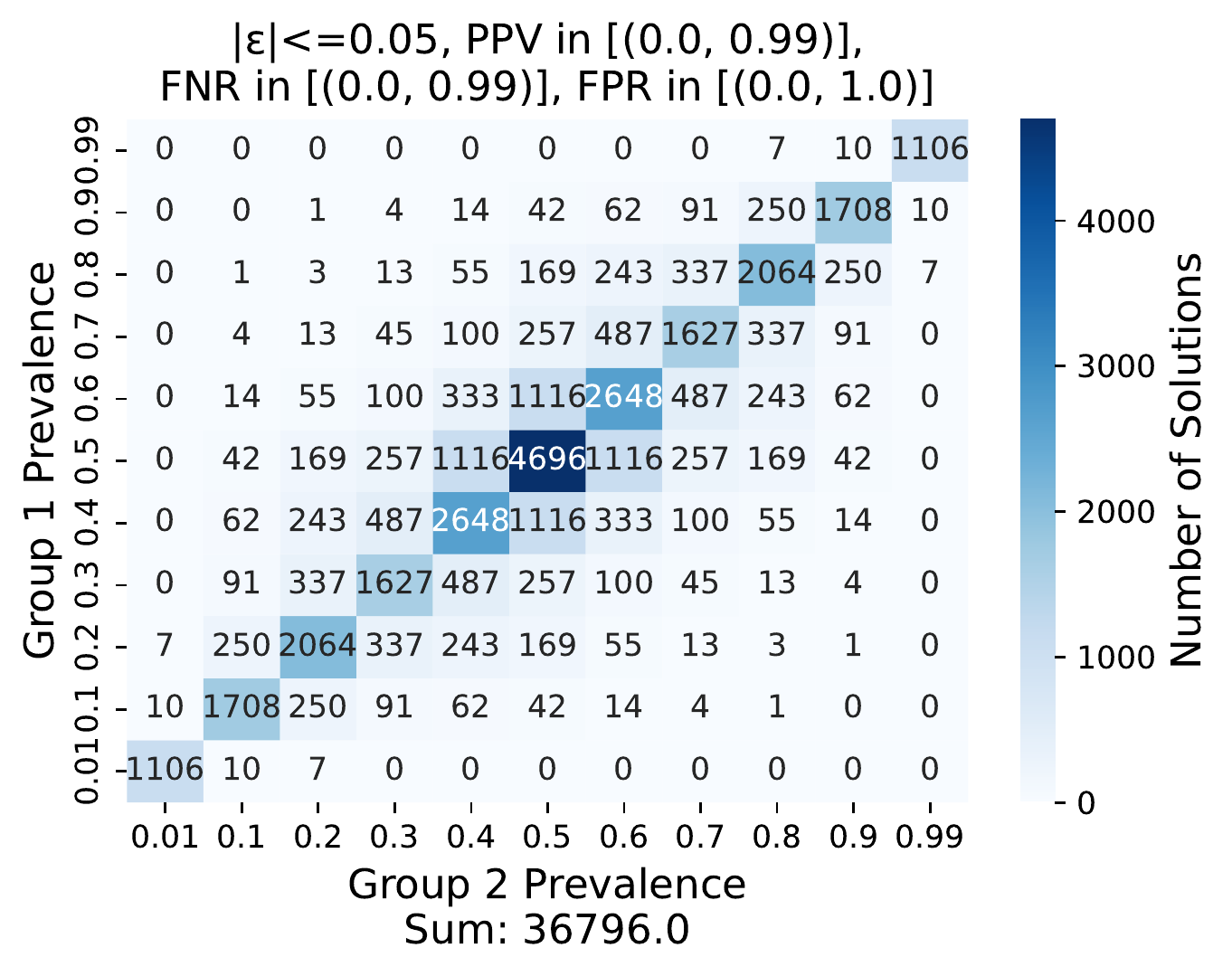}
}
\subfloat[]
{\includegraphics[width=.35\textwidth]{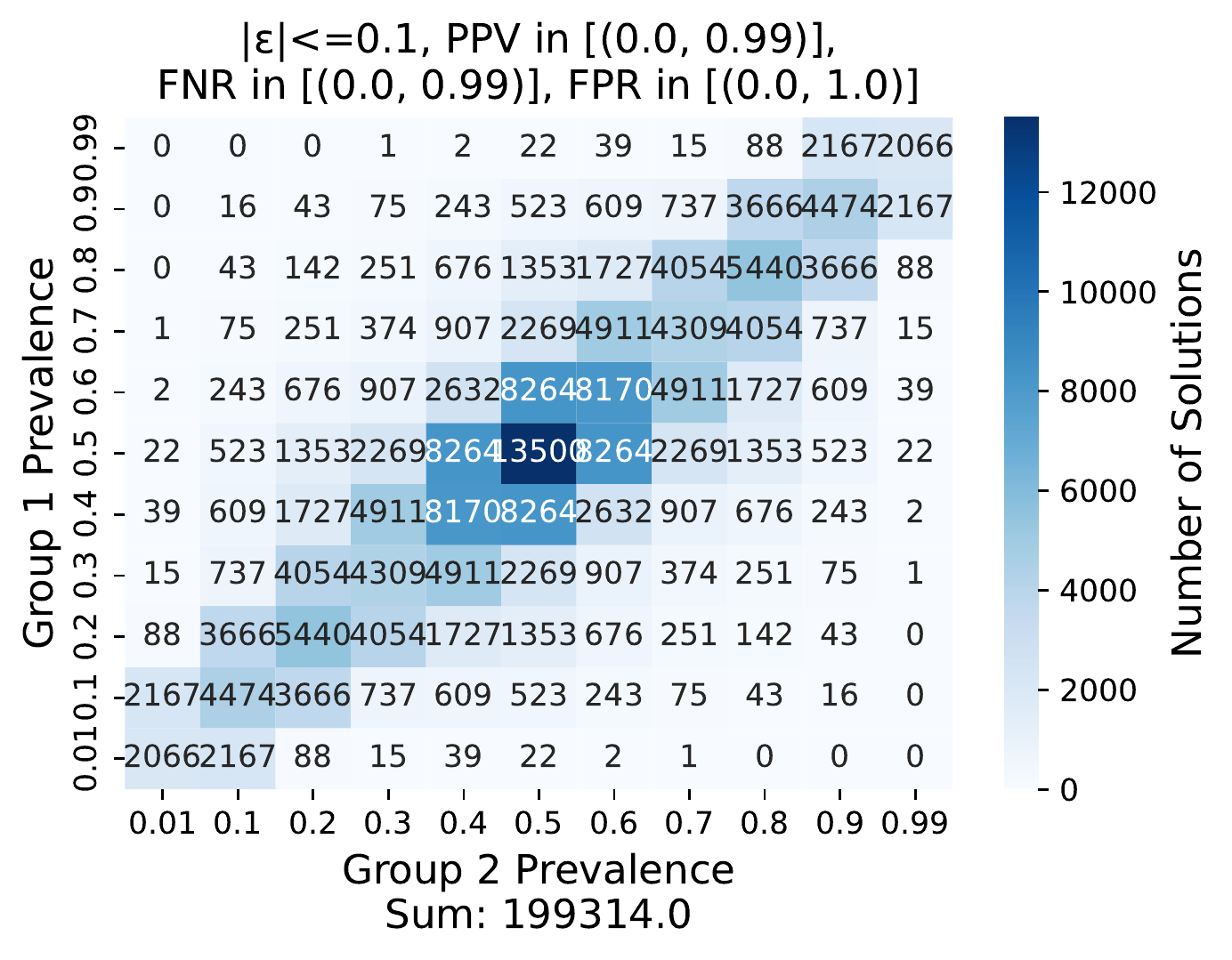}
}
\vspace{-0.3cm}
\caption{Effect of varying group prevalences $\prev_1, \prev_2$ on the number of feasible models for different values of $\epsilon$, where $\ppv, \fnr \in \lbrack 0, 0.99 \rbrack, \fpr \in \lbrack 0, 1.0 \rbrack$, $N=100$.}
\label{fig:varying_e_p}
\vspace{-0.5cm}
\end{figure}

\subsubsection{Varying performance}
\label{sec:fix_e_p_vary_ppv}

Next, we test the effect of ``imperfect prediction'' on the size of the fairness region. 
As a reference point, we focus on the case where $\epsilon \leq 0.05$ (Figure~\ref{fig:varying_e_p} (c)), and create bins that corresponded to four ranges of \ppv: $\lbrack 0.00, 0.24\rbrack$, $\lbrack 0.25,0.49 \rbrack$, $\lbrack 0.50,0.74 \rbrack$, and $\lbrack 0.75, 0.99 \rbrack$. 
As expected, the closer the setting is to ``perfect prediction'' (\ie the higher the \ppv), the larger the size of the fairness region. The number of feasible models increases from $7,554$ in the lowest \ppv bin to $10,007$ in the highest bin. Notably, there is still a large number of feasible models available in all bins even when the prevalence difference between groups is as high as 20\%.

There is another key insight implicit in Figure~\ref{fig:varying_ppv}: it not only shows how many feasible models there are under different \ppv settings, but that many of those models are \emph{high-performing}. In each figure, it can be seen that the number of feasible models is most dense when the group prevalences are below the maximum \ppv value. Recall that any model with a \ppv greater than the overall prevalence of the dataset on which it is being used offers value over random chance, suggesting not only many possible models, but many \emph{useful} models in these settings.

\begin{figure}
\centering
\subfloat[]
{\includegraphics[width=.35\textwidth]{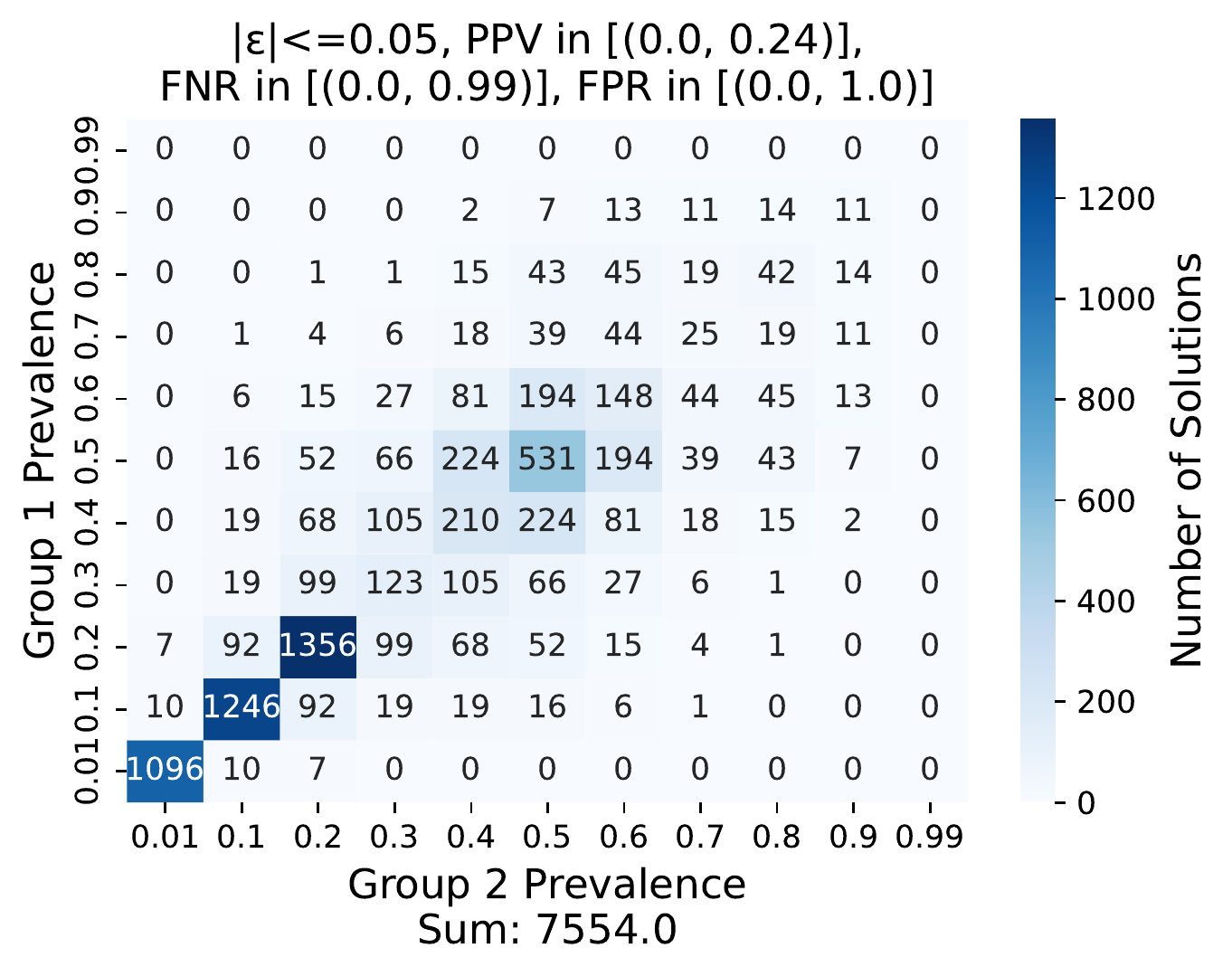}}
\subfloat[]
{\includegraphics[width=.35\textwidth]{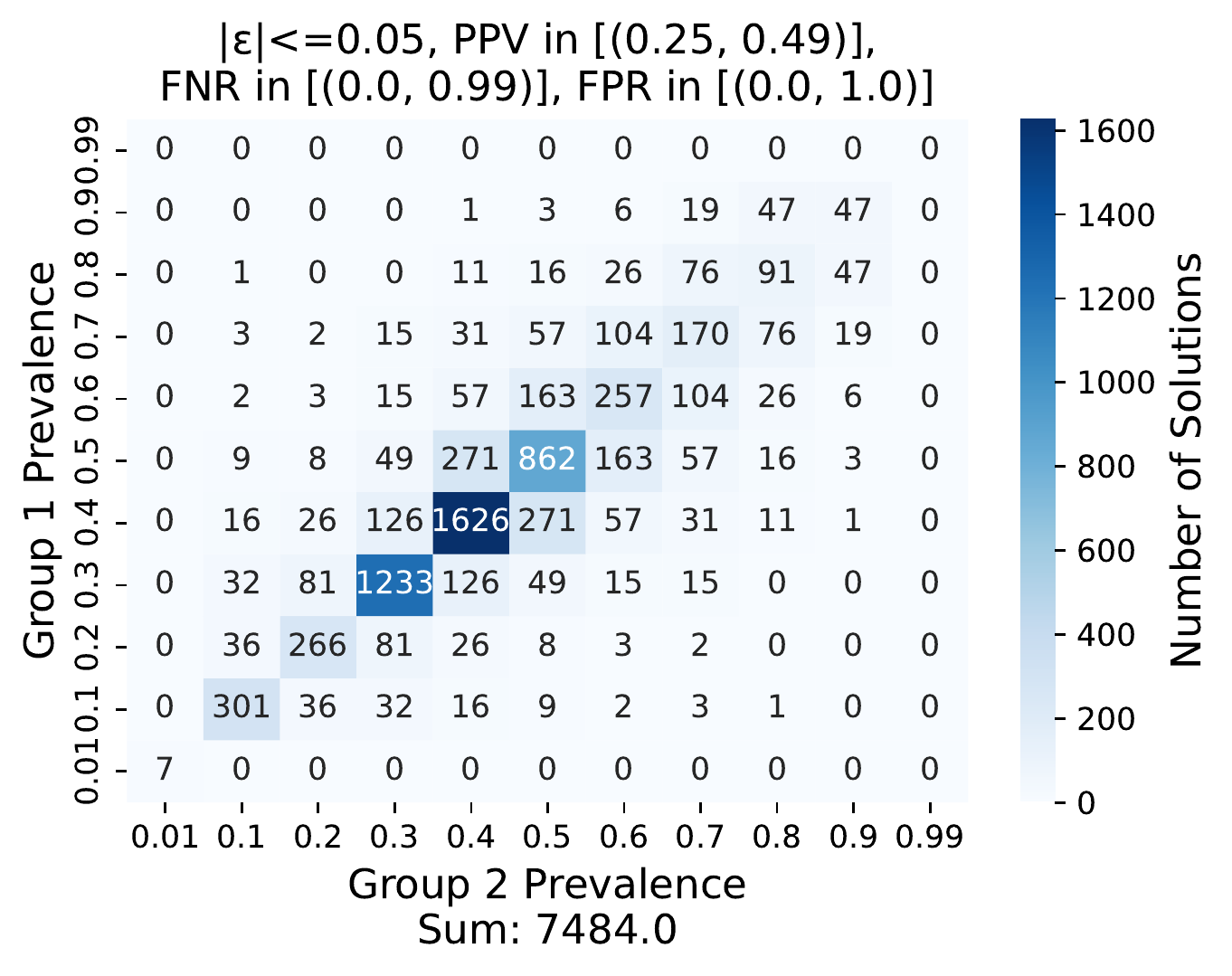}}
\vspace*{0.01cm}
\subfloat[]
{\includegraphics[width=.35\textwidth]{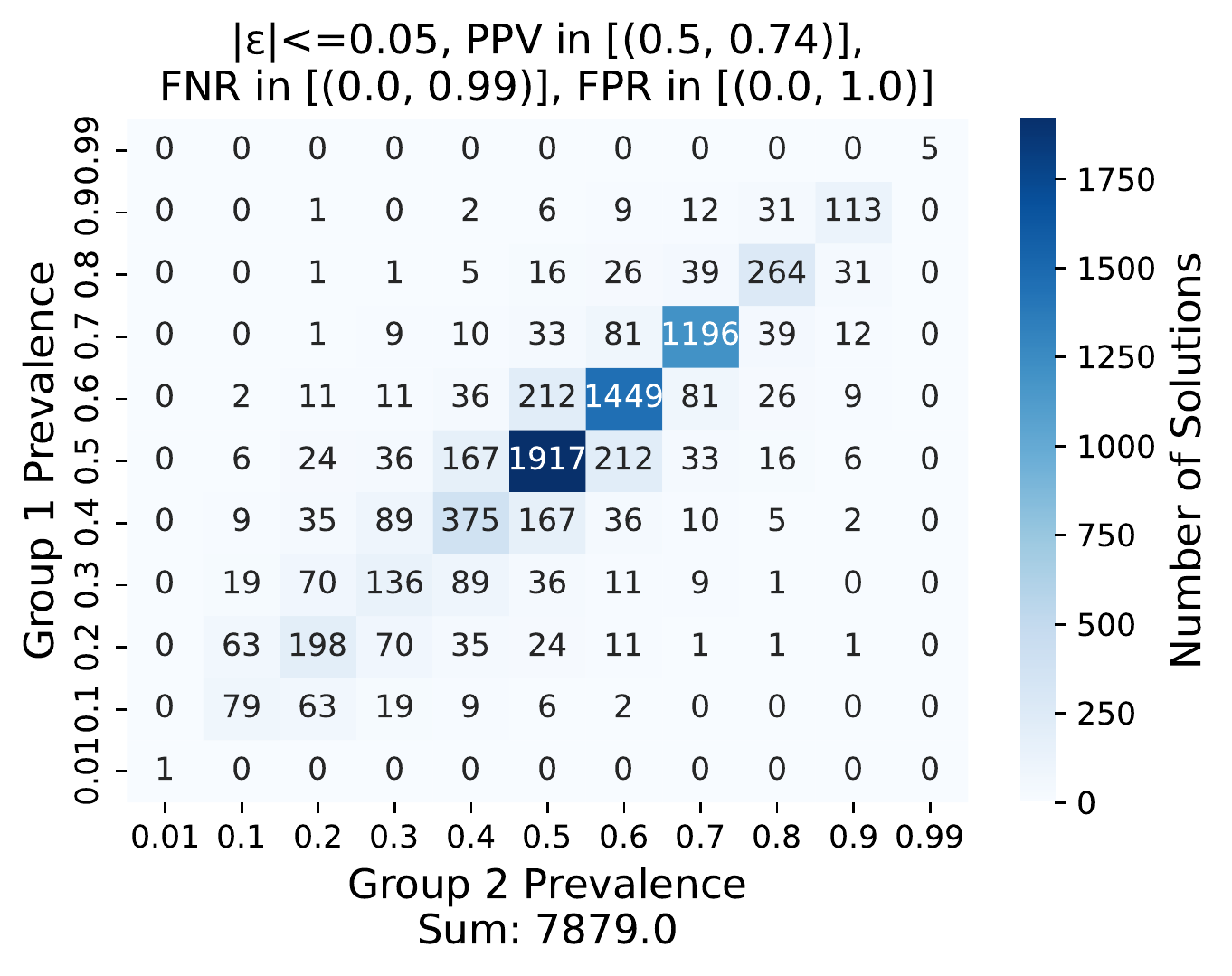}}
\subfloat[]
{\includegraphics[width=.35\textwidth]
{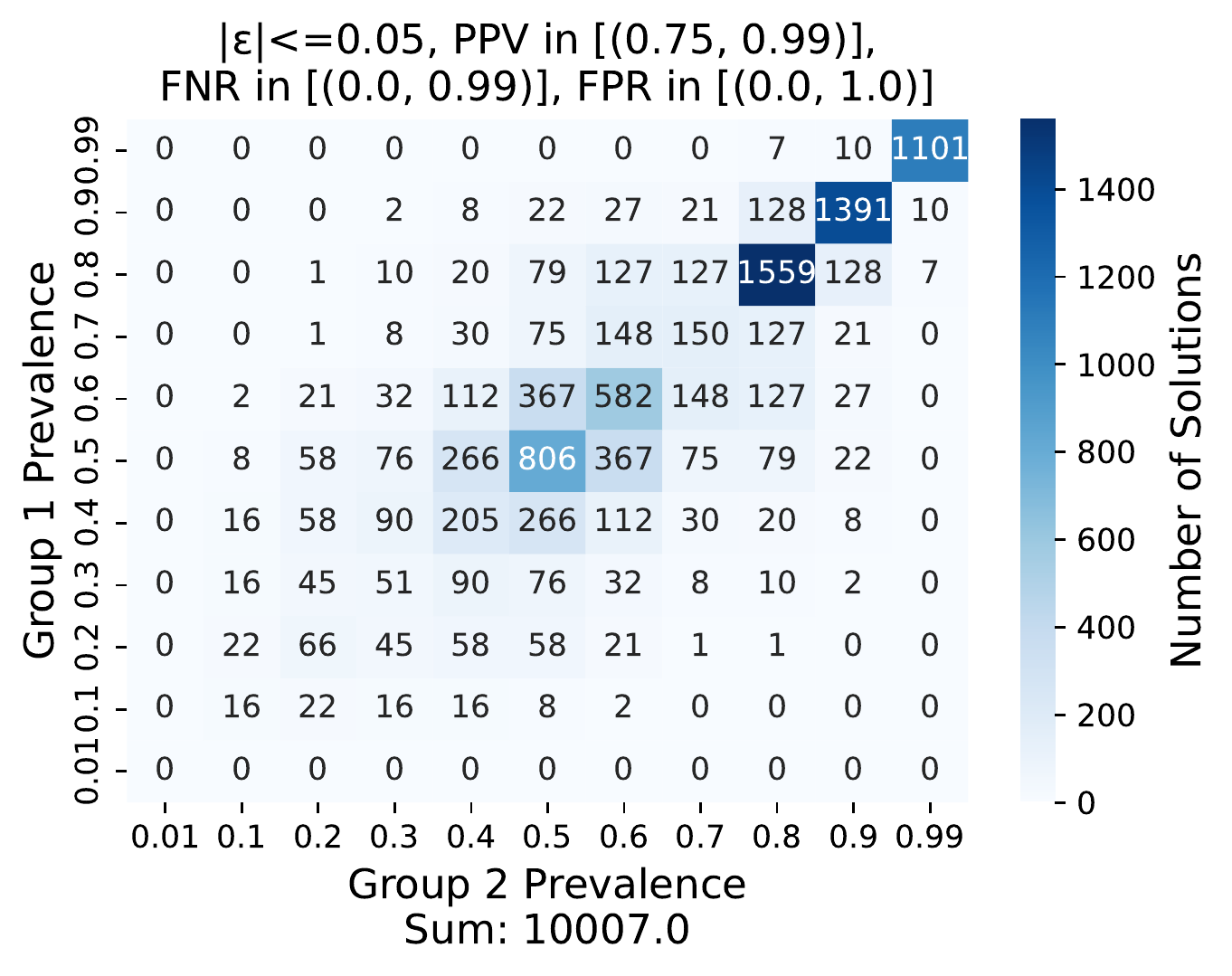}}
\vspace{-0.3cm}
\caption{Effect of varying \ppv on the fairness region, where $|\epsilon_\alpha|, |\epsilon_\beta|, |\epsilon_v| \leq 0.05$, $N=100$.}
\label{fig:varying_ppv}
\vspace{-0.5cm}
\end{figure}

\subsubsection{Considering intersectional groups}
\label{sec:intersectionality}

The ultimate goal of algorithmic fairness should not just be ensuring fairness according to multiple metrics, but also for multiple groups (\eg defined both based on sex, and based on race), including also for intersections of these groups (\eg defined by a combination of sex and race)~\cite{hankivsky2022intersectionality}.
Intersectional discrimination~\cite{crenshaw1990mapping, makkonen2002multiple} states that individuals who belong to several protected groups simultaneously (\eg Black women) experience stronger discrimination compared to individuals who belong to a single protected group (\eg White women or Black men), and that this disadvantage compounds more than additively.  This effect has been demonstrated by numerous case studies, and by theoretical and empirical work~\cite{collins2002black,shields2008gender,d2020data,noble2018algorithms}. 

Intersectionaliy is an analytical framework for understanding human beings that considers the outcome of intersections of different social locations, power relations and experiences~\cite{hankivsky2022intersectionality}. For example, an intersectional approach to fairness could be thinking beyond an individual's sex or race, and instead accounting for a set of important characteristics about that individual like their sex, race, ethnicity, and social class.  In this paper, we consider a limited interpretation of intersectionality, and investigate how stating fairness constraints with respect to intersections of several sensitive attributes impacts the existence of feasible models.  In the following proposition, we show that the maximum prevalence difference across groups defined by an intersection of sensitive attributes (\eg on sex and race) is at least as high as when groups are defined based on each sensitive attribute independently (\eg on sex or race). 



\begin{proposition}[Intersectional Prevalence Differences]
\label{theorem:int_prev_diff}
Given a dataset that is subdivided into two groups, let $0 < p_1 < n_1$ and $0 < p_2 < n_2$, where $p_i$ is the number of positive class members of group $i$, and $n_i$ is the total number of members in group $i$. Suppose $\frac{p_1}{n_1} \leq \frac{p_2}{n_2}$. Then the following holds: $\frac{p_1}{n_1} \leq \frac{p_1 + p_2}{n_1 + n_2} \leq \frac{p_2}{n_2}$. (Proof deferred to Appendix~\ref{appendix:proofs}).  
\hspace*{0pt}\hfill \qedsymbol
\end{proposition}

Consider a toy example where there are two binary sensitive attributes: \emph{sex} coded as male and female, and \emph{race} coded as majority and minority. Under the mild assumption of Proposition~\ref{theorem:int_prev_diff}, the prevalence of the minority group as a whole must be between the prevalence of the intersectional minority male and minority female groups. The same is true for the prevalence of the majority group. As a result, the prevalence difference between the four intersectional groups (majority male, majority female, minority male, minority female) \emph{must be greater than or equal} to the prevalence difference between only the majority and the minority group. The same reasoning can be used to understand prevalence differences for sex and intersectional sex. The overall implication of Proposition~\ref{theorem:int_prev_diff} is that considering intersectional groups leads to at least equal, but more commonly greater, prevalence differences between groups, which suggests there will be fewer feasible models that are fair with respect to \fnr, \fpr, and \ppv (\ie a smaller fairness region). 

\subsubsection{Varying the resource constraint \rc}
We now investigate the impact of \rc on our ability to identify a feasible solution.

\begin{proposition}[Reducing \rc increases ppv]
\label{theorem:role_of_k}
Given a \emph{well-calibrated classifier} being used under a resource constraint \rc, reducing the size of \rc will monotonically increase the \ppv of the classifier.
(Proof deferred to Appendix~\ref{appendix:proofs})
\hspace*{0pt}\hfill \qedsymbol
\end{proposition}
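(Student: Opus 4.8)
The plan is to reduce the proposition to an elementary monotonicity fact about running averages of a sorted sequence of calibrated scores. First I would fix notation. Let the classifier assign to each individual $x$ a score $s(x) \in [0,1]$, and recall that \emph{well-calibrated} means $\Pr[Y = 1 \mid s(X) = s] = s$ for every attained score $s$. Under a resource constraint \rc, the deployed rule selects the \rc individuals with the highest scores (breaking ties by some fixed convention). Sort the population so that $s_1 \ge s_2 \ge \cdots \ge s_n$. Then by calibration and linearity of expectation the expected number of true positives among the top \rc is $\sum_{i=1}^{\rc} s_i$, so the (expected) \ppv at resource level \rc is
\begin{align}
\ppv(\rc) = \frac{1}{\rc}\sum_{i=1}^{\rc} s_i .
\end{align}

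Next I would show that $\ppv(\rc)$ is non-increasing in \rc, i.e.\ that $\ppv(\rc-1) \ge \ppv(\rc)$ for every $\rc \ge 2$; the claim then follows by iterating. The key observation is that, because the sequence is sorted, every one of the first $\rc-1$ scores is at least $s_{\rc}$, hence $\ppv(\rc-1) \ge s_{\rc}$. Appending the term $s_{\rc}$, which is no larger than the current average, cannot increase the average:
\begin{align}
\ppv(\rc) = \frac{(\rc-1)\,\ppv(\rc-1) + s_{\rc}}{\rc} \le \frac{(\rc-1)\,\ppv(\rc-1) + \ppv(\rc-1)}{\rc} = \ppv(\rc-1).
\end{align}
Chaining this inequality gives $\ppv(\rc') \ge \ppv(\rc)$ whenever $\rc' \le \rc$, which is exactly the asserted monotonicity. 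I would also note a threshold-based version that is sometimes more natural: writing the \ppv at score cutoff $t$ as the conditional mean of $s(X)$ given $s(X) \ge t$, raising $t$ (the analogue of shrinking \rc) only discards probability mass carrying scores below the current conditional mean, which cannot decrease that mean.

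I expect the main obstacle to be not the inequality itself but pinning down the model so that the statement is clean. Three points need care: (i) handling ties in the score distribution so that ``the top \rc'' and the selected set are well-defined, and verifying the calibration identity still applies to that set; (ii) deciding whether the conclusion concerns the \emph{expected} \ppv (the natural object for a calibrated scorer) or a realized sample \ppv, and being explicit that we prove the former; and (iii) observing that the monotonicity is \emph{weak} in general — it is strict precisely when the marginal score $s_{\rc}$ lies strictly below the running average $\ppv(\rc-1)$, i.e.\ when the top-\rc scores are not all equal near the cutoff. I would therefore state the proposition for the expected/population \ppv of a well-calibrated classifier, flag the tie-breaking convention, and remark that strictness holds whenever the relevant consecutive scores differ.
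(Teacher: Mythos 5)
Your proposal is correct and follows essentially the same route as the paper's proof: express the expected \ppv at resource level \rc as the average of the top-\rc calibrated scores and observe that truncating a sorted list cannot decrease that average. Your one-step-at-a-time averaging inequality and your remarks on ties, expected vs.\ realized \ppv, and the monotonicity being weak are in fact slightly more careful than the paper's version, which states the conclusion only as ``greater than or equal.''
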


The insight of Proposition~\ref{theorem:role_of_k} is that reducing \rc causes a chain reaction: first, it increases the \ppv of the classifier, and second, an increase in \ppv results in a more dense space of feasible solutions (as observed in Section~\ref{sec:fix_e_p_vary_ppv}). Taken together, this suggests that reducing \rc can result in a denser space of feasible models  on \fpr, \fnr and \ppv.  
\section{Experiments}
\label{sec:experiments}


The analysis in Section~\ref{sec:theory} shows that, by slightly relaxing fairness constraints between metrics, there are a large number of models satisfying approximate fairness constraints across multiple metrics. In this section, we design an experiment to demonstrate the existence of those models\textit{ on real data}.


Our insights suggest that the possibility of finding fair models is influenced by (1) the group prevalences' proximity to $50\%$, (2) the differences between group prevalences, and (3) the performance of the classifier. To better understand how these parameters impact one's ability to find fair models on real-world data, we developed an experiment to answer the following question: Given a dataset $X$, a resource constraint \rc, a set of fairness constraints, and a classifier with a given \ppv, does there exist a set of \rc observations for which (1) fairness constraints for \fpr, \fnr and \ppv are satisfied, and (2) those fairness constraints do not reduce the \ppv? If there does exist a set of observations in $X$ that satisfies these requirements, then there also exists a model that could select those observations. Trivially, one can think of a function that uses the index of each element to map to an outcome. In other words, one could use such a set as the labels $Y$ for creating a function $i : X \rightarrow Y$.

To implement the experiment we created a Mixed Integer Linear Program as follows: The objective function is to maximize the \ppv of a selection of \rc observations, subject to 5 constraints: (1) \rc observations must be selected, (2) the classifier has \emph{at most} a pre-defined \ppv, (3-5) fairness constraints for \fnr, \fpr, and \ppv are met. The full program details can be found in Appendix~\ref{appendix:mip}.~\footnote{All data, code, and experimental results are available in a GitHub repository at \url{https://anonymous.4open.science/r/fairness-trade-offs-facct-submission-D2A4/README.md}.  The repository will be made public upon publication.} 

One strength of this experiment is that it allows us to work with datasets that may have non-binary sensitive attributes or multiple sensitive attributes. This means we can also explore the feasibility of finding fair sets \textit{when there are more than two groups and consider intersectionality} (see Section~\ref{sec:intersectionality}).

Note that there are two complications in our problem: First, because \ppv cannot be written as a linear constraint~\cite{hsu2022pushing}, we used an existing method to refactor our problem into an approximately equivalent Quadratic Linear Program that includes constraints for \ppv. Second, as a result of the transformations and the inherent complexity, running times become intractable for large datasets. To circumvent this problem, we conducted our experiments on a sample of each dataset stratified on the outcome and sensitive attributes. 
We ran several sensitivity analyses and did not find any meaningful difference in the experimental results due to down-sampling, but we acknowledge this as a limitation. 


\subsection{Datasets}

We worked with 5 real-world datasets, all with varying outcome prevalences, and representing a variety of sensitive attributes, including sex, race, and education level (which may also be a proxy for income). We used these datasets in scope of 16 tasks (\ie outcomes): 8 for \textit{Ukrainian EIE}, 5 for \textit{folktables}, 1 for each of the other 3 datasets).
We describe these datasets briefly here, see Appendix~\ref{sec:full_dataset_descriptions} for details.

\begin{itemize}[leftmargin=*]
    \item {\bf Ukranian External Independent Evaluation (EIE)\footnote{\url{https://zno.testportal.com.ua/opendata}}.} EIE data contains standardized tests for secondary school graduates in Ukraine. The 2021 data contains 389,322 records; sensitive attributes include the students' sex and whether they live in an urban or a rural area. Outcomes are students' performance on 5 tests (\eg \emph{history}, \emph{German}).
    \item {\bf Portuguese Student Performance
    ~\cite{Cortez2008UsingDM}.} This dataset contains the performance of Portuguese students from two high schools. The dataset was collected in 2006, contains 1,044 records, and includes  sensitive attributes like the student's sex, their parents' education levels, and whether the students live in an urban or rural location.
    \item {\bf Taiwanese Loan Assessment    
    } This dataset has customer loan data from a bank and cash issuer in Taiwan. The data was collected in 2005, has 30,000 records, and includes sensitive attributes like sex and education level. The associated task is to identify customers at risk of defaulting on their loan payments.
    \item {\bf Bangladeshi Diabetes Risk Assessment    
    ~\cite{islam2020likelihood}.} This dataset, published in 2020, has 520 patient records with information on diabetes-related symptoms, obtained through a questionnaire by the Sylhet Diabetes Hospital in Bangladesh. The task is to identify individuals at risk of early-stage diabetes. Sensitive attributes include age and sex.  
    \item {\bf Folktables
    ~\cite{ding2021retiring}.} This data is from the American Community Survey, and contains individual-level data related to income, employment, health, transportation, and housing. The data has millions of records total, and sensitive attributes include race and sex. We look at 5 separate pre-defined prediction tasks, using data from New York in 2018.
\end{itemize}

\subsection{Results}
\label{sec:exp_results}

Full experimental results are reported in Table~\ref{tab:exp_results} in Appendix~\ref{app:exp_results}, and truncated results are in Table~\ref{tab:exp_results_trunc}. These tables show the ``Optimal \rc Range'' for each \emph{(dataset (outcome), sensitive attribute) pair}, where \rc is expressed as a percentage of the observations. The optimal range shows for which values of \rc there is a set of observations that satisfy fairness constraints for \fpr, \fnr, and \ppv, without sacrificing any additional classifier \ppv. 
Note that in Table~\ref{tab:exp_results}, each \rc list must contain 30\% errors (\ie false positives). This means that the precision of the list \rc is at-most 70\%. This was an arbitrary choice;  however, we did conduct extensive sensitivity analysis (see Table~\ref{tab:exp_results_ppv_sa} in the Appendix) and found that increasing the  \ppv generally increases the Optimal \rc Range, as suggested in Section~\ref{sec:fix_e_p_vary_ppv}. 
Note also that ``Maximum Group Difference'' refers to the maximum pairwise difference between group prevalence values.

Recall our discussion about fairness constraints over intersectional groups in Section~\ref{sec:intersectionality}. In Table~\ref{tab:exp_intersect_results}, we have included results for \textit{Folktables} for a new ``race-sex'' attribute that partitions the data based on a combination of values of these two attributes. 
It can be seen that the same observations made in Section~\ref{sec:exp_results} hold here, but with one key difference: in general, the maximum prevalence difference in groups defined by an intersection of attributes (\eg on sex and race) is at least as high as when groups are defined based on each sensitive attribute independently (\eg on sex or race).  This is consistent with the insights of Proposition~\ref{theorem:int_prev_diff} in Section~\ref{sec:intersectionality}.

The results in Table~\ref{tab:exp_results} support the conceptual findings presented in Section~\ref{sec:theory}. The size of the optimal \rc range mirrors some previous findings, with evidence of larger \rc ranges for prevalences closer to $50\%$ and for smaller group prevalence differences.
As evidence for the former, for any row in Table~\ref{tab:exp_results} for which the optimal \rc range is \emph{All}, the group prevalence differences are small (less than $10\%$). For the latter, consider this interesting observation from the table: for the \emph{(EIE (German), Territory)} pair, the maximum group prevalence difference is $18.4\%$ --- yet there is no value of \rc where it is possible to simultaneously satisfy all three fairness constraints. Notice that the overall prevalence is around $10.35\%$. In contrast, the \emph{(Folktables (Travel Time), Race)} pair has a maximum prevalence difference that is even higher at 20.64\%, but in this case the group prevalences are closer to $50\%$, and the optimal \rc range spans over half the dataset ($\lbrack 5,55 \rbrack$).

Another salient result from our experiments is that out of all 32 combinations of \emph{(dataset (outcome), sensitive attribute)}, only 3 have no \rc value for which it was possible to find a set of observations satisfying every fairness constraint. This is a promising result: \textbf{across five separate and diverse real-world datasets}, we demonstrated that there is \textbf{nearly always} at least \emph{some} chance of finding \textbf{a model that is simultaneously fair with respect to \fpr, \fnr, and \ppv} with a small margin-of-error.


\begin{table}
\caption{Truncated experimental results}
\label{tab:exp_results_trunc}
\small
\begin{tabular}
{|p{3cm}H|p{1.5cm}|p{1.2cm}H|p{3.7cm}|p{1.8cm}|p{1cm}|}
\hline
\textbf{Dataset (Outcome)} & \textbf{Number of Samples} & \textbf{Overall Prevelance (\%)} & \textbf{Sensitive Attribute} & \textbf{Group Distribution (\%)} & \textbf{Group Prevelance (\%)} & \textbf{Maximum Prevelance Difference (\%)} & \textbf{Optimal \rc Range\textsuperscript{1}} \\ \hline
EIE (Ukranian) & 1445 & 7.34 & Sex & Female: 51.97; Male: 48.03 & Female: 4.26; Male: 10.66 & 6.4 & None \\ \hline
EIE (Ukranian) & 1445 & 7.34 & Territory & Rural: 25.33; Urban: 74.67 & Rural: 10.38; Urban: 6.3 & 4.08 & {[}5,20{]} \\ \hline
EIE (Math) & 1222 & 31.1 & Sex & Female: 48.69; Male: 51.31 & Female: 30.92; Male: 31.26 & 0.34 & All \\ \hline
EIE (Math) & 1222 & 31.1 & Territory & Rural: 25.2; Urban: 74.8 & Rural: 39.94; Urban: 28.12 & 11.82 & {[}5,90{]} \\ \hline
EIE (Geography) & 1132 & 5.3 & Sex & Female: 48.23; Male: 51.77 & Female: 4.21; Male: 6.31 & 2.1 & All \\ \hline
EIE (Geography) & 1132 & 5.3 & Territory & Rural: 27.47; Urban: 72.53 & Rural: 6.43; Urban: 4.87 & 1.56 & All \\ \hline
EIE (German) & 1004 & 11.35 & Sex & Female: 65.54; Male: 34.46 & Female: 10.03; Male: 13.87 & 3.84 & {[}5,90{]} \\ \hline
{\bf EIE (German)} & 1004 & \textbf{11.35} & {\bf Territory} & Rural: 15.14; Urban: 84.86 & Rural: 26.97; Urban: 8.57 & \textbf{18.4} & \textbf{None} \\ \hline
Folktables (Employment) & 1970 & 46.45 & Sex & Female: 51.68; Male: 48.32 & Female: 44.3; Male: 48.74 & 4.44 & All \\ \hline
Folktables (Employment) & 1970 & 46.45 & Race & Asian alone: 8.63; Black or African American alone: 12.18; Other: 8.88;   White: 70.3 & Asian alone: 50.0; Black or African American alone: 42.08; Other: 41.71;   White: 47.36 & 8.29 & All \\ \hline
Folktables (Travel Time) & 1824 & 53.78 & Sex & Female: 49.4; Male: 50.6 & Female: 51.72; Male: 55.8 & 4.08 & All \\ \hline
{\bf Folktables~(Travel~Time)} & 1824 & \textbf{53.78} & {\bf Race} & Asian alone: 9.27; Black or African American alone: 11.02; Other: 8.06;   White: 71.66 & Asian alone: 66.86; Black or African American alone: 69.15; Other: 64.63;   White: 48.51 & \textbf{20.64} & \textbf{{[}5,55{]}} \\ \hline
Loan Assessment & 1200 & 22.17 & Sex & Female: 60.33; Male: 39.67 & Female: 20.86; Male: 24.16 & 3.3 & All \\ \hline
Diabetes Risk Assessment & 520 & 61.54 & Sex & Female: 36.92; Male: 63.08 & Female: 90.1; Male: 44.82 & 45.28 & {[}5,10{]} \\ \hline
Student Performance & 1003 & 22.03 & Sex & Female: 56.63; Male: 43.37 & Female: 21.13; Male: 23.22 & 2.09 & All \\ \hline
Student Performance & 1003 & 22.03 & Parent's education level & High school: 24.13; Not high school or university or greater: 40.88;   University or greater: 35.0 & High school: 23.55; Not high school or university or greater: 25.85;   University or greater: 16.52 & 9.33 & {[}5,55{]} \\ \hline
\end{tabular}
\begin{flushleft}
\hspace{0.35cm}Notes: $^1$\rc is expressed as the percentage of the number of samples
\end{flushleft}
\vspace{-0.5cm}
\end{table}

\section{Discussion}
\label{sec:discussion}

This paper sought to revisit the impossibility theorem in practical settings. Our analytical and experimental results, taken together,  offer a promising perspective regarding the feasibility of finding models that are fair (under very slight relaxations) with respect to \fnr, \fpr, and \ppv. 
In this section, we present our findings as guidance for practitioners on when it will be feasible to find models that are fair with respect to multiple metrics. There are several considerations: (1) group prevalence values, (2) prevalence difference between groups, (3) classifier performance, and (4) resource constraint \rc. 

In the first two considerations, our findings suggest that if one allows a small margin-of-error difference between metrics, then there exist many models that simultaneously satisfying parity across \fnr, \fpr, \ppv even when there is a moderate prevalence difference between groups. Further exploration is needed to understand what exactly constitutes \emph{small} and \emph{moderate}, but in our analysis we observed that cases with a $5\%$ margin-of-error and prevalence differences up to $10\%$ (and in some cases up to $20\%$) afforded feasible solutions. We are unsure how well these particular settings will generalize, but the larger implication is hopeful. For example, revisiting the thought experiment from Section~\ref{sec:implications_of_impossibility_theorem}: when predicting student graduation in the US where the prevalence difference between minority and majority students is roughly $10\%$, 
we expect that \emph{it is possible to build models that are fair with respect to multiple metrics}. To allow practitioners to answer questions like these for their own datasets, we offer an open-source tool they can use to assess the feasibility of finding models that are fair across multiple constraints, given an input dataset.\footnote{\url{https://anonymous.4open.science/r/fairness-trade-offs-facct-submission-D2A4/README.md}}


Regarding the third and fourth considerations, our analytical work suggests that a higher \ppv yields a larger number of feasible models. This furthers claims by other researchers that increasing the performance of a model \emph{actually improves} the possibility of finding a fair model~\cite{wick2019unlocking}. This is in-line with a paradigm shift away from thinking one must choose between high performance \emph{or} fairness --- from a fairness perspective, it can be worthwhile to improve the performance of your classifier to further enable fairness across multiple constraints. Connecting this insight to the resource-constrained setting with \rc, it follows that \textbf{resource constraints can, perhaps counter-intuitively, lead to higher chances of finding fair models} (see Proposition~\ref{theorem:role_of_k}). This is particularly impactful for practitioners working in ML for public policy, where resource constraints can be as small as $1\%$ $(\rc = 0.01)$ or $5\%$ $(\rc = 0.05)$~\cite{rodolfa2021empirical, bell2022s}.

We also offer two other meta-considerations. The first is the $\epsilon$-margin-of-error allowed between fairness metrics. In practice, $\epsilon$ should be decided \emph{a priori}, and by consulting stakeholders and subject area experts~\cite{ruf2021towards, saleiro2018aequitas}---but generally, the guidance here is unsurprising: the larger the tolerable difference between metrics, the larger the feasible region of fair models. 
The second meta-consideration is the number of groups of sensitive attributes. We find that adding intersectional groups will increase prevalence differences (see Proposition~\ref{theorem:int_prev_diff}), which reduces the number of possibilities for fair models.  However, this is by no means an argument against considering intersectionality. On the contrary, we frame this finding as follows: you can continue to add sensitive attributes and intersectional groups and still have a chance of finding models that are fair across multiple metrics.

\section{Conclusions and social impact}
\label{sec:conclusion}

In this paper we provide evidence that challenges the implications of the impossibility theorem in practical settings, suggesting that practitioners can strive for fairness with respect to multiple metrics simultaneously in the algorithms they implement. This is an important part of the social impact of this paper. It exists as part of a growing body of literature showing that strong limits to fairness, like trade-offs with performance, with other metrics, or between groups, may be over-stated or even self-imposed. The impossibility theorem is not a rigid barrier to equitable machine learning.

This work also further demonstrates the importance of reducing societal biases, which are ultimately what cause prevalence differences between groups to appear in data. There is a similar implication for designing better models, algorithms, and classifiers. By understanding how model performance and fairness are interrelated, we can shift away from a paradigm of wanting to build algorithms that are either better performing \emph{or} more fair, and towards one where we build algorithms that are better performing \emph{and} more fair.


 Our work leaves open an important next step in ensuring fairness across multiple metrics and for multiple groups: Once we know there is a large set of feasible models, how do we find such a model? Further, does having a large number of feasible models make it easier to find one of those models? Significant additional study of this problem should be done with closed-form expressions describing model feasibility in terms of \fpr, \fnr, and \ppv. 
 Notably, as of the time of this writing, there has been at least one effort made to develop an algorithm to find classifiers that are fair with respect to \fpr, \fnr, and \ppv~\cite{hsu2022pushing}. That effort, along with related work that challenges commonly-held beliefs about the fairness-accuracy trade-off, may represent an inflection point in the fair-ML community: fewer and fewer researchers and practitioners conform to the idea that we must choose between (a single notion of) fairness and accuracy~\cite{rodolfa2021empirical, celis2019classification, wick2019unlocking}. The main take-away of our work is that \textbf{achieving fairness along multiple metrics, for multiple groups, and without sacrificing accuracy is much more attainable than previously believed.}

\bibliographystyle{ACM-Reference-Format}
\bibliography{main}

\appendix
\newpage
\text{\LARGE Appendix.}
\section{Full Description of Datasets}
\label{sec:full_dataset_descriptions}

\begin{itemize}
    \item {\bf Ukrainian Center for Educational Quality Assessment\footnote{\url{https://zno.testportal.com.ua/opendata}}.} 
    This dataset contains the results of the External Independent Evaluation (EIE or ZNO) in 2021. EIE is a set of organizational procedures (primarily testing) aimed at determining the level of educational achievement of secondary school graduates upon their admission to higher education institutions in Ukraine. 
    The dataset contains 389,322 records from EIE participants in 2021. Features include demographic attributes (ex. birth, gender, region) and participant performance records in 13 subjects. Sensitive attributes include the students' sex and whether they live in an urban or rural location.

    \item {\bf Portuguese Student Performance\footnote{\url{https://archive.ics.uci.edu/ml/datasets/student+performance}}~\cite{Cortez2008UsingDM}.} This dataset contains the performance of Portuguese high school students aged approximately 15 to 19 for two subjects: mathematics and Portuguese language arts. The dataset contains 1,044 records from students at two high schools (one urban and one rural) and was collected in 2005 and 2006. The associated prediction task is identifying students at risk of failure to provide additional school resources. Features include administrative records from schools (ex. grades, number of absences) and a lifestyle questionnaire completed by each student (ex. how many hours per week they study). Sensitive attributes include the student's sex, their parent's education level (which may be a proxy for income), and whether or not the students live in an urban or rural location.
    \item {\bf Taiwanese Loan Assessment\footnote{\url{https://archive.ics.uci.edu/ml/datasets/default+of+credit+card+clients}}~\cite{yeh2009comparisons}.} This dataset contains customer loan data from an important bank (and cash issuer) in Taiwan. The data has 30,000 records, and includes features like loan applicant's age, education status, marital status, and payment history. It was gathered in 2005. The machine learning task associated with this dataset is identifying customers at risk of defaulting on their loan payments. Sensitive attributes include sex and education status.
    \item {\bf Bangladeshi Diabetes Risk Assessment\footnote{\url{https://archive.ics.uci.edu/ml/datasets/Early+stage+diabetes+risk+prediction+dataset.}}~\cite{islam2020likelihood}.} This dataset has 520 patient records and contains information on diabetes-related symptoms, obtained through a questionnaire carried out by the Sylhet Diabetes Hospital in Sylhet, Bangladesh. The prediction task associated with the dataset is identifying individuals at risk of developing early stage data, and the academic work accompanying the data was published in 2020. Sensitive attributes include age and sex.  
    \item {\bf Folktables\footnote{\url{https://github.com/zykls/folktables}}~\cite{ding2021retiring}.} This data is from the American Community Survey Public Use Microdata Samples (ACM PUMS), and contains individual- and household-level data related to income, employment, health, transportation, and housing in the United States. The data is updated yearly, is available at both the national or state level, and contains millions of records. Sensitive attributes include race and sex. We look at 5 separate pre-defined prediction tasks, using data from Massachusetts in 2018 (documentation can be found in the footnoted GitHub repository).
\end{itemize}

\section{Quadratic Mixed Integer Linear Program Details}
\label{appendix:mip}

In this section, we present the Mixed Integer Linear Program used in our experiment in Section~\ref{sec:experiments}.  The objective function is to maximize the \ppv of a selection of \rc observations, subject to 5 constraints: (1) \rc observations must be selected, (2) the classifier has \emph{at most} a pre-defined \ppv, (3-5) fairness constraints for \fnr, \fpr, and \ppv are met. All data, code, and experimental results are available in a GitHub repository at \url{https://anonymous.4open.science/r/fairness-trade-offs-facct-submission-D2A4/README.md}.  The repository will be made public upon publication. 

Note that the fairness constraints are enforced using \emph{disparity ratios} where $0.8 \leq disparity \leq 1.2$, rather than a $\pm \epsilon$ distance between group metrics. We made this decision because disparity ratios are more robust to small real values. For example, consider a model that has $\fpr_1 = 0.002$ for one group and $\fpr_2 = 0.04$ for the other. If $\epsilon=0.05$, technically these values would satisfy a fairness constraint where $|\fpr_1 - \fpr_2| <= \epsilon$, but this is likely not desirable to practitioners. However, using disparity ratios, this scenario would be considered unfair since $0.8 \nleq \frac{\fpr_1}{\fpr_2} \leq 1.2$.

We begin with the following Mixed Integer Linear Program:

\begin{equation*}
\begin{array}{llll}
\text{maximize}_{\displaystyle x_i}  & \displaystyle\sum\limits_{i=1}^{n}  x_i \cdot l_i & &\\

\text{subject to} & \displaystyle\sum\limits_{i=1}^{n}  x_i = k,& & \text{(KLS)}\\

& \frac{\displaystyle\sum\limits_{i=1}^{n}  x_i\cdot(1-l_i) \cdot g_{j_i}}{\displaystyle\sum\limits_{i=1}^{n}(1-l_i) \cdot g_{j_i}} \leq ub\cdot \frac{\displaystyle\sum\limits_{i=1}^{n} x_i\cdot(1-l_i) \cdot g_{ref_i}}{\displaystyle\sum\limits_{i=1}^{n} (1-l_i) \cdot g_{ref_i}},  & \forall j \in G, j \ne ref & \text{(FPRU)}\\

& \frac{\displaystyle\sum\limits_{i=1}^{n}  x_i\cdot(1-l_i) \cdot g_{j_i}}{\displaystyle\sum\limits_{i=1}^{n}(1-l_i) \cdot g_{j_i}} \geq lb\cdot \frac{\displaystyle\sum\limits_{i=1}^{n} x_i\cdot(1-l_i) \cdot g_{ref_i}}{\displaystyle\sum\limits_{i=1}^{n} (1-l_i) \cdot g_{ref_i}},  & \forall j \in G, j \ne ref & \text{(FPRL)}\\

& \frac{\displaystyle\sum\limits_{i=1}^{n}  (1-x_i)\cdot l_i \cdot g_{j_i}}{\displaystyle\sum\limits_{i=1}^{n} l_i \cdot g_{j_i}} \leq ub\cdot \frac{\displaystyle\sum\limits_{i=1}^{n} (1-x_i)\cdot l_i \cdot g_{ref_i}}{\displaystyle\sum\limits_{i=1}^{n} l_i \cdot g_{ref_i}},  & \forall j \in G, j \ne ref & \text{(FNRU)}\\

& \frac{\displaystyle\sum\limits_{i=1}^{n}  (1-x_i)\cdot l_i \cdot g_{j_i}}{\displaystyle\sum\limits_{i=1}^{n} l_i \cdot g_{j_i}} \geq lb\cdot \frac{\displaystyle\sum\limits_{i=1}^{n} (1-x_i)\cdot l_i \cdot g_{ref_i}}{\displaystyle\sum\limits_{i=1}^{n} l_i \cdot g_{ref_i}},  & \forall j \in G, j \ne ref & \text{(FNRL)}\\

& \frac{\displaystyle\sum\limits_{i=1}^{n}  x_i\cdot l_i \cdot g_{j_i}}{\displaystyle\sum\limits_{i=1}^{n} x_i \cdot g_{j_i}} \leq ub\cdot \frac{\displaystyle\sum\limits_{i=1}^{n} x_i\cdot l_i \cdot g_{ref_i}}{\displaystyle\sum\limits_{i=1}^{n} x_i \cdot g_{ref_i}},  & \forall j \in G, j \ne ref & \text{(PPVU)}\\

& \frac{\displaystyle\sum\limits_{i=1}^{n}  x_i\cdot l_i \cdot g_{j_i}}{\displaystyle\sum\limits_{i=1}^{n} x_i \cdot g_{j_i}} \geq lb\cdot \frac{\displaystyle\sum\limits_{i=1}^{n} x_i\cdot l_i \cdot g_{ref_i}}{\displaystyle\sum\limits_{i=1}^{n} x_i \cdot g_{ref_i}},  & \forall j \in G, j \ne ref & \text{(PPVL)}\\

& &x_{i}, l_{i} \in \{0,1\}, &i=1 ,\dots, n  \\
& &g_{j_i} \in \{0,1\}, & j \in G  \\

\end{array}
\end{equation*}

where $N$ is the set of entities, $n = |N|$; $x$ is a binary array of length $n$ where entry $x_i$ indicates whether or not entity $i \in N$ is included in the final list; $l$ is a binary array of length $n$ such that $l_i = 1$ if the outcome entity $x_i, i \in N$ is 1 and 0 otherwise; $G$ is a set of protected groups; $g_j$ is a binary array of length $n$ where entry $g_{j_i}$ indicates whether or not entry entity $i \in N$ is in the group $j$ (note that group $g_{ref}$ is the reference group for disparity calculations); $k$ is the final list size; $ub$ and $lb$ are the  upper and lower bounds for the disparity ratios, respectively.

KLS is the ``k-list-size'' constraint, FPRU and FPRL are the upper and lower bounds for the False Positive Rate, respectively, FNRU and FNRL are the upper and lower bounds for the False Negative Rate, respectively, and PPVU and PPVL are the upper and lower bounds for the \ppv respectively.

The \ppv constraints (PPVU and PPVL) halt the problem from being the Mixed Integer Linear Programming problem (MILP). We were inspired by an approach that was used when faced with a similar obstacle in creating MFOpt (Multiple Fairness Optimization Framework)~\cite{hsu2022pushing}, and propose a reformulation of the MIP in a way where we can apply the normalized multiparametric disaggregation technique (NMDT \cite{andrade2019nmdt}). We go through the following four steps:

{\bf Step 1.} Make the following substitution:

\begin{equation*}
    \displaystyle P_{g_j} =\sum\limits_{i=1}^{n}  x_i\cdot g_{j_i}\quad\text{and}\quad \displaystyle T_{g_j} = \frac{\displaystyle\sum\limits_{i=1}^{n}  x_i\cdot l_i\cdot g_{j_i}}{\displaystyle \sum\limits_{i=1}^{n}  x_i\cdot g_{j_i}}
\end{equation*}

Note that $P_{g_j}$ is the number of entities from the group $g_j$ that are included in the final list, and $T_{g_j}$ is the PPV for the group $g_j$.

{\bf Step 2.} Find the upper $PU_{g_j}$ and the lower $PL_{g_j}$ bounds for each $P_{g_j}$ by solving the following MILP:
\begin{equation*}
\begin{array}{ll}
\text{maximize}_{\displaystyle x_i}  & \displaystyle P_{g_j}\\

\text{subject to} &  \text{(KLS)}\\

&  \text{(FPRU)},\quad \text{(FPRL)}\\
&  \text{(FNRU)}, \quad \text{(FNRL)}\\

\end{array}
\end{equation*}

{\bf Step 3.} Find the upper $TU_{g_j}$ and the lower $TL_{g_j}$ bounds for each $T_{g_j}$ by solving the following MIP with linear constraints but with the fractional objective:
\begin{equation*}
\begin{array}{ll}
\text{maximize}_{\displaystyle x_i}  & \displaystyle T_{g_j}\\

\text{subject to} &  \text{(KLS)}\\

&  \text{(FPRU)},\quad \text{(FPRL)}\\
&  \text{(FNRU)}, \quad \text{(FNRL)}\\

\end{array}
\end{equation*}

To solve we use the Charnes-Cooper transformation \cite{charnes1962fractional}.

{\bf Step 4.} Reformulate the initial optimization problem in terms of $P_{g_j}$ and $T_{g_j}$ with corresponding lower and upper bounds from steps 2 and 3 in order to use NMDT transformation, so that it can be easily handled by MIP solver \cite{gurobi2022nmdt}.  Also, note that all denominators there are constant for the given dataset.

\begin{equation*}
\begin{array}{llll}
\text{maximize}  & \displaystyle\sum\limits_{j=1}^{n}  T_{g_j} \cdot P_{g_j} & &\\

\text{subject to} & \displaystyle\sum\limits_{j=1}^{n}  P_{g_j} = k,& & \text{(KLS)}\\

& \frac{\displaystyle  P_{g_j}\cdot(1-T_{g_j})}{\displaystyle\sum\limits_{i=1}^{n}(1-l_i) \cdot g_{j_i}} \leq ub\cdot \frac{\displaystyle P_{g_{ref}}\cdot(1-T_{g_{ref}})}{\displaystyle\sum\limits_{i=1}^{n} (1-l_i) \cdot g_{ref_i}},  & \forall j \in G, j \ne ref & \text{(FPRU)}\\

& \frac{\displaystyle  P_{g_j}\cdot(1-T_{g_j})}{\displaystyle\sum\limits_{i=1}^{n}(1-l_i) \cdot g_{j_i}} \geq lb\cdot \frac{\displaystyle P_{g_{ref}}\cdot(1-T_{g_{ref}})}{\displaystyle\sum\limits_{i=1}^{n} (1-l_i) \cdot g_{ref_i}},  & \forall j \in G, j \ne ref & \text{(FPRL)}\\

& 1- \frac{\displaystyle  P_{g_j}\cdot T_{g_j}}{\displaystyle\sum\limits_{i=1}^{n} l_i \cdot g_{j_i}} \leq ub \cdot \left(1- \frac{\displaystyle \displaystyle  P_{g_{ref}}\cdot T_{g_{ref}}}{\displaystyle\sum\limits_{i=1}^{n} l_i \cdot g_{ref_i}}\right),  & \forall j \in G, j \ne ref & \text{(FNRU)}\\

& 1- \frac{\displaystyle  P_{g_j}\cdot T_{g_j}}{\displaystyle\sum\limits_{i=1}^{n} l_i \cdot g_{j_i}} \geq lb \cdot \left(1- \frac{\displaystyle \displaystyle  P_{g_{ref}}\cdot T_{g_{ref}}}{\displaystyle\sum\limits_{i=1}^{n} l_i \cdot g_{ref_i}}\right),  & \forall j \in G, j \ne ref & \text{(FNRL)}\\

& \qquad\displaystyle T_{g_j} \leq ub\cdot T_{g_{ref}}  & \forall j \in G, j \ne ref & \text{(PPVU)}\\

& \qquad\displaystyle T_{g_j}\leq lb\cdot T_{g_{ref}}  & \forall j \in G, j \ne ref & \text{(PPVL)}\\

& &P_j \in \{PL_{g_j},\dots, PU_{g_j}\}, & j \in G  \\

& &T_j \in [TL_{g_j}, TU_{g_j}], & j \in G  \\

& &l_{i} \in \{0,1\}, &i=1 ,\dots, n  \\

& &g_{j_i} \in \{0,1\}, & j \in G  \\

\end{array}
\end{equation*}

Note that $P_{g_j}$ are integer variables, and $T_{g_j}$ are continuous  variables, but with bounds found in step 3 (
$T_{g_j} \in [TL_{g_j}, TU_{g_j}]$), and precision factor $p$ as a negative integer, we can represent this continuous variable exactly as 
\begin{equation*}
\begin{array}{ll}
  & \displaystyle T_{g_j} = (TU_{g_j}- TL_{g_j})\cdot \lambda + TL_{g_j}\\
 \text{where} &\\
 &  \displaystyle\lambda = \sum\limits_{m \in \{-p,\dots, -1\}}  2^m\cdot z_{m} \\

\end{array}
\end{equation*}
and $z_m \in \{0, 1\}$ are binary optimization variables.

\section{Experimental Results}
\label{app:exp_results}

Our full experimental results can be found in the Python notebooks in the ``experiments'' folder of our Github repository.\footnote{\url{https://anonymous.4open.science/r/fairness-trade-offs-facct-submission-D2A4/}} In this section, we wanted to include examples of the output of our experiment for each \emph{Dataset (Outcome), sensitive attribute} pair. Here we highlight two such pairs from the EIE dataset, which can be seen in Figure~\ref{fig:k_disp}. Plot (a) shows results for the \emph{EIE (Geography), territory} pair, and plot (b) shows the results for the \emph{EIE (Ukrainian), territory} pair. Each of those plots contains two subplots. On top, it shows the \ppv (Precision) and Recall (dotted lines) of an unconstrained linear program that has the specified \ppv. The solid lines show the \ppv and Recall of the selected sets. The bottom plot shows the disparity of each metric, where the dashed lines show the limits of $1.2$ and $0.8$. We can tell when a model is no longer optimal when the constrained \ppv and Recall meaningfully deviate from the unconstrained \ppv and Recall. Note that in some experiments, the \ppv, \fpr, and \fnr disparities may be outside of the disparity window ($\lbrack 0.8, 1.2 \rbrack$) --- but these instances are either pathological or due to a rounding error. The pathological cases occur when there is only one or two False Positives or False Negatives in a group.

\begin{figure}
\centering
\subfloat[]
{\includegraphics[width=.5\textwidth]{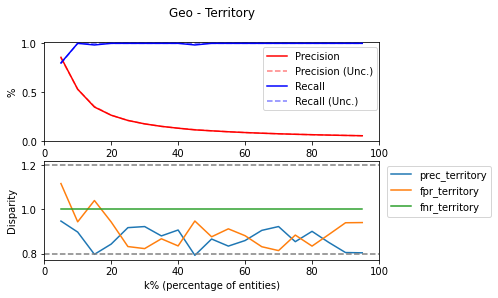}}
\subfloat[]
{\includegraphics[width=.5\textwidth]{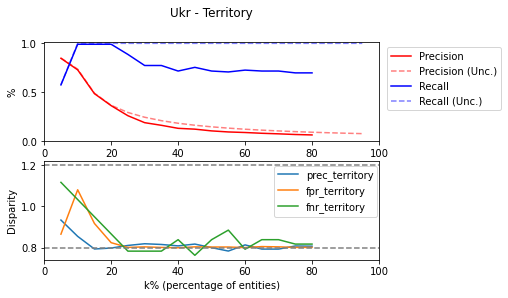}
}

\caption{Plot (a) shows that over all values of \rc, the \ppv and Recall of the selected set do not meaningfully deviate from an unconstrained model, and that the \ppv, \fpr, and \fnr remain within the bounds of the disparity window for all values; plot (b) shows that over the \rc range of $\lbrack 5,20 \rbrack$, the \ppv and Recall of the found sets do not deviate from the unconstrained Precision and Recall}
\label{fig:k_disp}
\end{figure}

\begin{itemize}
    \item Full experimental results: Table~\ref{tab:exp_results}
    \item Intersectional results: Table~\ref{tab:exp_intersect_results}
    \item Sample size sensitivity analysis: Table~\ref{tab:exp_results_sample_sa}
    \item \ppv sensitivity analysis: Table~\ref{tab:exp_results_ppv_sa}
\end{itemize}

\pagebreak

\begin{table}[H]
\caption{Emperical results}
\label{tab:exp_results}
\small
\begin{tabular}
{|p{3cm}H|p{1.5cm}|p{1.2cm}H|p{3.7cm}|p{1.8cm}|p{1cm}|}
\hline
\textbf{Dataset (Outcome)} & \textbf{Number of Samples} & \textbf{Overall Prevelance (\%)} & \textbf{Sensitive Attribute} & \textbf{Group Distribution (\%)} & \textbf{Group Prevelance (\%)} & \textbf{Maximum Prevelance Difference (\%)} & \textbf{Optimal \rc Range\textsuperscript{1}} \\ \hline
EIE (Ukranian) & 1445 & 7.34 & Sex & Female: 51.97; Male: 48.03 & Female: 4.26; Male: 10.66 & 6.4 & None \\ \hline
EIE (Ukranian) & 1445 & 7.34 & Territory & Rural: 25.33; Urban: 74.67 & Rural: 10.38; Urban: 6.3 & 4.08 & {[}5,20{]} \\ \hline
EIE (History) & 1994 & 18 & Sex & Female: 54.36; Male: 45.64 & Female: 14.48; Male: 22.2 & 7.72 & {[}5,60{]} \\ \hline
EIE (History) & 1994 & 18 & Territory & Rural: 29.69; Urban: 70.31 & Rural: 20.27; Urban: 17.05 & 3.22 & All \\ \hline
EIE (Math) & 1222 & 31.1 & Sex & Female: 48.69; Male: 51.31 & Female: 30.92; Male: 31.26 & 0.34 & All \\ \hline
EIE (Math) & 1222 & 31.1 & Territory & Rural: 25.2; Urban: 74.8 & Rural: 39.94; Urban: 28.12 & 11.82 & {[}5,90{]} \\ \hline
EIE (Physics) & 1045 & 8.33 & Sex & Female: 14.64; Male: 85.36 & Female: 10.46; Male: 7.96 & 2.5 & All \\ \hline
EIE (Physics) & 1045 & 8.33 & Territory & Rural: 24.4; Urban: 75.6 & Rural: 12.16; Urban: 7.09 & 5.07 & {[}5,20{]} \\ \hline
EIE (Chemsitry) & 1030 & 10.68 & Sex & Female: 63.4; Male: 36.6 & Female: 11.18; Male: 9.81 & 1.37 & All \\ \hline
EIE (Chemsitry) & 1030 & 10.68 & Territory & Rural: 22.23; Urban: 77.77 & Rural: 15.72; Urban: 9.24 & 6.48 & {[}5,25{]} \\ \hline
EIE (Geography) & 1132 & 5.3 & Sex & Female: 48.23; Male: 51.77 & Female: 4.21; Male: 6.31 & 2.1 & All \\ \hline
EIE (Geography) & 1132 & 5.3 & Territory & Rural: 27.47; Urban: 72.53 & Rural: 6.43; Urban: 4.87 & 1.56 & All \\ \hline
EIE (English) & 1278 & 10.64 & Sex & Female: 50.78; Male: 49.22 & Female: 9.55; Male: 11.76 & 2.21 & All \\ \hline
EIE (English) & 1278 & 10.64 & Territory & Rural: 15.34; Urban: 84.66 & Rural: 17.35; Urban: 9.43 & 7.92 & {[}5,20{]} \\ \hline
EIE (German) & 1004 & 11.35 & Sex & Female: 65.54; Male: 34.46 & Female: 10.03; Male: 13.87 & 3.84 & {[}5,90{]} \\ \hline
\textit{EIE (German)} & 1004 & 11.35 & Territory & Rural: 15.14; Urban: 84.86 & Rural: 26.97; Urban: 8.57 & 18.4 & None \\ \hline
Folktables (Employment) & 1970 & 46.45 & Sex & Female: 51.68; Male: 48.32 & Female: 44.3; Male: 48.74 & 4.44 & All \\ \hline
Folktables (Employment) & 1970 & 46.45 & Race & Asian alone: 8.63; Black or African American alone: 12.18; Other: 8.88;   White: 70.3 & Asian alone: 50.0; Black or African American alone: 42.08; Other: 41.71;   White: 47.36 & 8.29 & All \\ \hline
Folktables (Income) & 1031 & 41.51 & Sex & Female: 49.37; Male: 50.63 & Female: 35.76; Male: 47.13 & 11.37 & All \\ \hline
Folktables (Income) & 1031 & 41.51 & Race & Asian alone: 9.21; Black or African American alone: 11.25; Other: 8.05;   White: 71.48 & Asian alone: 41.05; Black or African American alone: 31.9; Other: 25.3;   White: 44.91 & 19.61 & {[}5,50{]} \\ \hline
Folktables (Medical Cover) & 1352 & 40.09 & Sex & Female: 56.14; Male: 43.86 & Female: 38.87; Male: 41.65 & 2.78 & All \\ \hline
Folktables (Medical Cover) & 1352 & 40.09 & Race & Asian alone: 10.43; Black or African American alone: 15.24; Other: 11.76;   White: 62.57 & Asian alone: 41.13; Black or African American alone: 52.43; Other: 49.06;   White: 35.22 & 17.21 & {[}5,60{]} \\ \hline
Folktables (Mobility) & 1214 & 78.17 & Sex & Female: 50.41; Male: 49.59 & Female: 77.29; Male: 79.07 & 1.78 & {[}5,70{]} \\ \hline
Folktables (Mobility) & 1214 & 78.17 & Race & Asian alone: 10.79; Black or African American alone: 13.67; Other: 10.79;   White: 64.74 & Asian alone: 75.57; Black or African American alone: 81.33; Other: 81.68;   White: 77.35 & 6.11 & {[}5,60{]} \\ \hline
Folktables (Travel Time) & 1824 & 53.78 & Sex & Female: 49.4; Male: 50.6 & Female: 51.72; Male: 55.8 & 4.08 & All \\ \hline
Folktables (Travel Time) & 1824 & 53.78 & Race & Asian alone: 9.27; Black or African American alone: 11.02; Other: 8.06;   White: 71.66 & Asian alone: 66.86; Black or African American alone: 69.15; Other: 64.63;   White: 48.51 & 20.64 & {[}5,55{]} \\ \hline
Loan Assessment & 1200 & 22.17 & Education level & High school: 16.42; Not high school or university or greater: 1.58;   University or greater: 82.0 & High school: 25.38; Not high school or university or greater: 10.53;   University or greater: 21.75 & 14.85 & None \\ \hline
Loan Assessment & 1200 & 22.17 & Sex & Female: 60.33; Male: 39.67 & Female: 20.86; Male: 24.16 & 3.3 & All \\ \hline
Diabetes Risk Assessment & 520 & 61.54 & Sex & Female: 36.92; Male: 63.08 & Female: 90.1; Male: 44.82 & 45.28 & {[}5,10{]} \\ \hline
Student Performance & 1003 & 22.03 & Sex & Female: 56.63; Male: 43.37 & Female: 21.13; Male: 23.22 & 2.09 & All \\ \hline
Student Performance & 1003 & 22.03 & Address & Rural: 27.42; Urban: 72.58 & Rural: 27.27; Urban: 20.05 & 7.22 & All \\ \hline
Student Performance & 1003 & 22.03 & Parent's education level & High school: 24.13; Not high school or university or greater: 40.88;   University or greater: 35.0 & High school: 23.55; Not high school or university or greater: 25.85;   University or greater: 16.52 & 9.33 & {[}5,55{]} \\ \hline
\end{tabular}
\begin{flushleft}
\hspace{0.35cm}Notes: $^1$\rc is expressed as the percentage of the number of samples
\end{flushleft}
\end{table}

\pagebreak

\begin{table}[H]
\small
\caption{Experimental results for FolkTables with intersectionality}
\label{tab:exp_intersect_results}
\begin{tabular}{|p{1.4cm}H|p{1.2cm}|p{1.5cm}H|p{5.5cm}|p{1.8cm}|p{1cm}|}
\hline
\textbf{Dataset (Outcome)} & \textbf{Number of Samples} & \textbf{Overall Prevelance (\%)} & \textbf{Sensitive Attribute} & \textbf{Group Distribution (\%)} & \textbf{Group Prevelance (\%)} & \textbf{Maximum Prevelance Difference (\%)} & \textbf{Optimal \rc Range} \\ \hline 
Folktables (Employment) & 1970 & 46.45 & Race and sex & Asian alone, Female: 4.52; Asian alone, Male: 4.11; Black or African American alone, Female: 6.55; Black or African American alone, Male: 5.63; Other; Female: 4.62; Other; Male: 4.26; White, Female: 35.99; White, Male: 34.31 & Asian alone, Female: 46.07; Asian alone, Male: 54.32; Black or African American alone, Female: 44.19; Black or African American alone, Male: 39.64; Other; Female: 39.56; Other; Male: 44.05; White, Female: 44.71; White, Male: 50.15 & 14.76 & All \\ \hline
Folktables (Income) & 1031 & 41.51 & Race and sex & Asian alone, Female: 4.46; Asian alone, Male: 4.75; Black or African American alone, Female: 6.3; Black or African American alone, Male: 4.95; Other; Female: 3.98; Other; Male: 4.07; White, Female: 34.63; White, Male: 36.86 & Asian alone, Female: 39.13; Asian alone, Male: 42.86; Black or African American alone, Female: 30.77; Black or African American alone, Male: 33.33; Other; Female: 21.95; Other; Male: 28.57; White, Female: 37.82; White, Male: 51.58 & 29.63 & {[}5,35{]} \\ \hline
Folktables (Public Medical Coverage) & 1352 & 40.09 & Race and sex & Asian alone, Female: 6.07; Asian alone, Male: 4.36; Black or African American alone, Female: 7.84; Black or African American alone, Male: 7.4; Other; Female: 6.51; Other; Male: 5.25; White, Female: 35.72; White, Male: 26.85 & Asian alone, Female: 40.24; Asian alone, Male: 42.37; Black or African American alone, Female: 53.77; Black or African American alone, Male: 51.0; Other; Female: 51.14; Other; Male: 46.48; White, Female: 33.13; White, Male: 38.02 & 20.64 & All \\ \hline
Folktables (Mobility) & 1214 & 78.17 & Race and sex & Asian alone, Female: 5.52; Asian alone, Male: 5.27; Black or African American alone, Female: 7.0; Black or African American alone, Male: 6.67; Other; Female: 5.44; Other; Male: 5.35; White, Female: 32.45; White, Male: 32.29 & Asian alone, Female: 74.63; Asian alone, Male: 76.56; Black or African American alone, Female: 81.18; Black or African American alone, Male: 81.48; Other; Female: 81.82; Other; Male: 81.54; White, Female: 76.14; White, Male: 78.57 & 7.19 & {[}5,60{]} \\ \hline
Folktables (Travel Time) & 1824 & 53.78 & Race and sex & Asian alone, Female: 4.5; Asian alone, Male: 4.77; Black or African American alone, Female: 6.25; Black or African American alone, Male: 4.77; Other; Female: 4.0; Other; Male: 4.06; White, Female: 34.65; White, Male: 37.01 & Asian alone, Female: 65.85; Asian alone, Male: 67.82; Black or African American alone, Female: 69.3; Black or African American alone, Male: 68.97; Other; Female: 63.01; Other; Male: 66.22; White, Female: 45.41; White, Male: 51.41 & 23.89 & {[}5,80{]} \\ \hline
\end{tabular}
\begin{flushleft}
\end{flushleft}
\end{table}

\pagebreak

\begin{table}[]
\caption{Sample size sensitivity analysis}
\label{tab:exp_results_sample_sa}
\small
\begin{tabular}
{|p{3cm}|p{1cm}|p{1.5cm}|p{1.2cm}H|p{3.7cm}|p{1.8cm}|p{1cm}|}
\hline
\textbf{Dataset (Outcome)} & \textbf{Number of Samples} & \textbf{Overall Prevelance (\%)} & \textbf{Sensitive Attribute} & \textbf{Group Distribution (\%)} & \textbf{Group Prevelance (\%)} & \textbf{Maximum Group Prevelance Difference (\%)} & \textbf{Optimal \rc Range} \\ \hline
EIE (Ukranian) & 1445 & 7.34 & Sex & Female: 51.97; Male: 48.03 & Female: 4.26; Male: 10.66 & 6.4 & None \\ \hline
EIE (Ukranian) & 1445 & 7.34 & Territory & Rural: 25.33; Urban: 74.67 & Rural: 10.38; Urban: 6.3 & 4.08 & {[}5,20{]} \\ \hline
EIE (History) & 1994 & 18 & Sex & Female: 54.36; Male: 45.64 & Female: 14.48; Male: 22.2 & 7.72 & {[}5,60{]} \\ \hline
EIE (History) & 1994 & 18 & Territory & Rural: 29.69; Urban: 70.31 & Rural: 20.27; Urban: 17.05 & 3.22 & All \\ \hline
EIE (Math) & 1222 & 31.1 & Sex & Female: 48.69; Male: 51.31 & Female: 30.92; Male: 31.26 & 0.34 & All \\ \hline
EIE (Math) & 1222 & 31.1 & Territory & Rural: 25.2; Urban: 74.8 & Rural: 39.94; Urban: 28.12 & 11.82 & {[}5,90{]} \\ \hline
EIE (Ukranian) & 5777 & 7.34 & Sex & Female: 51.96; Male: 48.04 & Female: 4.26; Male: 10.67 & 6.41 & None \\ \hline
EIE (Ukranian) & 5777 & 7.34 & Territory & Rural: 25.34; Urban: 74.66 & Rural: 10.38; Urban: 6.31 & 4.07 & {[}5,20{]} \\ \hline
EIE (History) & 5982 & 17.99 & Sex & Female: 54.35; Male: 45.65 & Female: 14.43; Male: 22.23 & 7.8 & {[}5,60{]} \\ \hline
EIE (History) & 5982 & 17.99 & Territory & Rural: 29.69; Urban: 70.31 & Rural: 20.21; Urban: 17.05 & 3.16 & All \\ \hline
EIE (Math) & 7327 & 31.05 & Sex & Female: 48.7; Male: 51.3 & Female: 30.91; Male: 31.18 & 0.27 & All \\ \hline
EIE (Math) & 7327 & 31.05 & Territory & Rural: 25.14; Urban: 74.86 & Rural: 39.74; Urban: 28.13 & 11.61 & {[}5,90{]} \\ \hline
EIE (Ukranian) & 51991 & 7.34 & Sex & Female: 51.97; Male: 48.03 & Female: 4.26; Male: 10.68 & 6.42 & None \\ \hline
EIE (Ukranian) & 51991 & 7.34 & Territory & Rural: 25.34; Urban: 74.66 & Rural: 10.37; Urban: 6.31 & 4.06 & {[}5,20{]} \\ \hline
EIE (History) & 51837 & 17.98 & Sex & Female: 54.34; Male: 45.66 & Female: 14.43; Male: 22.2 & 7.77 & {[}5,60{]} \\ \hline
EIE (History) & 51837 & 17.98 & Territory & Rural: 29.69; Urban: 70.31 & Rural: 20.21; Urban: 17.03 & 3.18 & All \\ \hline
EIE (Math) & 51283 & 31.05 & Sex & Female: 48.68; Male: 51.32 & Female: 30.92; Male: 31.18 & 0.26 & All \\ \hline
EIE (Math) & 51283 & 31.05 & Territory & Rural: 25.14; Urban: 74.86 & Rural: 39.75; Urban: 28.13 & 11.62 & {[}5,90{]} \\ \hline
\end{tabular}
\end{table}

\clearpage

\begin{table}[H]
\caption{\ppv sensitivity analysis}
\label{tab:exp_results_ppv_sa}
\small
\begin{tabular}
{|p{3cm}|p{1.5cm}|p{1.2cm}|p{1.8cm}|p{1.5cm}|p{1.5cm}|}
\hline
\textbf{Dataset (Outcome)} & \textbf{Overall Prevelance (\%)} & \textbf{Sensitive Attribute} & \textbf{Maximum Group Prevelance Difference (\%)} & \textbf{Optimal \rc Range \linebreak (\ppv = 0.7)} & \textbf{Optimal \rc Range \linebreak (\ppv = 0.85)} \\ \hline
EIE (Ukranian) & 7.34 & Sex & 6.4 & None & None \\ \hline
EIE (Ukranian) & 7.34 & Territory & 4.08 & {[}5,20{]} & {[}5,20{]} \\ \hline
EIE (History) & 18 & Sex & 7.72 & {[}5,60{]} & {[}5,60{]} \\ \hline
EIE (History) & 18 & Territory & 3.22 & All & All \\ \hline
EIE (Math) & 31.1 & Sex & 0.34 & All & All \\ \hline
EIE (Math) & 31.1 & Territory & 11.82 & {[}5,90{]} & {[}5,90{]} \\ \hline
EIE (Physics) & 8.33 & Sex & 2.5 & All & {[}5,90{]} \\ \hline
EIE (Physics) & 8.33 & Territory & 5.07 & {[}5,20{]} & {[}5,20{]} \\ \hline
EIE (Chemsitry) & 10.68 & Sex & 1.37 & All & All \\ \hline
EIE (Chemsitry) & 10.68 & Territory & 6.48 & {[}5,25{]} & {[}5,25{]} \\ \hline
EIE (Geography) & 5.3 & Sex & 2.1 & All & {[}5,90{]} \\ \hline
EIE (Geography) & 5.3 & Territory & 1.56 & All & All \\ \hline
EIE (English) & 10.64 & Sex & 2.21 & All & All \\ \hline
EIE (English) & 10.64 & Territory & 7.92 & {[}5,20{]} & {[}5,20{]} \\ \hline
EIE (German) & 11.35 & Sex & 3.84 & {[}5,90{]} & {[}5,90{]} \\ \hline
EIE (German) & 11.35 & Territory & 18.4 & None & {[}5,10{]} \\ \hline
Folktables (Employment) & 46.45 & Sex & 4.44 & All & All \\ \hline
Folktables (Employment) & 46.45 & Race & 8.29 & All & All \\ \hline
Folktables (Income) & 41.51 & Sex & 11.37 & All & All \\ \hline
Folktables (Income) & 41.51 & Race & 19.61 & {[}5,50{]} & {[}5,55{]} \\ \hline
Folktables (Public Medical Coverage) & 40.09 & Sex & 2.78 & All & All \\ \hline
Folktables (Public Medical Coverage) & 40.09 & Race & 17.21 & {[}5,60{]} & {[}5,60{]} \\ \hline
Folktables (Mobility) & 78.17 & Sex & 1.78 & {[}5,70{]} & All \\ \hline
Folktables (Mobility) & 78.17 & Race & 6.11 & {[}5,60{]} & All \\ \hline
Folktables (Travel Time) & 53.78 & Sex & 4.08 & All & All \\ \hline
Folktables (Travel Time) & 53.78 & Race & 20.64 & {[}5,55{]} & {[}5,70{]} \\ \hline
Loan Assessment & 22.17 & Education level & 14.85 & None & None \\ \hline
Loan Assessment & 22.17 & Sex & 3.3 & All & All \\ \hline
Diabetes Risk Assessment & 61.54 & Sex & 45.28 & {[}5,10{]} & {[}5,10{]} \\ \hline
Student Performance & 22.03 & Sex & 2.09 & All & All \\ \hline
Student Performance & 22.03 & Address & 7.22 & All & All \\ \hline
Student Performance & 22.03 & Parent's education level & 9.33 & {[}5,55{]} & {[}5,55{]} \\ \hline
\end{tabular}
\end{table}

\section{List of all metrics and their equations}
\label{appendix:metrics}
\paragraph{Metrics} A traditional confusion matrix is a standard tool for understanding binary classification tasks, and details potential model outcomes, giving names to the relation between what the model predicts and what the ground truth labels actually are - see Table~\ref{tab:conf_matrix}. 
\begin{table}[]
    \centering
    \begin{tabular}{cc|cc}
\multicolumn{2}{c}{}
            &   \multicolumn{2}{c}{Actual} \\
    &       &   Positive (1) &   Negative (0)             \\ 
    \cline{2-4}
\multirow{2}{*}{\rotatebox[origin=c]{90}{Predicted}}
    & Positive (1)   & TP   & FP                 \\
    & Negative (0)   & FN    & TN                \\ 
    \cline{2-4}
    \end{tabular}
    \caption{Standard confusion matrix.}
    \label{tab:conf_matrix}
\end{table}
From the confusion matrix comes a set of standard metrics that capture relationships in the outcomes of a binary classification model. Here we detail potentially relevant metrics to this paper.
\begin{itemize}
    \item $TPR = \frac{TP}{TP + FN}$
    \item $TNR = \frac{TN}{TN + FP}$
    \item $FPR = \frac{FP}{FP + TN}$
    \item $FNR = \frac{FN}{FN+TP}$
    \item $PPV = \frac{TP}{TP+FP}$
    \item $\acc = \frac{TP+TN}{TP+FP+TN+FN}$
\end{itemize}

\paragraph{Binary Classification} We define binary classification as follows. Consider dataset $X$ consisting of observations for individuals $x_1,x_2,...,x_n$. For individual $x\in X$ with a target value of interest (or ``label'') $y \in Y$, where $Y \in \{0,1\}^n$, we seek to \textit{classify} $x$ correctly (i.e. assign label $y$ to $x$). More formally, we attempt to learn a function $\hat{Y}$ such that $\forall (x_i, y_i) \in X \cup Y$, $\hat{Y}(x_i) \rightarrow \hat{y_i}$ and $\hat{y_i} == y_i$.

Often, we utilize a statistical technique to find a function $\hat{Y}$ that produces a real valued score $\Tilde{y} \in (0,1)$, and to binarize the outputs we apply a standard thresholding function $\tau(\Tilde{y}) = \hat{y}$ to produce a binary label $\hat{y} \in \{0,1\}$. For example, if $\Tilde{y}$ is interpreted as a \textit{probability} of 1 being the correct label (a ``positive'' assignment), then $\tau$ thresholding at a greater than $\frac{1}{2}$ probability of positive class assignment is a way to convert from score or probability to concrete class.
$$\tau(\Tilde{y}) =\left\{\begin{array}{lr}
        1, & \text{for } \frac{1}{2} \leq \Tilde{y} \leq 1\\
        0, & \text{otherwise}
        \end{array}\right\}$$

However, in situations with a resource constraint $k$ that governs how many positive labels we are allowed to assign (say, in a college admissions scenario), we may be forced to adjust $\tau(\Tilde{y})$ to accept a function of $k$, i.e., $f(k) = t$ where $t \in [0,1]$ such that $\tau(\Tilde{y}, f(k))$ produces exactly $k$ positive classifications. For example: $$\tau(\Tilde{y}, f(k)) =\left\{\begin{array}{lr}
        1, & \text{for } f(k) \leq \Tilde{y} \leq 1\\
        0, & \text{otherwise}
        \end{array}\right\} s.t. \sum_{i=1}^n \mathbf{1}(\hat{y}_i = 1) = k$$

\paragraph{Fairness Considerations: Binary Classification with Sensitive Features} Often, when considering the algorithmic fairness of a binary classifier, we consider a \textit{sensitive} or \textit{protected} attribute in the data that denotes \textit{group membership}. For example, many datasets collected in social settings have information about the \textit{race} or \textit{gender} of individuals in the population. Both of these attributes are and should be ``protected,'' morally and lawfully. Thus, when we evaluate our binary classifier (say, along \fpr or \fnr), we can evaluate each metric for the entire population, and we can also evaluate each metric \textit{conditioned on group membership}. In the simplest case (which is our focus for much of this paper), our \textit{sensitive} attribute is binary, and thus we consider $\fpr_1$ and $\fpr_2$,  $\fnr_1$ and $\fnr_2$, etc. (metrics evaluated on the disjoint sets of outcomes based on group conditioning). 

\section{Analytical approach to characterizing the fairness region using \fpr, \fnr, \acc}
\label{appendix:imp_acc}

\begin{corollary}[Impossibility Result Variation \cite{DBLP:journals/bigdata/Chouldechova17} \cite{DBLP:conf/innovations/KleinbergMR17}]
Consider a binary classification setting. The relationship between \acc, \fnr, \fpr and $p$ can be characterized by:

\begin{align}
    ACC = (1 - FNR)p + (1 - FPR)(1 - p)
\end{align}
\label{theorem:acc_impossiblity}
\end{corollary}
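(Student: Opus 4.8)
The plan is to prove the identity by returning to the confusion-matrix definitions of all four quantities and performing a direct substitution. Write $\tp, \fp, \tn, \fn$ for the four cells of the confusion matrix (Table~\ref{tab:conf_matrix}) and let $n = \tp + \fp + \tn + \fn$ be the total number of individuals in the population. By definition the number of actual positives is $\tp + \fn$ and the number of actual negatives is $\tn + \fp$, so the prevalence is $\prev = (\tp + \fn)/n$ and hence $1 - \prev = (\tn + \fp)/n$.

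Next I would rewrite the two error rates in complementary form. From $\fnr = \fn/(\fn + \tp)$ we get $1 - \fnr = \tp/(\tp + \fn)$, the true positive rate; similarly $1 - \fpr = \tn/(\tn + \fp)$, the true negative rate. Substituting these together with the expressions for $\prev$ and $1-\prev$ into the right-hand side of the claimed equation, the factors $\tp + \fn$ and $\tn + \fp$ cancel, leaving
\begin{align*}
(1 - \fnr)\prev + (1 - \fpr)(1 - \prev) = \frac{\tp}{\tp + \fn}\cdot\frac{\tp + \fn}{n} + \frac{\tn}{\tn + \fp}\cdot\frac{\tn + \fp}{n} = \frac{\tp + \tn}{n},
\end{align*}
which is exactly $\acc$ by its definition in Appendix~\ref{appendix:metrics}. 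That finishes the argument.

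Since the statement is essentially the observation that accuracy is the prevalence-weighted average of the class-conditional accuracies, there is no genuine obstacle here; the only point that needs care is making sure the denominators used for $\prev$ (the whole population) and for $\fnr$ and $\fpr$ (the actual-positive and actual-negative subpopulations, respectively) are the ones that cancel cleanly, so that no hidden assumption about class balance sneaks in. One could alternatively derive the identity algebraically from Equation~\ref{impossibility_equation} by eliminating $\ppv$, but that route is longer and less transparent than the direct counting argument, so I would not pursue it.
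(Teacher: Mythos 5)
Your proof is correct and takes essentially the same route as the paper's: both expand everything in confusion-matrix terms and use the factorizations $\prev = (\tp+\fn)/n$, $1-\fnr = \tp/(\tp+\fn)$, $1-\fpr = \tn/(\tn+\fp)$ so the subpopulation denominators cancel against the prevalence weights. The only cosmetic difference is direction — the paper starts from $\acc$ and inserts the "clever 1" factors, while you start from the right-hand side and simplify — which is the same argument.
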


\begin{proof}
Consider the following statements over accuracy, and note that they apply overall as well as for for some Group $i$. Thus, each of these quantities could be subscripted with $i$ i.e. $ACC = ACC_i$, etc.
\begin{align}
    ACC &= \frac{TP + TN}{TP+TN+FP+FN} \\
    FNR &= 1-\frac{TP}{TP+ FN}\\
    FPR &= 1-\frac{TN}{TN+ FP}\\
    p &= \frac{TP + FN}{TP+TN+FP+FN} \\
    1-p &= \frac{TN + FP}{TP+TN+FP+FN}
\end{align}
Split the numerator for $ACC$, and multiply by a clever 1:
\begin{align}
    ACC &= \frac{TP}{TP+TN+FP+FN} + \frac{TN}{TP+TN+FP+FN} \\
    &= \frac{TP+FN}{TP+FN} \times \frac{TP}{TP+TN+FP+FN} + \frac{TN+FP}{TN+FP} \times \frac{TN}{TP+TN+FP+FN} \\
    &= \frac{TP}{TP+FN} \times \frac{TP+FN}{TP+TN+FP+FN} + \frac{TN}{TN+FP} \times \frac{TN+FP}{TP+TN+FP+FN} \\
    &= (1-FNR)p + (1-FPR)(1-p)
\end{align}
\end{proof}


\begin{lemma}[Expressing Fairness Area Variation]
Consider Corrolary~\ref{theorem:acc_impossiblity}. Assume that $p_2 = p_1 + \epsilon_p$, $\acc_2 = \acc_1 + \epsilon_\acc$, $\fpr_2 = \fpr_1 + \epsilon_\fpr$, and $\fnr_2 = \fnr_1 + \epsilon_\fnr$, where each $\epsilon_\fpr, \epsilon_\acc, \epsilon_\fnr, \epsilon_p \in (-1,1)$ term captures the difference between two groups for $FPR$, $ACC$, $FNR$, $p$ respectively. Then the following equality holds:
\begin{align}
\label{eq:variation_area_exp}
\fnr = \frac{-\epsilon_\fpr + \epsilon_\acc + \epsilon_\fpr~~p - \epsilon_\fnr~~p + \alpha \epsilon_p + \epsilon_\fpr~~\epsilon_p - \epsilon_\fnr~~\epsilon_p}{\epsilon_P}
\end{align}

\label{lemma:constraints}
\end{lemma}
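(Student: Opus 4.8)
The plan is to take the impossibility result from Corollary~\ref{theorem:acc_impossiblity}, write it down once for each group, and then substitute the relaxation parameters to eliminate variables until only $\fnr_1$ (written here as $\fnr$) and known/fixed quantities remain. First I would write the two instances of the governing equation explicitly:
\begin{align*}
\acc_1 &= (1 - \fnr_1)\prev_1 + (1 - \fpr_1)(1 - \prev_1),\\
\acc_2 &= (1 - \fnr_2)\prev_2 + (1 - \fpr_2)(1 - \prev_2).
\end{align*}
Then I would substitute $\acc_2 = \acc_1 + \epsilon_\acc$, $\prev_2 = \prev_1 + \epsilon_\prev$, $\fpr_2 = \fpr_1 + \epsilon_\fpr$, and $\fnr_2 = \fnr_1 + \epsilon_\fnr$ into the second equation so that everything is expressed in terms of the ``Group 1'' variables plus the $\epsilon$ terms.

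Next I would subtract the first equation from the (substituted) second equation. On the left this gives simply $\epsilon_\acc$. On the right, many of the pure ``Group 1'' terms cancel, and what survives is a collection of terms, each of which is a product of at most one $\epsilon$ with at most one Group-1 variable (plus the genuinely quadratic cross-terms $\epsilon_\fpr\epsilon_\prev$ and $\epsilon_\fnr\epsilon_\prev$). Concretely, after expanding $(1-\fnr_1-\epsilon_\fnr)(\prev_1+\epsilon_\prev)$ and $(1-\fpr_1-\epsilon_\fpr)(1-\prev_1-\epsilon_\prev)$ and subtracting, I expect the $\fnr_1$-dependence to enter only through the term $-\fnr_1\,\epsilon_\prev$ (the $-\fnr_1\prev_1$ piece cancels against its counterpart from the first equation). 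That is the crucial structural fact: $\fnr_1$ appears linearly and is multiplied exactly by $\epsilon_\prev$, so dividing through by $\epsilon_\prev$ isolates it.

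Then I would collect the remaining terms on the right-hand side — I expect $-\epsilon_\fpr$, $+\epsilon_\acc$, $+\epsilon_\fpr\prev_1$, $-\epsilon_\fnr\prev_1$, $+\fpr_1\epsilon_\prev$, $+\epsilon_\fpr\epsilon_\prev$, $-\epsilon_\fnr\epsilon_\prev$ — move them appropriately, and divide by $\epsilon_\prev$ to obtain exactly Equation~\ref{eq:variation_area_exp} (equivalently Equation~\ref{eq:variation_area_exp_body}, noting $\alpha$ there denotes $\fpr_1$). This division is legitimate provided $\epsilon_\prev \neq 0$, which is the interesting case anyway (the $\epsilon_\prev = 0$ case is one of the classical exceptions to the impossibility theorem).

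The main obstacle is purely bookkeeping: the subtraction produces on the order of a dozen monomials and it is easy to drop a sign or miss a cancellation, particularly among the constant and linear-in-$\prev_1$ terms. I would guard against this by organizing the expansion by degree in the $\epsilon$'s — first verifying that all $\epsilon$-free terms cancel (this is just the original impossibility identity for Group 1), then checking the terms linear in a single $\epsilon$, then the two quadratic cross-terms — rather than expanding everything at once. No genuinely hard step is involved; the content of the lemma is entirely that the algebra closes in the clean form stated, with $\fnr_1$ recoverable as an explicit rational function of quantities that (as the surrounding text emphasizes) are either known from the data ($\prev_1, \epsilon_\prev$) or chosen as tolerances.
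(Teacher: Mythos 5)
Your route is exactly the paper's: write the identity of Corollary~\ref{theorem:acc_impossiblity} once per group, substitute the four relaxations, subtract the Group-1 identity from the substituted Group-2 identity, and solve for $\fnr_1$ after dividing by $\epsilon_p\neq 0$ (a condition you are right to flag, and which the lemma statement itself omits). The one concrete issue is a sign: under your (and the lemma's) stated convention $\acc_2=\acc_1+\epsilon_\acc$, the subtraction gives $\epsilon_\acc=-\fnr_1\epsilon_p-\epsilon_\fnr p_1-\epsilon_\fnr\epsilon_p+\fpr_1\epsilon_p-\epsilon_\fpr+\epsilon_\fpr p_1+\epsilon_\fpr\epsilon_p$, so the solved expression carries $-\epsilon_\acc$, not the $+\epsilon_\acc$ in your list of surviving terms; the displayed formula with $+\epsilon_\acc$ corresponds to the opposite convention, which is what the paper's own proof implicitly uses (it places $+\epsilon_\acc$ on the Group-2 side of the equality, effectively $\acc_2=\acc_1-\epsilon_\acc$). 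Because $\epsilon_\acc$ is later allowed to range over the symmetric interval $(-\gamma,\gamma)$, this sign is immaterial for the downstream area result, but carried out literally your bookkeeping would not reproduce the displayed equation verbatim, and you would need to note the flipped convention (or the symmetry of the tolerance) to close that gap. Everything else is right and matches the paper: all $\epsilon$-free terms cancel by the Group-1 identity, $\fnr_1$ enters only through $-\fnr_1\epsilon_p$, and the coefficient of $\epsilon_p$ in the numerator is $\fpr_1$ (the $\alpha$ in the display).
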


\begin{proof} Consider \textit{Corollary}~\ref{theorem:acc_impossiblity} in the setting where there are two groups. Suppose $ACC_1 = ACC_2$. Then:
\begin{align}
    (1-FNR_1)p_1 + (1-FPR_1)(1-p_1) = (1-FNR_2)p_2 + (1-FPR_2)(1-p_2) 
\end{align}

Make substitutions respective to the assumptions made in Lemma~\ref{lemma:constraints} to find:
\begin{align}
    (1 - \fnr)p + (1-\fpr)(1-p) = (1 - (\fnr+\epsilon_\fnr)(p+\epsilon_p)) + (1-(\fpr+\epsilon_\fpr))(1-(p+\epsilon_p)) + \epsilon_\acc
\end{align}

Solving for $\fnr$ yields (\ref{eq:variation_area_exp}).


\end{proof}


\begin{lemma}[Closed-Form for Fairness Area Variation] \label{lemma:area_closed_form} 
Assume $\epsilon_p < 1 - p$. Allow $\pm \gamma$ to be the symmetric acceptable error (our ``fair'' relaxation) between groups for metrics $FPR$, $FNR$ and $ACC$. Consider the size of the space of possible $\epsilon_\fpr, \epsilon_\fnr, \epsilon_\acc$ assignments, given $\epsilon_p$ and $p$, that satisfy the constraints from Lemma~\ref{lemma:constraints}. We will denote the size of that space as $|A_f|$ (as a shorthand, we will call that the ``fairness region'', but the reality is more nuanced). For a set of fairness constraints $\epsilon_\fpr, \epsilon_\fnr, \epsilon_\acc \in (-\gamma,\gamma)$, where $|\gamma| \leq 1$ and $\gamma \neq 0$, we have that $|A_f|$ is simply:
\begin{align}
    |A_f| = \frac{4 \gamma}{\epsilon_p} - \frac{4 \gamma^2}{{\epsilon_p}^2}
\end{align} 

\end{lemma}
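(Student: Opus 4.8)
The plan is to collapse the governing equation of Lemma~\ref{lemma:constraints} into a single scalar constraint on $\fnr_1-\fpr_1$, pin down the exact interval of values that difference can realize as the relaxations range over the cube $[-\gamma,\gamma]^3$, and then read off the area of the resulting diagonal band inside the unit square by elementary planar geometry.

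First I would multiply Equation~\eqref{eq:variation_area_exp} through by $\epsilon_p$ and gather terms: the $\fpr_1\epsilon_p$ summand pairs off against $\fnr_1\epsilon_p$, leaving
\[
(\fnr_1-\fpr_1)\,\epsilon_p \;=\; \epsilon_\acc - \epsilon_\fpr\,(1-p-\epsilon_p) - \epsilon_\fnr\,(p+\epsilon_p)
\;=\; \epsilon_\acc - \epsilon_\fpr\,(1-p_2) - \epsilon_\fnr\,p_2,
\]
where $p_2=p+\epsilon_p$. Since $p$ and $\epsilon_p$ (hence $p_2$) are fixed a priori by the dataset, the right-hand side depends only on the three relaxation variables, so a point $(\fpr_1,\fnr_1)$ lies in the fairness region exactly when $\fnr_1-\fpr_1$ equals $\bigl(\epsilon_\acc-\epsilon_\fpr(1-p_2)-\epsilon_\fnr p_2\bigr)/\epsilon_p$ for some admissible $\epsilon_\acc,\epsilon_\fpr,\epsilon_\fnr$. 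The numerator is a linear functional of $(\epsilon_\acc,\epsilon_\fpr,\epsilon_\fnr)$ over $[-\gamma,\gamma]^3$, so by linearity and continuity its image is the closed interval of radius $\gamma\bigl(1+|1-p_2|+|p_2|\bigr)$ about $0$. The hypothesis $\epsilon_p<1-p$ (together with $p>0$) forces $p_2\in(0,1)$, so $|1-p_2|+|p_2|=1$ and the radius is just $2\gamma$. Taking $\gamma,\epsilon_p>0$ without loss of generality (the remaining sign cases only swap interval endpoints and amount to replacing $\gamma,\epsilon_p$ by absolute values), the attainable values of $\Delta:=\fnr_1-\fpr_1$ are exactly $[-c,c]$ with $c:=2\gamma/\epsilon_p$, so the projection of the fairness region onto $(\fpr_1,\fnr_1)$-space is $\{(x,y)\in[0,1]^2:|y-x|\le c\}$.

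Finally I would compute the area of this band. When $c\le 1$, its complement inside the unit square is two congruent right triangles, $\{y-x>c\}$ and $\{x-y>c\}$, each with legs of length $1-c$; hence
\[
|A_f| \;=\; 1 - 2\cdot\tfrac12(1-c)^2 \;=\; 1-(1-c)^2 \;=\; 2c-c^2 \;=\; \frac{4\gamma}{\epsilon_p}-\frac{4\gamma^2}{{\epsilon_p}^2},
\]
which is the asserted formula.

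The computation is routine; the care is all in the bookkeeping, and that is where I expect the only friction. One must check that $\epsilon_p<1-p$ genuinely places $p_2$ inside $[0,1]$ so that $|1-p_2|+|p_2|$ collapses to $1$ (this is precisely where the ``mild assumption'' is used), justify that a box maps onto the full interval between its extreme linear values, and flag that the clean expression $2c-c^2$ is the intended answer only in the regime $c=2\gamma/\epsilon_p\le 1$ (i.e.\ $2\gamma\le\epsilon_p$), since otherwise the band already fills the square. It is also worth remarking, consistent with the statement, that we do not additionally require $\fpr_2,\fnr_2$ or $\acc_1,\acc_2$ to lie in $[0,1]$; imposing those would merely clip the corners of the band, which is the sense in which $|A_f|$ is a convenient over-count of the exact feasible set.
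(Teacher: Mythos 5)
Your proof is correct and takes essentially the same route as the paper's: both collapse the governing equation to the single constraint that $\fnr_1-\fpr_1$ lies in the band of half-width $c=2\gamma/\epsilon_p$ (using $\epsilon_p<1-p$, i.e. $p_2\in(0,1)$, to pin the extremal value), and then measure that diagonal band inside the unit square to get $2c-c^2=\frac{4\gamma}{\epsilon_p}-\frac{4\gamma^2}{\epsilon_p^2}$. The only differences are cosmetic: you compute the band area by removing two corner triangles where the paper sets up explicit integrals, and your caveat that the formula holds in the regime $2\gamma\le\epsilon_p$ matches the paper's implicit assumption $|c|<1$.
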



Before proving Lemma~\ref{lemma:area_closed_form}, let's 
 briefly motivate the three primary assumptions: first, we should expect $\epsilon_p << p$, as our relaxation constant should not really be on the same order of magnitude as our per-group prevalence (think $p \approx 0.5$ and $\epsilon_p \approx 0.05$). Thus, the assumption that $\epsilon_p < 1 - p$ is very reasonable. 

Second, pre-specifying an acceptable $\gamma$ relaxation term may seem odd, but it is very common among practitioners, who prefer small groups variations to large ones. Thus, think of $\gamma$ as a small value, something like $\gamma \leq 0.05$.

Third, assuming that $|\gamma| > 0$ is necessary, as when $|\gamma| = 0$ we recover Corollary~\ref{theorem:acc_impossiblity}. We also ignore the case where $\epsilon_p = 0$ because the implications of the impossibility theorem do not apply in the case of equal base rates.


\begin{proof}
We begin with the result from Lemma~\ref{lemma:constraints}. Rearranging terms, we find the following expression for $\fpr$:
\begin{align}
    \label{eq:c}
    \left(\frac{\epsilon_\acc + \epsilon_\fpr - p(\epsilon_\fpr -\epsilon_\fnr)}{\epsilon_p} + \epsilon_\fnr - \epsilon_\fpr \right) + \fnr = \fpr
\end{align}

Set $c= \fpr - \fnr = \left(\frac{\epsilon_\acc + \epsilon_\fpr - p(\epsilon_\fpr -\epsilon_\fnr)}{\epsilon_p} + \epsilon_\fnr - \epsilon_\fpr \right)$, which is fixed for prevalence $p$, prevalence difference $\epsilon_p$, and a set relaxation factors $\epsilon_\acc, \epsilon_\fnr, \epsilon_\fpr$.

It's clear that the relationship between $\fpr$ and $\fnr$ is linear, and controlled by $c$, which can take on many possible values as we vary the relaxation parameters. We notate the set of values that $c$ can take on as $C=\{c_1,c_2...c_m\}$. $C$ is an infinite set.

However, $C$ contains maximum and minimum values. From the linear relationship between $\fpr$ and $\fnr$, we have $c_{max} = max(C)$ and $c_{min} = min(C)$. $c_{max}$ and $c_{min}$ will help us define the boundaries of the solution space $A_f$. It follows that $|A_f|$, the size of the solution space, when $|c| < 1$ and $\fpr,\fnr < 1$, is: 

\begin{figure}
    \centering
    \includegraphics[width=5cm]{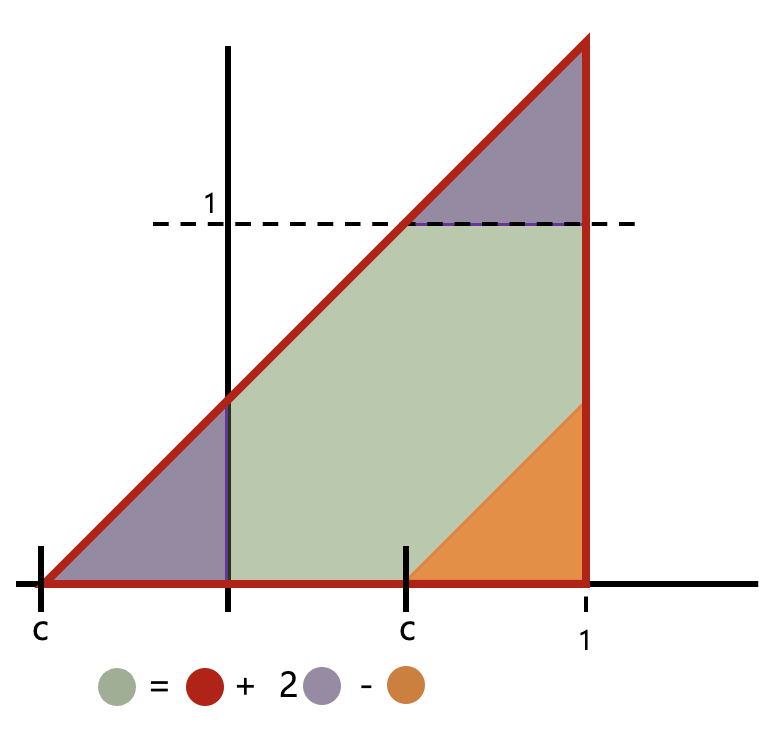}
    \caption{A sketch of the integral construction.}
    \label{fig:sketch_window_acc}
\end{figure}

\begin{align}
    |A_f| = \int_{-c_{max}}^1 (c_{max} + \fnr) \text{ }d\fnr - \left( 2 \int_{-c_{max}}^0 (c_{max} + \fnr) \text{ }d\fnr \right) - \left( \int_{-c_{min}}^1 (c_{min} + \fnr) \text{ }d\fnr \right) 
\end{align}
For a sketch of the integral construction, see Figure~\ref{fig:sketch_window_acc}.

Using our construction over $\epsilon_\acc, \epsilon_\fpr, \epsilon_\fnr \in (-\gamma,\gamma)$, we can deduce $c_{max}$ and $c_{min}$.
\begin{align}
    c &= \frac{\epsilon_\acc + \epsilon_\fpr - p(\epsilon_\fpr -\epsilon_\fnr)}{\epsilon_p} + \epsilon_\fnr - \epsilon_\fpr \\
    &= \frac{\epsilon_\acc}{\epsilon_p} + \frac{\epsilon_\fpr}{\epsilon_p} - \frac{p\epsilon_\fpr}{\epsilon_p} + \frac{p\epsilon_\fnr}{\epsilon_p} + \epsilon_\fnr - \epsilon_\fpr \\
    &= \frac{\epsilon_\acc}{\epsilon_p} + \frac{\epsilon_\fpr}{\epsilon_p} - \frac{p\epsilon_\fpr}{\epsilon_p} - \epsilon_\fpr + \frac{p\epsilon_\fnr}{\epsilon_p} + \epsilon_\fnr \\
    &= \frac{\epsilon_\acc}{\epsilon_p} + \epsilon_\fpr \left(\frac{1-p}{\epsilon_p} - 1 \right) + \frac{p\epsilon_\fnr}{\epsilon_p} + \epsilon_\fnr \\
    \text{Note our assumption that}&\text{ $\epsilon_p < 1 - p$ gives us that }\frac{1-p}{\epsilon_p}>1 \text{ which yields:}\\
    &\leq \frac{\gamma}{\epsilon_p} + \gamma \left(\frac{1-p}{\epsilon_p} - 1 \right) + \frac{p\gamma}{\epsilon_p} + \gamma \\
    &=\frac{\gamma + \gamma - \gamma p + \gamma p }{\epsilon_p} = \frac{2\gamma}{\epsilon_p} = c_{max}
\end{align}
A symmetric argument gives $c_{min} = -\frac{2\gamma}{\epsilon_p}$. To find $|A_f|$, we set $c = \frac{2\gamma}{\epsilon_p}$, and replace into the above integration:
\begin{align}
    |A_f| = \int_{-c}^1 c + \fnr \text{ }d\fnr - \left( 2 \int_{-c}^0 c + \fnr \text{ }d\fnr \right) - \left( \int_{c}^1 \fnr - c \text{ }d\fnr \right) = 2c - c^2 = \frac{4 \gamma}{\epsilon_p} - \frac{4 \gamma^2}{{\epsilon_p}^2} \leq 1
\end{align}
This yields the result.
\end{proof}

\section{Analytical approach to characterizing the fairness region area using \fpr, \fnr, \ppv}
\label{appendix:region_ppv}

\begin{lemma}[Expressing Fairness Area]  Consider groups $\grp_1$ with prevalence $\prev_1$ and $\grp_2$ with prevalence $\prev_2$.  Without loss of generality, let us assume that $\prev_2 = \prev_1 + \epsilon_\prev$.  Next, let us denote a predictor's performance as $\fpr_1$, $\fpr_1$ and $\ppv_1$ for $\grp_1$, and $\fpr_1$, $\fpr_1$ and $\ppv_1$ for $\grp_2$.  Let $\epsilon_\fpr$, $\epsilon_\fnr$ and $\epsilon_v$ denote acceptable differences in \fpr, \fnr and \ppv between groups, respectively.  That is,  $|\fpr_1 - \fpr_2| \leq \epsilon_\fpr$, $|\fnr_1 - \fnr_2| \leq \epsilon_\fnr$ and $|\ppv_1 - \ppv_2| \leq \epsilon_v$.
Then, the following equality holds (for the sake of space, let $\fpr = \alpha, \fnr = \beta, and \ppv = v$): 
{
\small 
\begin{align}
    \label{eq:beta}
    \beta = \frac{\epsilon_p (v^2 (\epsilon_\fpr (p - 1) - 1) + v \epsilon_v (\epsilon_\fpr (p - 1) - 1) + p \epsilon_v + v) + (p - 1) (\epsilon_\fpr (p - 1) v (v + \epsilon_v) + p \epsilon_v) - \epsilon_\fnr (p - 1) v (p + \epsilon_p) (v + \epsilon_v - 1)}{(\epsilon_p (p \epsilon_v - v^2 - v \epsilon_v + v) + (p - 1) p \epsilon_v)}
\end{align}
}
\label{lemma:relaxed_impossibility_result}
\end{lemma}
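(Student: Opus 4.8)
The starting point is the observation that Equation~\ref{impossibility_equation} (equivalently Corollary~\ref{corollary:acc_impossiblity} in the \acc formulation) is not itself a fairness statement but an algebraic identity holding for the confusion matrix of \emph{any single} group; the ``impossibility'' only appears once one additionally imposes cross-group equality under unequal prevalence. So I would first write Equation~\ref{impossibility_equation} once for each group. With the shorthand $\alpha=\fpr$, $\beta=\fnr$, $v=\ppv$, $p=\prev$,
\[
\alpha_i \;=\; \frac{p_i}{1-p_i}\cdot\frac{1-v_i}{v_i}\cdot(1-\beta_i), \qquad i\in\{1,2\},
\]
and then substitute the relaxation relations $\alpha_2=\alpha_1+\epsilon_\fpr$, $\beta_2=\beta_1+\epsilon_\fnr$, $v_2=v_1+\epsilon_v$, $p_2=p_1+\epsilon_p$, exactly as in the proof of Proposition~\ref{proposition:exp_fairness_area_acc} for the \acc case. (The $\epsilon$ terms are signed and absorb the between-group differences; the inequalities $|\fpr_1-\fpr_2|\le\epsilon_\fpr$, etc., in the statement merely bound their magnitudes and play no role in the governing equation.)

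Next I would use the $i=1$ identity to eliminate $\alpha_1$, since it expresses $\alpha_1$ explicitly as a function of $p_1,v_1,\beta_1$ alone. Substituting that expression for $\alpha_1$, together with the relaxation substitutions, into the $i=2$ identity and dropping the now-redundant subscript $1$ on $p,v,\beta$ gives one equation relating $\beta$ to $p,v,\epsilon_\fpr,\epsilon_\fnr,\epsilon_v,\epsilon_p$:
\[
\frac{p(1-v)(1-\beta)}{(1-p)\,v}+\epsilon_\fpr \;=\; \frac{(p+\epsilon_p)\,(1-v-\epsilon_v)\,(1-\beta-\epsilon_\fnr)}{(1-p-\epsilon_p)\,(v+\epsilon_v)}.
\]
The crucial structural point — and the reason a closed form for $\beta$ exists here despite the rational nonlinearities flagged in the main text — is that $\beta$ appears only through the factors $(1-\beta)$ on the left and $(1-\beta-\epsilon_\fnr)$ on the right, so this equation is \emph{affine} in $\beta$. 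Collecting the $\beta$-terms yields $\beta\,K=L$, where
\[
K=\frac{(p+\epsilon_p)(1-v-\epsilon_v)}{(1-p-\epsilon_p)(v+\epsilon_v)}-\frac{p(1-v)}{(1-p)\,v},
\]
and $L$ is the remaining combination of $p,v$ and the four $\epsilon$'s.

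Finally I would solve $\beta=L/K$ and clear fractions by multiplying numerator and denominator through by $(1-p)\,v\,(1-p-\epsilon_p)\,(v+\epsilon_v)$, then expand and collect to reach the stated form (Equation~\ref{eq:beta}); note that its denominator is precisely $K$ after this clearing, which one can sanity-check vanishes when $\epsilon_p=\epsilon_v=0$, consistently with the recovery of the unrelaxed identity there. I expect the main obstacle to be purely bookkeeping: the expansion produces many monomials in $p,v,\epsilon_p,\epsilon_v,\epsilon_\fpr,\epsilon_\fnr$, and it is easy to drop a sign or a cross term, so the real work is verifying the simplification (most safely with a computer algebra system) and recording the implicit nondegeneracy assumptions $v,\,v+\epsilon_v,\,1-p,\,1-p-\epsilon_p\neq 0$ and $K\neq 0$ (the latter being where $\epsilon_p,\epsilon_v$ both vanishing is excluded). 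There is no genuine conceptual difficulty beyond recognizing that, once $\alpha_1$ is substituted away via the group-1 identity, the remaining equation is linear in $\beta$.
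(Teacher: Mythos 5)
Your proposal follows essentially the same route as the paper's proof: write Chouldechova's identity (Equation~\ref{impossibility_equation}) once per group, substitute the relaxations $p_2=p_1+\epsilon_\prev$, $v_2=v_1+\epsilon_v$, $\beta_2=\beta_1+\epsilon_\fnr$, $\alpha_2=\alpha_1+\epsilon_\fpr$, and solve the resulting equation --- which is affine in $\beta$ --- for $\beta$, exactly as the paper does via its intermediate equation before arriving at Equation~\ref{eq:beta}. The only cosmetic discrepancy is the side on which $\epsilon_\fpr$ appears in that intermediate equation (the paper attaches it to the group-2 expression), a sign convention for the signed tolerance that does not affect the argument; your added notes on the nondegeneracy conditions and on the denominator vanishing when $\epsilon_\prev=\epsilon_v=0$ are consistent with the paper.
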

\begin{proof}

Consider (\ref{impossibility_equation}) in the setting where there are two groups. Let the \fpr of the two groups be equal, subject to the relaxation $\fpr_2 = \fpr_1 + \epsilon_\fpr$. Make substitutions respective to the assumptions made in Lemma \ref{lemma:relaxed_impossibility_result} to find:

\begin{align}
    \label{fully_relaxed}
    \frac{p}{1-p} \frac{1-v}{v} (1 - \beta) = \frac{p+\epsilon_p}{1-(p+\epsilon_p)} \frac{1-(v+\epsilon_v)}{v+\epsilon_v} (1 - (\beta + \epsilon_\fnr)) + \epsilon_\fpr
\end{align}

Solving for $\beta$ yields (\ref{eq:beta}).
\end{proof}

\section{Analysis of the Dot Planimeter}
\label{appendix:dot}
One of the strategies we undertook in estimating the ``fairness region'' of~\ref{lemma:relaxed_impossibility_result} borrowed from previous work on ``dot-planimetry,'' a two-dimensional area estimation technique that has been consistently reinvented over the past century, and has been studied extensively in relation to cartographic area estimation and pure math (see Gauss's circle problem) \cite{frolov1969accuracy,bocarov1957matematiko, lowry2017some} (Perhaps sadly, GPS has contributed to the under-exploration of this subject in recent years). Here, we offer a brief derivation of our dot-planimetry strategy, and how we derived approximate upper bounds on our over-estimation error for the area of the fairness region when considering \fnr, \fpr and \ppv.

Dot-planimetry provides a simple way of estimating the area of complex enclosed shapes in a two dimensional space (which can be difficult to integrate directly). Intuitively, we create a regular grid over the space composed of points. We refer to each point as a ``detector,'' who is responsible for a pre-specified radius $r_\epsilon$ in the 2-dimensional space of interest. We say that a ``detector'' $i$ is ``satisfied'' if the edge of the shape of interest is anywhere within $r_\epsilon$ distance away from $i$. To compute the final area of the shape, we simply sum up the total number of satisfied detectors multiplied by each of their individual areas (i.e. the area of a bunch of circles defined by $r_\epsilon$).
\begin{figure}[h!]
    \centering
    \includegraphics[width=6cm]{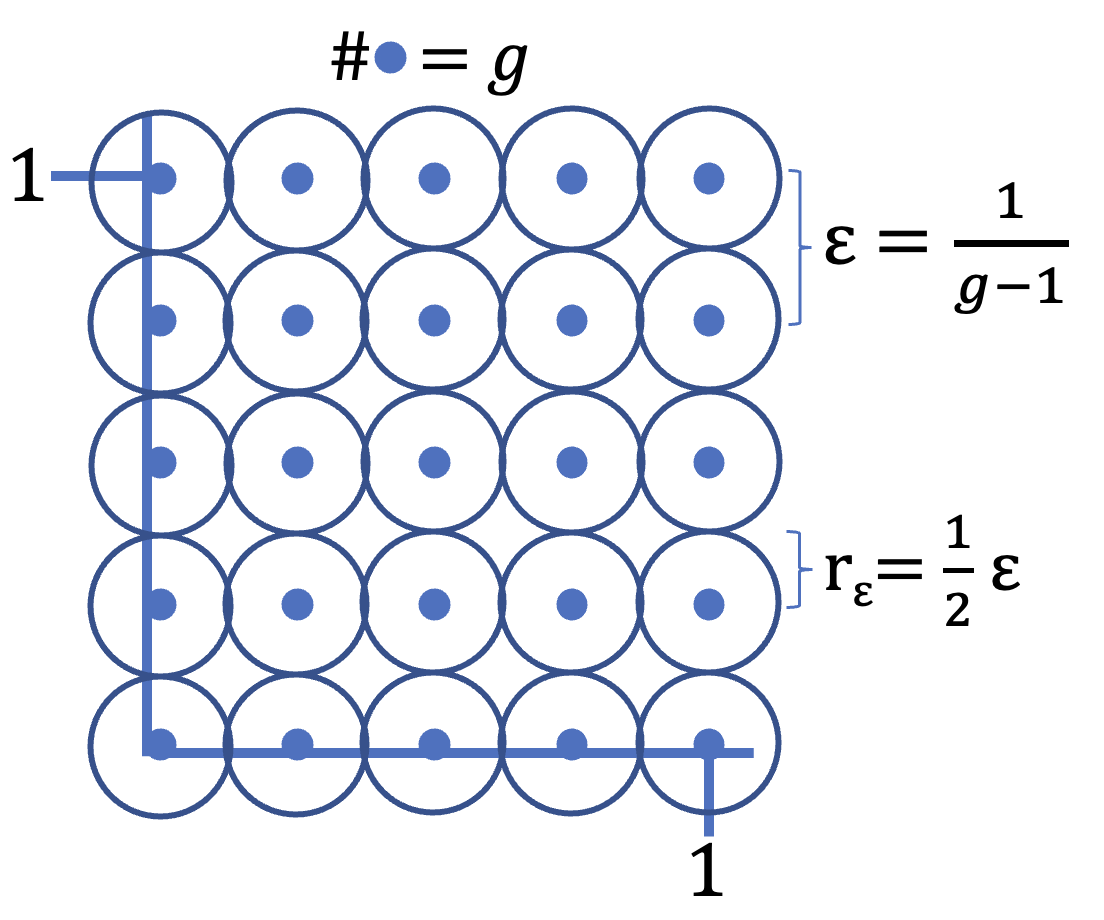}
    \caption{A full planimeter (\textit{g}=25).}
    \label{fig:full_planimeter}
\end{figure}
In our case, we are interested in a very particular space: a $1\times1$ square with the bottom left corner at the origin (this is how we can visualize two metrics varying based on our relaxations). Our dot planimeter in this case has $g^2$ total detectors. They are distributed so that they are $\epsilon=\frac{1}{g-1}$ apart, and so that the bottom row and leftmost column each touch the $x$ and $y$ axis respectively, while the rightmost and top rows each have $x$ and $y$ values of 1 respectively. Thus, each detector has a radius of $r_\epsilon=\frac{1}{2}\epsilon$. Refer to Figure~\ref{fig:full_planimeter} for a sketch of this setup, and as we walk through the problem.

For our analysis, we will call two detectors $i$ and $j$ ``neighboring'' if $i_y=j_y$ and $|i_x-j_x| = \epsilon$ i.e. the values of their $y$-axis are equal and they are next to each other. Note that, for any of the following arguments, symmetric variations apply were we to switch the definition of  ``neighboring'' to $i_x=j_x$. Neighboring detectors are shown in Figure~\ref{fig:duo_planimeter}.

We will also define our ``detector function'' to be:
$$
f(i_{xy}, h(x))=\begin{cases}
			1, & \text{$|project(i_{xy}) \rightarrow h(x)| \leq r_\epsilon$}\\
            0, & \text{otherwise}
		 \end{cases}
$$
Intuitively, our detector function $f(i_{xy}, h(x))$ takes a detector $i_{xy}$ and a boundary function of interest $h(x)$ and returns 1 (or true) if, for any $x$, $h(x)$ passes within $r_\epsilon$ of $i_{xy}$.

Our overall approach to upper bounding the over-estimation error when using a dot-planimeter is intuitive: we will reason about which function through our $[0,1]\times[0,1]$ metric region would lead to the highest number of detectors satisfied. We will then assume that this is the boundary function for our area (i.e. this function partitions our space, and is dense on one side). Our estimation error is then the proportion of total detectors satisfied by the boundary function, assuming that they all make minimal contact with the detector radius. \textbf{This is a very coarse approach, and we assume the actual overestimation error is much lower.} However, due to the computational nature of the problem, this will provide us confidence in selecting a value for for $g$ that certainly ensures an upper-bounded amount of error (like 5$\%$.

\begin{figure}[h!]
    \centering
    \includegraphics[width=2cm]{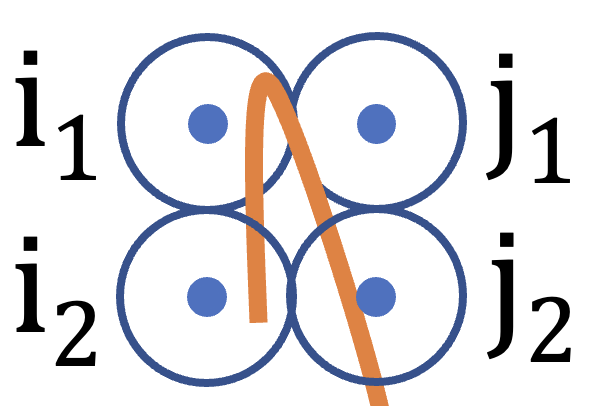}
    \caption{A set of two sets of side by side detectors, $i_1, j_1$ and $i_2, j_2$. Note that a critical point is necessary for $h(x)$ (in orange) to satisfy all 4.}
    \label{fig:duo_planimeter}
\end{figure}

\begin{proposition}[Critical points all satisfying of neighboring detectors] Consider two sets of side by side (or ``neighboring'') detectors, $i_1, j_1$ and $i_2, j_2$ (one can refer to Figure~\ref{fig:duo_planimeter}). By the definition of a function, $f(i_1, h(x)) + f(i_2, h(x)) + f(j_1, h(x)) + f(j_2, h(x)) \leq 3$ \textbf{unless} $g(x)$ has a critical point in the window $i_x \leq x \leq j_x$.
\label{prop:crit_points}
\end{proposition}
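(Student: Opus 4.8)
The plan is to argue by contradiction, using the tangency geometry of the four detector disks together with the fact that $h$ is single‑valued (a function). First I would fix coordinates so the four detectors sit at the corners of an axis‑aligned square of side $\epsilon=\tfrac{1}{g-1}$: call the two columns $x=i_x$ and $x=j_x=i_x+\epsilon$, let $m$ be the common midheight of the two rows, and relabel if necessary so that $i_1,j_1$ are the lower pair and $i_2,j_2$ the upper pair, each detector owning a closed disk of radius $r_\epsilon=\tfrac12\epsilon$ about its location, and recall that $f(\cdot,h)=1$ exactly when the graph of $h$ meets that disk. The single geometric fact that does all the work is this: since horizontally and vertically adjacent detectors are exactly $2r_\epsilon$ apart, adjacent disks are externally tangent, so the two lower disks lie in the half‑plane $\{y\le m\}$ and meet the line $y=m$ only at the single points $(i_x,m)$ and $(j_x,m)$ respectively, with the mirror statement for the two upper disks; diagonally opposite disks are disjoint.

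Next I would run the contradiction. Assume $h$ has no critical point on $[i_x,j_x]$, so $h$ is strictly monotone there, and suppose toward a contradiction that all four detectors are satisfied. Take the increasing case first; the decreasing case is symmetric. Since $f(i_2,h)=1$, the graph of $h$ meets the upper‑left disk at a point $P_2$ with $x(P_2)\le i_x+r_\epsilon$ and $y(P_2)\ge m$, and since $f(j_1,h)=1$ it meets the lower‑right disk at a point $P_3$ with $x(P_3)\ge j_x-r_\epsilon=i_x+r_\epsilon$ and $y(P_3)\le m$. Thus $x(P_2)\le i_x+r_\epsilon\le x(P_3)$, so monotone increase gives $y(P_2)\le y(P_3)$, and combined with $y(P_2)\ge m\ge y(P_3)$ this pins $y(P_2)=y(P_3)=m$. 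By the tangency fact the only graph point of height $m$ inside the upper‑left disk is $(i_x,m)$ and inside the lower‑right disk is $(j_x,m)$, hence $h(i_x)=h(j_x)=m$ — impossible for a function strictly monotone on $[i_x,j_x]$. In the decreasing case I would instead use the lower‑left and upper‑right disks, whose witnesses collapse to the same two points $(i_x,m),(j_x,m)$ and yield the same contradiction. Either branch establishes $f(i_1,h)+f(i_2,h)+f(j_1,h)+f(j_2,h)\le 3$ whenever $h$ has no critical point in the window.

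The step I expect to be the real obstacle is a localization gap hidden in the monotone comparison: the abscissae $x(P_2),x(P_3)$ (and in the decreasing case $x(P_1),x(P_4)$) are only guaranteed to lie in the slightly wider interval $[i_x-r_\epsilon,\,j_x+r_\epsilon]$, so strictly speaking the comparison $y(P_2)\le y(P_3)$ needs $h$ monotone on that enlarged interval rather than merely on $[i_x,j_x]$. I would close this one of three ways: (a) restate the proposition with window $[i_x-r_\epsilon,\,j_x+r_\epsilon]$, which is harmless for the later dot‑planimeter error bound; (b) show that when $h$ is monotone on $[i_x,j_x]$ one may always relocate the four witnessing points to abscissae inside the column strip $[i_x,j_x]$, using that each disk meets that strip and the graph is connected; or (c) push the argument through for the nominal window by following the graph back into $[i_x,j_x]$ and re‑deriving the same collapse. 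Everything else is bookkeeping of measure‑zero degeneracies — the graph grazing a disk exactly at $(i_x,m)$ or $(j_x,m)$, or $h$ being locally constant on part of the window — and these are automatically consistent with the ``unless'' clause, since a locally constant stretch is itself a continuum of critical points. By contrast, the geometric heart is a one‑liner: a monotone graph can meet a lower disk and the diagonally opposite upper disk only by threading the shared tangency height $y=m$ at both column positions $x=i_x$ and $x=j_x$.
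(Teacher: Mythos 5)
Your argument is sound where it applies, and it is worth saying up front that the paper itself gives no proof of this proposition: it is asserted with an appeal to ``the definition of a function'' and to Figure~\ref{fig:duo_planimeter}, and the subsequent column and error-bound propositions are likewise stated as coarse geometric claims. So your tangency-plus-monotonicity argument --- adjacent detector disks are externally tangent because their centers are $2r_\epsilon$ apart, hence the lower disks lie in $\{y\le m\}$ and the upper disks in $\{y\ge m\}$, meeting the line $y=m$ only at $(i_x,m)$ and $(j_x,m)$, and a strictly monotone graph that meets one lower disk and the diagonally opposite upper disk is forced through both tangency points, contradicting strict monotonicity --- is a genuine proof where the paper supplies only a picture. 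It also makes explicit the regularity hypotheses the paper leaves implicit (differentiability of $h$, so that ``no critical point'' yields strict monotonicity; the ``$g(x)$'' in the statement is evidently a typo for $h(x)$).

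The localization gap you flag is, however, not a cosmetic worry: the proposition as literally stated, with window exactly $[i_x, j_x]$, is false, and only your repair (a) closes it. A function can be strictly decreasing on all of $[i_x-r_\epsilon,\, j_x]$, passing through the upper-left disk and then the lower-left disk entirely at abscissae $x<i_x$, grazing the lower-right disk inside the strip, and then turning upward at a single local minimum $x^\ast\in(j_x,\, j_x+r_\epsilon]$ to reach the upper-right disk; all four detectors are satisfied yet the unique critical point lies outside $[i_x,j_x]$. This also shows your repair (b) cannot work in general: the graph may meet a disk only in its outer half, so the witnessing points cannot always be relocated into the column strip. The correct statement is the one you give in (a), with the window enlarged to $[i_x-r_\epsilon,\, j_x+r_\epsilon]$ (equivalently, monotonicity assumed on the union of the disks' horizontal extents); your main argument then goes through verbatim, and the enlargement is harmless for Proposition~\ref{prop:columns} and the final planimeter error bound, which only need that forcing many detectors per unit cell costs at least one critical point in a window of width $O(\epsilon)$ around that cell.
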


\begin{proposition}[Extending the argument to columns] Consider two columns of neighboring detectors, which can represented by sets $\{i_1, i_2,...,i_m\} \in I$ and $\{j_1, j_2,...,j_m\} \in J$. Note that detectors in $I$ share the same $x$ value, as for $J$. 

What is the maximum number of detectors satisfied by $h(x)$ between sets $x$? If we assume $|I|, |J| = g \geq 3$, then by Proposition~\ref{prop:crit_points}, if $h(x)$ has 0 critical points lie between $I_x$ and $J_x$, then the answer is at most $g+1$ (this can be seen through a geometric argument). However, we allow for any critical points, than the answer is $2g$ (or, the size of the entire union between the sets).
\label{prop:columns}
\end{proposition}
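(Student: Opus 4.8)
The plan is to treat Proposition~\ref{prop:columns} as two claims---that at most $g+1$ detectors are satisfied when $h$ has no critical point in $[I_x,J_x]$, and that the trivial maximum $2g$ is attained once critical points are allowed---and to extract the first (harder) claim from a single structural fact that holds for \emph{every} continuous boundary curve $h$: \emph{within one column, the satisfied detectors occupy a contiguous run of heights}. With that fact in hand the only further ingredient is Proposition~\ref{prop:crit_points}.

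To prove the structural fact, fix column $I$ at abscissa $I_x$, its detectors sitting at heights $y_0 < y_1 < \cdots < y_{g-1}$, consecutive heights $\epsilon$ apart, each with capture radius $r_\epsilon = \tfrac12\epsilon$. Call a height $y$ \emph{reachable} if $(x-I_x)^2 + (h(x)-y)^2 \le r_\epsilon^2$ for some $x$; equivalently, there is an $x$ with $|x-I_x|\le r_\epsilon$ for which $h(x)-\sqrt{r_\epsilon^2-(x-I_x)^2}\le y\le h(x)+\sqrt{r_\epsilon^2-(x-I_x)^2}$. The set of points $(x,y)$ satisfying this is exactly the region trapped between the two continuous curves $y = h(x)\pm\sqrt{r_\epsilon^2-(x-I_x)^2}$ over the connected interval $|x-I_x|\le r_\epsilon$, hence is path-connected; its projection onto the $y$-axis---the set of reachable heights---is therefore a closed interval. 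The satisfied detectors of column $I$ are precisely the $y_k$ in that interval, so they form a run of consecutive indices; write it $\{y_a,\dots,y_b\}$, and similarly $\{y_c,\dots,y_d\}$ for column $J$.

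For the first claim, assume $h$ has no critical point in $[I_x,J_x]$, and suppose for contradiction that the two runs shared at least two heights. Since each run is an interval of indices, their intersection is too, so it contains two consecutive heights $y_k,y_{k+1}$; then all four of $(I_x,y_k),(I_x,y_{k+1}),(J_x,y_k),(J_x,y_{k+1})$ are satisfied, a full $2\times 2$ block of neighbouring detectors, which by Proposition~\ref{prop:crit_points} forces a critical point of $h$ in $[I_x,J_x]$---contradiction. Hence the runs overlap in at most one height, and since both lie inside the $g$-element set $\{y_0,\dots,y_{g-1}\}$, inclusion--exclusion gives $(b-a+1)+(d-c+1) = |\mathrm{run}_I\cup\mathrm{run}_J| + |\mathrm{run}_I\cap\mathrm{run}_J| \le g+1$ (and $g+1$ is attained: a steep monotone curve pinned to the two corners $(I_x-r_\epsilon,y_0)$ and $(I_x+r_\epsilon,y_{g-1})$ of column $I$'s capture window hits all $g$ column-$I$ detectors and also the top detector of column $J$). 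For the second claim, $2g$ is an obvious upper bound and is attained by the steep ``tent'' that rises from $(I_x-r_\epsilon,y_0)$ to $(I_x+r_\epsilon,y_{g-1})$ and then falls from $(I_x+r_\epsilon,y_{g-1})$ to $(I_x+3r_\epsilon,y_0)$: by the intermediate value theorem its ascending branch passes within $r_\epsilon$ of every column-$I$ detector and its descending branch within $r_\epsilon$ of every column-$J$ detector, so all $2g$ are satisfied, and $h$ has its single critical point at the apex $x = I_x+r_\epsilon\in[I_x,J_x]$, exactly as this case permits.

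I expect the structural fact of the second paragraph to be the main obstacle. It is visually obvious, but making ``the satisfied detectors of a column form a single contiguous run'' rigorous needs the connectedness argument above---recognising the reachable-height set as the $y$-projection of a region wedged between two continuous graphs---together with a small amount of care to confirm that a detector being satisfied (some point of $h$ within $r_\epsilon$ of it) is the same as the detector's height being reachable. Once the per-column run structure is established, the bound $g+1$ falls out of one application of Proposition~\ref{prop:crit_points} and elementary set counting, and the $2g$ case is just the explicit tent construction.
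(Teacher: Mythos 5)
Your proof is correct and takes essentially the route the paper intends: it uses Proposition~\ref{prop:crit_points} exactly as the statement indicates, and your per-column contiguity lemma (satisfied detectors in one column form a contiguous run of heights, via the connectedness/projection argument), the inclusion--exclusion count, and the explicit monotone and tent constructions simply make rigorous the ``geometric argument'' that the paper leaves as a sketch. The only loose ends are minor conventions already implicit in the paper's informal setup (closed versus open capture radius at tangential touches, and the tent's apex being a non-differentiable rather than smooth critical point), neither of which affects the bounds.
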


Propositions~\ref{prop:crit_points} and \ref{prop:columns} show that the determining factor in satisfying the highest number of detectors in a space is the number of critical points allowed for the function $h(x)$. The two column argument from propositions~\ref{prop:columns} can be extended to cover all detectors in the space, and provides a coarse upper bound on the over-estimation error.

\begin{proposition}[Upper bound on error of $(\epsilon)$-dot-planimeter under assumptions of fairness region] Assume that $h(x)$ is the boundary function for our area $\in [0,1]\times[0,1]$ and has at most $b$ critical points. Then, a coarse upper bound on the max detectors satisfied by $h(x)$ is $g*c$. This yields the simple \textit{percent} error calculator: $\frac{c}{g}$ (as there are $g^2$ total detectors in our dot-planimeter).
\end{proposition}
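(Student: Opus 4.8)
The plan is to upgrade the two‑column estimate of Proposition~\ref{prop:columns} to a global count by walking across all $g-1$ windows of the planimeter. Write $W_k$ for the window between columns $C_k$ and $C_{k+1}$, and recall that consecutive columns are $\epsilon=\tfrac1{g-1}$ apart while each detector has radius $r_\epsilon=\tfrac12\epsilon$; hence the sensitivity disk of a detector in column $C_k$ lives inside the vertical strip $[x_k-r_\epsilon,\,x_k+r_\epsilon]$, which touches only $W_{k-1}$ and $W_k$, and these strips have disjoint interiors for distinct columns. So a detector fires only because of the shape of $h$ in the (at most two) windows adjacent to its column. Throughout I write $c$ for the bound on the number of critical points (the statement's $b$ and $c$ denote this same quantity).

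First I would localize the critical points: assign each of the (at most $c$) critical points of $h$ to the window containing it, and a critical point sitting on a column line or on $\partial([0,1]^2)$ to an arbitrary neighbouring window, which can only weaken the bound. Call a window \emph{flat} if it receives no critical point and \emph{active} otherwise; there are at most $c$ active windows, and across a run of consecutive flat windows $h$ is monotone. Over each active window, Proposition~\ref{prop:columns} gives only the trivial bound that all $2g$ detectors of the two bordering columns may fire. Over the flat runs I would invoke Proposition~\ref{prop:crit_points} (equivalently the ``staircase'' geometry behind Proposition~\ref{prop:columns}): on a maximal monotone arc, within each width‑$\epsilon$ sensitivity strip $h$ sweeps out a $y$‑interval of some length $\Delta$ and so fires at most $O(\Delta/\epsilon)+O(1)$ detectors of that column; since the strips along the arc are essentially disjoint their $\Delta$'s sum to at most the arc's total variation $\le 1$, so the entire monotone arc fires only $O(1/\epsilon)+O(g)=O(g)$ detectors. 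There are at most $c+1$ monotone arcs and at most $c$ active windows, each contributing $O(g)$, so the total number of fired detectors is $O(cg)$; the constant factor and the extra monotone arc are exactly the slack the statement flags as ``very coarse,'' and dropping them gives the clean bound $g\cdot c$. Dividing by the $g^2$ detectors of the planimeter yields the percent‑error estimate $\frac{c}{g}$.

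I expect the genuine obstacle to be the monotone‑arc count, because a priori a very steep stretch of $h$ could pass within $r_\epsilon$ of arbitrarily many stacked detectors of a single column; pinning the bound down requires using the confinement of each sensitivity strip to width $\epsilon$ together with the disjointness of strips across columns, so that steepness in one column is ``paid for'' by flatness elsewhere via the total‑variation budget $\le 1$. A tidier alternative is to argue entirely with $2\times 2$ cells: tile $(0,1)^2$ into its $(g-1)^2$ unit cells, use Proposition~\ref{prop:crit_points} to note that a cell has all four corners fire only if $h$ has a critical point in that cell's $x$‑window, conclude that at most $O(cg)$ cells are ``full,'' and translate the cell tally back to a detector tally — but there the care goes into the four‑fold corner sharing, which must be handled by charging each fired detector to a single cell so the count is not inflated. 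Finally I would treat the first and last columns, and any critical point landing exactly on a column line or the square's boundary, separately, since that is precisely where the gap between $g\cdot c$ and $g\,(c+1)$ lives.
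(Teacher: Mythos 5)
Your proposal follows the same skeleton the paper sketches --- localize the (at most $c$) critical points to inter-column windows, charge each ``active'' window the trivial $2g$ detectors from the two-column proposition, and control the remaining monotone stretches --- so in spirit it is the paper's argument. The genuine difference is that you supply the step the paper leaves implicit: the paper jumps from its two-column statement (at most $g+1$ of $2g$ detectors fire in a critical-point-free window) directly to the global bound $g\cdot c$, even though summing $g+1$ over the roughly $g$ flat windows would give order $g^2$ and hence nothing useful; your per-column sweep count for monotone arcs (each column strip fires $O(\Delta/\epsilon)+O(1)$ detectors, with the $\Delta$'s summing to at most the arc's total variation $\leq 1$) is the missing ingredient that makes the flat part genuinely $O(g)$, and the cell-tiling variant you sketch is a second workable route to the same end. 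The price both you and the paper pay is identical: what the argument honestly yields is $O\bigl((c+1)g\bigr)$ with an explicit constant (roughly $2g(c+1)$ in the worst case --- note a monotone boundary with $c=0$ already touches about $2g$ detectors, so the literal bound $g\cdot c$ fails there), and the stated $g\cdot c$ with percent error $c/g$ only emerges after discarding the constant and the extra monotone arc, exactly the slack the paper waves away as ``very coarse.'' So your write-up is, if anything, more careful than the paper's; just flag explicitly that the clean $g\cdot c$ is a heuristic simplification of the $O\bigl((c+1)g\bigr)$ bound rather than something the argument literally delivers, and that the statement's $b$ and the bound's $c$ are the same quantity (a typo in the proposition as written).
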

Thus, for a $5\%$ upper bound on our error, assuming no more than 6 critical points for boundary function $h(x)$ for $x\in(0,1)$, we have $\frac{6}{0.05} = g = 120$. Thus, with a granularity of $120^2 = 14400$ detector in our dot-planimeter, we can be confident that our over-estimation error is no more than $5\%$ so long as the assumptions we made about $h(x)$ hold (experimentally, this was the case). 

\begin{figure}[h!]
    \centering
    \includegraphics[width=2cm]{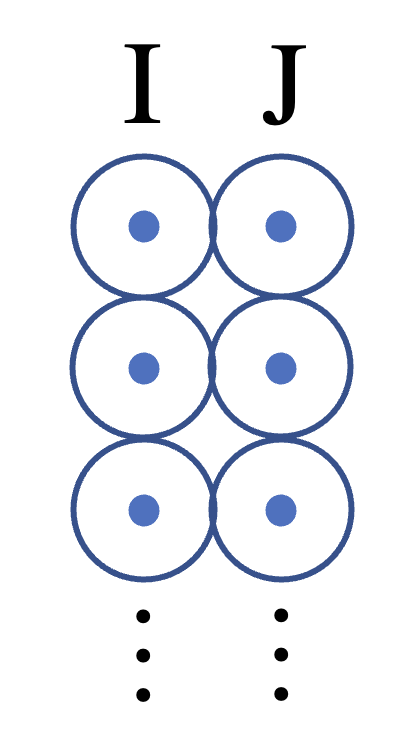}
    \caption{A set of two detector columns, $I$ and $J$.}
    \label{fig:columns_planimeter}
\end{figure}

\section{Additional Proofs}
\label{appendix:proofs}

\begin{proposition}[Intersectional Prevelance Differences]
Given a dataset that is subdivided into two groups, let $0 < p_1 < n_1$ and $0 < p_2 < n_2$, where $p_i$ is the number of positive class members of group $i$, and $n_i$ is the total number of members in group $i$. Suppose $\frac{p_1}{n_1} \leq \frac{p_2}{n_2}$. 

The following holds:
\begin{align}
    \frac{p_1}{n_1} \leq \frac{p_1 + p_2}{n_1 + n_2} \leq \frac{p_2}{n_2}
\end{align}
\end{proposition}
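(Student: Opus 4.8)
The plan is to prove this as an instance of the classical mediant inequality, using only the positivity of the denominators and a single cross-multiplication of the hypothesis. First I would observe that since $n_1 > 0$ and $n_2 > 0$, the assumption $\frac{p_1}{n_1} \leq \frac{p_2}{n_2}$ is equivalent (multiplying both sides by $n_1 n_2 > 0$, which preserves the inequality) to the single polynomial inequality
\begin{align}
p_1 n_2 \leq p_2 n_1. \label{eq:crossmult}
\end{align}
This is the only fact about $p_1, p_2, n_1, n_2$ I will need besides $n_1, n_2, n_1 + n_2 > 0$.

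Next I would establish the left-hand inequality $\frac{p_1}{n_1} \leq \frac{p_1 + p_2}{n_1 + n_2}$. Since $n_1 > 0$ and $n_1 + n_2 > 0$, this is equivalent to $p_1 (n_1 + n_2) \leq p_1 (n_1 + n_2)$ after clearing denominators; expanding gives $p_1 n_1 + p_1 n_2 \leq p_1 n_1 + p_2 n_1$, i.e. $p_1 n_2 \leq p_2 n_1$, which is exactly \eqref{eq:crossmult}. Symmetrically, for the right-hand inequality $\frac{p_1 + p_2}{n_1 + n_2} \leq \frac{p_2}{n_2}$, I would clear the (positive) denominators $n_1 + n_2$ and $n_2$ to obtain the equivalent statement $n_2(p_1 + p_2) \leq p_2(n_1 + n_2)$, which simplifies to $n_2 p_1 \leq n_1 p_2$ — again \eqref{eq:crossmult}. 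Chaining the two gives the claimed double inequality.

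There is no real obstacle here: the argument is elementary and the only point requiring the slightest care is that every multiplication used to clear a denominator is by a strictly positive quantity ($n_1$, $n_2$, or $n_1 + n_2$, all positive because $0 < p_i < n_i$ forces $n_i > 0$), so no inequality direction is reversed. If desired, one could alternatively phrase the same computation as the identity $\frac{p_1 + p_2}{n_1 + n_2} = \lambda \frac{p_1}{n_1} + (1-\lambda)\frac{p_2}{n_2}$ with $\lambda = \frac{n_1}{n_1 + n_2} \in (0,1)$, exhibiting the mediant as a convex combination of the two ratios and hence sandwiched between them; but the cross-multiplication route is shorter and self-contained.
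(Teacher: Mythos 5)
Your proposal is correct and follows essentially the same route as the paper: clear the positive denominators so that both halves of the sandwich reduce to the single cross-multiplied hypothesis $p_1 n_2 \leq p_2 n_1$ (the paper writes this as $n_2 \leq n_1 p_2/p_1$ and adds a redundant check by contradiction). One harmless typo: the displayed intermediate inequality for the left half should read $p_1(n_1+n_2) \leq n_1(p_1+p_2)$, not $p_1(n_1+n_2) \leq p_1(n_1+n_2)$; your subsequent expansion makes clear this is what you meant.
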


\begin{proof}

We know that $0 < p_1, n_1, p_2, n_2$. Note that the assumption  $\frac{p_1}{n_1} \leq \frac{p_2}{n_2}$ implies $n_2 \leq \frac{n_1 p_2}{p_1}$.

Consider the left hand side of the equality proposed in the theorem statement:
\begin{align}
    \frac{p_1}{n_1} &\leq \frac{p_1 + p_2}{n_1 + n_2} \\
    p_1 (n_1 + n_2) &\leq n_1 (p_1 + p_2) \\
    p_1 n_2 &\leq n_1 p_2 \\
    n_2 &\leq \frac{n_1 p_2}{p_1}
\end{align}
We have now derived an inequality specific to $n_2$. We can verify that exceeding this value invalidates the claim by plugging in $n_2 = (\frac{n_1 p_2}{p_1} + a)$, where $a > 0$ to find a contradiction:
\begin{align}
    p_1 (n_1 + \frac{n_1 p_2}{p_1} + a) &\leq n_1 (p_1 + p_2) \\
    p_1 n_1 + n_1 p_2 + a p_1 &\leq p_1 n_1 + n_1 p_2 \\
    a p_1 \leq 0
\end{align}
which contradicts $a > 0$. An analogous argument exists for the right hand side of the equation. Thus, so long as $n_2 \leq \frac{n_1 p_2}{p_1}$, the main inequality holds.

\end{proof}

\begin{proposition}[Reducing k increases PPV]
\label{theorem:role_of_k_app}
Given a \emph{well-calibrated classifier} being used under a resource constraint \rc, reducing the size of \rc will monotonically increase the PPV of the classifier.
\end{proposition}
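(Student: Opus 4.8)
The plan is to reduce this to an elementary fact about running averages of monotone sequences. First I would make precise what \emph{well-calibrated} buys us: the classifier assigns each individual $i$ a score $\tilde y_i \in [0,1]$, and calibration means that for any set of individuals the expected fraction that are truly positive equals the average of their scores. Operating under the resource constraint \rc amounts to selecting the \rc highest-scoring individuals, so after relabelling the population in non-increasing score order, $s_1 \ge s_2 \ge \cdots \ge s_n$, the selected set is $\{1,\dots,k\}$ and the (expected) \ppv of the classifier at resource level $k$ is the running average
\begin{align*}
R(k) \;:=\; \frac{1}{k}\sum_{i=1}^{k} s_i .
\end{align*}

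Second, I would show that $R(k)$ is non-increasing in $k$; this is precisely the statement that shrinking \rc cannot decrease \ppv. The core step is the one-line telescoping estimate
\begin{align*}
R(k) - R(k+1)
&= \frac{1}{k}\sum_{i=1}^{k} s_i - \frac{1}{k+1}\sum_{i=1}^{k+1} s_i \\
&= \frac{1}{k(k+1)}\sum_{i=1}^{k}\bigl(s_i - s_{k+1}\bigr) \;\ge\; 0 ,
\end{align*}
where the inequality holds because $s_i \ge s_{k+1}$ for every $i \le k$ once the scores are sorted. Iterating gives $R(k') \ge R(k)$ whenever $k' \le k$, with strict inequality exactly when some selected score strictly exceeds the first discarded one; so ``monotonically increase'' should be read as weakly monotone, and strictly monotone when the scores near the cutoff are distinct.

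If one prefers a distributional statement over the finite-population one, the same argument runs with $R$ replaced by the map $t \mapsto \mathbb{E}[\tilde Y \mid \tilde Y \ge t]$ (the expected score above a threshold $t$, since decreasing \rc corresponds to raising $t$): for $t' > t$ one writes $\mathbb{E}[\tilde Y \mid \tilde Y \ge t]$ as a convex combination of $\mathbb{E}[\tilde Y \mid t \le \tilde Y < t']$ and $\mathbb{E}[\tilde Y \mid \tilde Y \ge t']$, and observes that the first conditional mean lies below $t'$ while the second is at least $t'$, so the combination cannot exceed the second term.

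I expect the only real subtlety — and hence the one step to handle carefully — to be a modelling one: for a fixed realized labelling, \ppv is itself a random quantity, so the clean monotonicity is a statement about \emph{expected} \ppv (equivalently, it is exact in the population limit where calibration holds exactly). I would state this assumption explicitly and note that ties at the selection cutoff turn the strict inequalities into equalities; beyond that, the combinatorial heart of the argument is just the displayed telescoping identity.
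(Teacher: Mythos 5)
Your proof is correct and follows essentially the same route as the paper's: both express the expected \ppv at resource level $k$ as the average of the top-$k$ calibrated probabilities and observe that dropping the smallest scores cannot decrease that average. Your version is in fact a bit more careful than the paper's (the explicit telescoping identity, the remark that the claim concerns \emph{expected} \ppv, and the weak-vs-strict monotonicity caveat), but the underlying argument is the same.
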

\begin{proof}
Consider the top \rc outputs of a well-calibrated classifier. Since the classifier is calibrated, the list \rc is ordered in the following way: those elements closest to the first position have a higher probability of a positive outcome, and those elements closest to position \rc have a lower probability of a positive outcome. In other words, $p_1 \geq p_2 \geq, \ldots, \geq p_k$, where $p_i$ is the probability that element $i$ is a member of the positive class. The \ppv of the classifier, in expectation, can then be expressed in the following way:
\begin{align}
    \label{eq:prec_expectation}
    \frac{1}{k} \sum\limits_{i=1}^k p_i = \frac{p_1 + p_2 + ... + p_k}{k}.
\end{align}
Notice that Equation~\ref{eq:prec_expectation} is simply the average of $p_i$ over the \rc elements. Consider another set of size $k' < k$, that is constructed by removing the elements $p_{k'+1}, ... p_k$. Since by construction all the elements in the new set are \emph{greater than or equal to} to the elements of the previous set, the average of $p_i$ in the new set will be greater than or equal --- or in other words, the \ppv will be greater than or equal.
\end{proof}

\end{document}